\def\st{{\textit{s.t.}}}
\def\ie{{\textit{i.e.}}}
\def\eg{{\textit{e.g.}}}
\def\Figref#1{Figure~\ref{#1}}
\def\Secref#1{Section~\ref{#1}}
\def\eqref#1{equation~\ref{#1}}
\def\Eqref#1{Equation~\ref{#1}}
\def\1{\bm{1}}
\newcommand{\approach}{\textsc{FeDGen}}
\newcommand{\mix}{\textsc{FedMix}}
\newcommand{\fedaux}{\textsc{FedAUX}}
\def\vtheta{{\bm{\theta}}}
\def\vgT{{\bm{\gT}}}
\def\cz{{\tilde{c}^*}}
\def\Dz{{\tilde{\gD}}}
\def\Mdata{{\textsc{Mnist}}}
\def\Edata{{\textsc{EMnist}}}
\def\Cdata{{\textsc{CelebA}}}
\def\Avg{{\textsc{FedAvg}}}
\def\Prox{{\textsc{FedProx}}}
\def\Ensemble{{\textsc{FedEnsemble}}}
\def\Fusion{{\textsc{FedDFusion}}}
\def\FD{{\textsc{FedDistill}}}
\def\FDFL{{\textsc{FedDistill$^+$}}}
\def\Le{{\hat{\gL}}}
\def\ye{{\hat{y}}}
\def\ze{{\hat{z}}}
\def\pe{{\hat{p}}}
\def\De{{\hat{\gD}}}
\def\dH{{d_{\gH \triangle \gH}}} 
\def\HH{{{\gH \triangle \gH}}} 
\def\rA{{\textnormal{A}}}
\def\rP{{\textnormal{P}}}
\def\rp{{\textnormal{Pr}}}
\def\ermI{{\textnormal{I}}}
\def\vtheta{{\bm{\theta}}}
\def\vw{{\bm{w}}}
\def\mLambda{{\bm{\Lambda}}}
\DeclareMathAlphabet{\mathsfit}{\encodingdefault}{\sfdefault}{m}{sl}
\SetMathAlphabet{\mathsfit}{bold}{\encodingdefault}{\sfdefault}{bx}{n}
\def\gA{{\mathcal{A}}}
\def\gB{{\mathcal{B}}}
\def\gD{{\mathcal{D}}}
\def\gH{{\mathcal{H}}}
\def\gL{{\mathcal{L}}}
\def\gN{{\mathcal{N}}}
\def\gR{{\mathcal{R}}}
\def\gT{{\mathcal{T}}}
\def\gX{{\mathcal{X}}}
\def\gY{{\mathcal{Y}}}
\def\gZ{{\mathcal{Z}}}
\def\sR{{\mathbb{R}}}
\newcommand{\E}{\mathbb{E}}
\newcommand{\R}{\mathbb{R}}
\newcommand{\KL}{D_{\mathrm{KL}}}
\DeclareMathOperator*{\argmax}{arg\,max}
\newtheorem{theorem}{Theorem}
\newtheorem{corollary}{Corollary}
\newtheorem{remark}{Remark}
\newtheorem{lemma}{Lemma}
\newtheorem*{lemma*}{Lemma}
\newtheorem*{theorem*}{Theorem}
\newtheorem*{corollary*}{Corollary}
\newtheorem*{remark*}{Remark}
\newcommand{\judycom}[1]{{\color{blue}{#1}}}
\icmltitlerunning{Data-Free Knowledge Distillation for Heterogeneous Federated Learning}
\begin{document}

\twocolumn[
\icmltitle{Data-Free Knowledge Distillation for Heterogeneous Federated Learning}



\icmlsetsymbol{equal}{*}

\begin{icmlauthorlist}
\icmlauthor{Zhuangdi Zhu}{to}
\icmlauthor{Junyuan Hong}{to}
\icmlauthor{Jiayu Zhou}{to} 
\end{icmlauthorlist}

\icmlaffiliation{to}{Department of Computer Science and Engineering, Michigan State University, Michigan, USA}

\icmlcorrespondingauthor{Zhuangdi Zhu}{zhuzhuan@msu.edu}
\icmlcorrespondingauthor{Jiayu Zhou}{jiayuz@msu.edu}

\icmlkeywords{Federated Learning, Machine Learning, Knowledge Distillation, ICML}

\vskip 0.3in
]



\printAffiliationsAndNotice{}  

\begin{abstract}
%
Federated Learning (FL) is a decentralized machine-learning paradigm in which a global server iteratively aggregates the model parameters of local users without accessing their data. 
User \textit{heterogeneity} has imposed significant challenges to FL, which can incur drifted global models that are slow to converge. 
\textit{Knowledge Distillation} has recently emerged to tackle this issue, by refining the server model using aggregated knowledge from heterogeneous users, 
other than directly aggregating their model parameters.
This approach, however, depends on a  proxy dataset, making it impractical unless such prerequisite is satisfied.
Moreover, the ensemble knowledge is not fully utilized to guide local model learning, which may in turn affect the quality of the aggregated model. 
In this work, we propose a \textit{data-free knowledge distillation} approach to address heterogeneous FL, where the server learns a lightweight generator to ensemble user information in a data-free manner, which is then broadcasted to users, regulating local training using the learned knowledge as an inductive bias. 
Empirical studies powered by theoretical implications show that, our approach facilitates FL with better generalization performance using fewer communication rounds, compared with the state-of-the-art\footnote{Code is available at \href{https://github.com/zhuangdizhu/FedGen}{https://github.com/zhuangdizhu/FedGen}}.
\end{abstract}


\section{Introduction}
Federated Learning (FL) is an effective machine learning approach that enables the decentralization of computing and data resources.
Classical FL, represented by \textsc{FedAvg}~\cite{mcmahan2017communication}, obtains an aggregated model by iteratively averaging the parameters of distributed local user models, therefore omits the need of accessing their data.
Serving as a communication-efficient and privacy-preserving learning scheme, 
FL has shown its potential to facilitate real-world applications, including healthcare~\cite{sheller2020federated}, biometrics~\cite{aggarwal2021fedface}, and natural language processing~\cite{hard2018federated,ammad2019federated}, to name just a few.

Along with its promising prospect, FL faces practical challenges from data \textit{\textbf{heterogeneity}}~\cite{li2020federated},
in that user data from real-world is usually \textit{non-iid} distributed, which inherently induces deflected local optimum~\cite{karimireddy2020scaffold}. 
Moreover, the permutation-invariant property of deep neural networks has further increased the heterogeneity among user models~\cite{yurochkin2019bayesian,wang2020federated}. 
Thus, performing element-wise averaging of local models, as adopted by most existing FL approaches, 
may not induce an ideal global model~\cite{li2019convergence,li2020federated}.

A variety of efforts have been made to tackle user heterogeneity, mainly from two complementary perspectives:
%
one  focuses on stabilizing local training, by regulating the deviation of local models from a global model over the \textit{parameter space}~\cite{li2020federated,dinh2020personalized,karimireddy2020scaffold}.
This approach may not fully leverage the underlying knowledge across user models, whose diversity suggests informative structural differences of their local data and thus deserves more investigation. 
Another aims to improve the efficacy of model aggregation~\cite{yurochkin2019bayesian,chenfedbe}, among which \textit{knowledge distillation} has emerged as an effective solution~\cite{lin2020ensemble,li2019fedmd}.
Provided with an unlabeled dataset as the proxy, knowledge distillation alleviates the model drift issue induced by heterogeneity, by enriching the global model with the ensemble knowledge from local models, 
which is shown to be more effective than simple parameter-averaging.
However, the prerequisite of a \emph{proxy data} can leave such an approach infeasible for many applications, where a carefully engineered dataset may not always be available on the server.
Moreover, by only refining the global model, the inherent heterogeneity among user models is not fully addressed, 
%
which may in turn affect the quality of the knowledge ensemble, especially if they are biased due to limited local data~\cite{khoussainov2005ensembles}, which is a typical case for FL.
 
Observing the challenge in the presence of user heterogeneity and the limitations of prior art, in this work, we propose a \textbf{\textit{data-free}} knowledge distillation approach for FL, dubbed as \approach~ ({\textit{Fe}derated \textit{D}istillation via \textit{Gen}erative Learning}).
Specifically, \approach\ learns a generative model derived solely from the prediction rules of user models,  which, given a target label, can yield feature representations that are consistent with the ensemble of user predictions.
%
This generator is later broadcasted to users, escorting their model training with augmented samples over the latent space, which embodies the distilled knowledge from other peer users.
Given a latent space with a dimension much smaller than the input space, the generator learned by \approach~can be lightweight, introducing minimal overhead to the current FL framework.

The proposed \approach\ enjoys multifold benefits:
i) It extracts the knowledge out of users which was otherwise mitigated after model averaging, without depending on any external data. 
ii) Contrary to certain prior work that only refines the global model, our approach directly regulates local model updating using the extracted knowledge.
We show that such knowledge imposes an inductive bias to local models, leading to better generalization performance under \textit{non-iid} data distributions.
iii) Furthermore, the proposed approach is ready to address more challenging FL scenarios, where sharing \textit{entire} model parameters is impractical due to privacy or communication constraints, since the proposed approach only requires the prediction layer of local models for knowledge extraction.

Extensive empirical studies echoed by theoretical elaborations show that, our proposed approach yields a global model with better generalization performance using fewer communication rounds, compared with the state-of-the-art.

\section{Notations and Preliminaries} \label{sec:preliminary}
Without ambiguity, in this work, we discuss a typical FL  setting for \textit{supervised learning}, i.e., the general problem of multi-class classification. 
Let $\gX \subset \R^p$ be an instance space, $\gZ \subset \sR^d$ be a {\textit{latent}} feature space with $d < p$, and $\gY \subset \sR$ be an output space.
$\gT$ denotes a {\textit{domain}} which consists of a data distribution $\gD$ over $\gX$ and a ground-truth {\textit{labeling}} function $c^*: \gX \to \gY $,  \ie~${\gT} := \langle  {\gD }, c^* \rangle $. 
Note that we will use the term \textit{domain} and \textit{task} equivalently.  
A model parameterized by $\vtheta := [ \vtheta^f; \vtheta^p ]$ consists of two components: 
a feature extractor $f:\gX \to \gZ$ parametrized by ${\vtheta^f}$,
and a predictor $h:\gZ \to \triangle^{\gY}$ parameterized by ${\vtheta^p}$, where $\triangle^{\gY}$ is the simplex over $\gY$.
Given a non-negative, convex loss function $l: \triangle^{\gY} \times \gY \to \R$, the \textbf{\textit{risk}} of a model parameterized by $\vtheta$ on domain $\gT$ is defined as $\gL_{\gT}(\vtheta):= \E_{x \sim \gD} \left[l \left( h(f(x;\vtheta^f); \vtheta^p) ,  c^*(x) \right) \right]$.

\textbf{Federated Learning} aims to learn a global model parameterized by $\vtheta$ that minimizes its risk on each of the user tasks $\gT_k$~\cite{mcmahan2017communication}:
\begin{align}\label{obj:fedavg}
    \min\nolimits_\vtheta~\E_{\gT_k \in \vgT} \left[ \gL_k(\vtheta)\right],
\end{align}
where $\vgT=\{\gT_k\}_{k=1}^K$ is the collection of user tasks.
We consider all tasks sharing the same labeling rules $c^*$ and loss function $l$, \ie, $\gT_k = \langle \gD_k, c^* \rangle$.
In practice, \Eqref{obj:fedavg} is empirically optimized by $\min_\vtheta~ \frac{1}{K} \sum_{k =1}^K  \Le_k(\vtheta)$, where
\begin{small}
$ \Le_k(\vtheta) := \frac{1}{|\De_k|} \sum_{x_i \in \De_k}\left[l(h(f(x_i;\vtheta^f);\vtheta^p),c^*(x_i)) \right]$
\end{small}
is the \textit{empirical} risk over an observable dataset $\De_k$. 
An implied assumption for FL is that the \textbf{\textit{global}} data $\De$ is distributed to each of the local domains, with ~$\De= \cup \{\De_k\}_{k=1}^K$.

\textbf{Knowledge Distillation} (KD) is also referred as a \textit{teacher-student} paradigm, with the goal of learning a lightweight student model using knowledge distilled from one or more powerful teachers~\cite{bucilua2006model,ba2014deep}.  
Typical KD leverages a \textbf{\textit{proxy}} dataset $\De_\rP$ to minimize the discrepancy between the logits outputs from the teacher model  $\vtheta_T$ and  the student model $\vtheta_S$, respectively. 
A representative choice is to use Kullback-Leibler divergence to measure such discrepancy~\cite{hinton2015distilling}:
\vspace{-0.05in}
\begin{small}
\begin{align*}
    \hspace{-0.1in}
    \min\nolimits_{\vtheta_S}~\E_{x \sim \De_\rP} \left[ \KL \left[ \sigma ( g (f(x;\vtheta^f_T); \vtheta^p_T ) \Vert \sigma ( g (f(x;\vtheta^f_S); \vtheta^p_S ) \right] \right],
\end{align*}
\end{small}
where $g(\cdot)$ is the logits output of an predictor $h$, and $\sigma(\cdot)$ is the non-linear activation applied to such logits, \ie\ $h(z; \vtheta^p) = \sigma( g(z; \vtheta^p) )$.

The idea of KD has been extended to FL to tackle user heterogeneity ~\cite{lin2020ensemble,chenfedbe}, by treating each user model $\vtheta_k$ as the \textit{teacher}, whose information is aggregated into the \textit{student} (global) model $\vtheta$ to improve its generalization performance:
\begin{small}
\begin{align*}
    \hspace{-0.15in}
     \min_{\vtheta}\underset{x \sim \De_\rP}{\E}  \left[ \KL [\sigma (\frac{1}{K} \sum_{k=1}^K g(f(x;\vtheta^f_k);\vtheta_k^p)  ) \Vert \sigma ( g(f(x;\vtheta^f);\vtheta^p) ] \right ].
\end{align*}
\end{small}
%
One primary limitation of the above approach resides in its dependence on a proxy dataset $\De_\rP$, the choice of which needs delicate consideration and plays a key role in the distillation performance~\cite{lin2020ensemble}.
Next, we show how we make KD feasible for FL in a \textit{data-free} manner.
 

\begin{figure}[b!]
    \vspace{-0.25in}
    \begin{center}
        \hspace{-0.1in}
        \begin{subfigure}[b]{0.5\textwidth}
            \centerline{\includegraphics[width=\columnwidth]{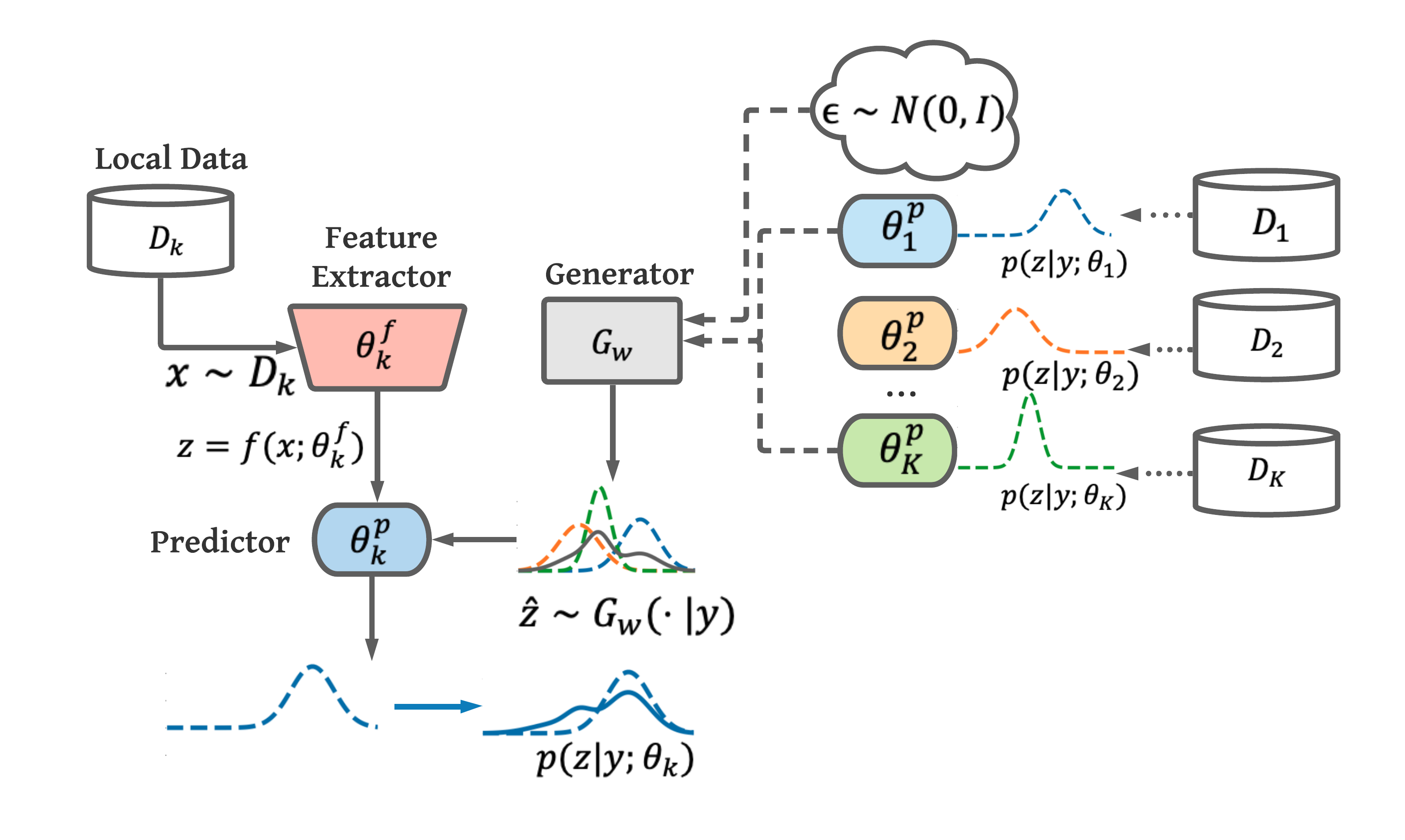}}
        \end{subfigure} 
    \vspace{-0.45in}
    \caption{Overview of \approach: a generator $G_\vw(\cdot|y)$ is learned by the server to aggregate information from different local clients without observing their data. The generator is then sent to local users, whose knowledge is distilled to user models to adjust their interpretations of a good feature distribution.}  \label{fig:fedGen-overview}
    \end{center}
\end{figure}

\begin{small}
\begin{algorithm}[tb]
   \caption{ \approach}
   \label{alg:fedGen}
\begin{algorithmic}[1]
   \STATE {\bfseries Require:} Tasks \{$\gT_k\}_{k=1}^K$;\\
            \hspace{0.1in} Global parameters $\vtheta$, local parameters $\{\vtheta_k\}_{k=1}^K$;\\
            \hspace{0.1in} Generator parameter $\vw$; $\pe(y)$ uniformly initialized;\\ 
            \hspace{0.1in} Learning rate $\alpha$, $\beta$, local steps $T$, batch size $B$, local label counter $c_k$. 
   \REPEAT 
    \STATE{Server selects active users $\gA$ uniformly at random, then broadcast $\vw, \vtheta$, $\pe(y)$ to  $\gA$. }
    
   \FOR{all user $k \in \gA$ in parallel}
       \STATE{$\vtheta_k \leftarrow \vtheta$,}
       \FOR{$t=1$, $\dots, T$}  
            \STATE{\begin{small} $\{ x_i, y_i \}_{i=1}^B \sim \gT_k$, $\{ \ze_i \sim G_\vw(\cdot|\ye_i),\ye_i \sim \pe(y) \}_{i=1}^B $.\end{small}}
           \STATE{Update label counter $c_k$.}%
           \STATE{\hspace{-0.2in} $\vtheta_k \leftarrow \vtheta_k - \beta \nabla_{\vtheta_k} J(\vtheta_k).$ \hspace{0.1in} $\triangleright$ Optimize \Eqref{obj:user} }
       \ENDFOR
        
       \STATE{User sends $\vtheta_k$, $c_k$ back to server.}
    \ENDFOR
     \STATE{Server updates $\vtheta \leftarrow \frac{1}{|\gA|}\sum_{k \in \gA} \vtheta_k$, and $\pe(y)$ based on $\{c_k\}_{k\in\gA}$.} 
     \STATE{$\vw \leftarrow \vw - \alpha \nabla_{\vw} J(\vw).$ \hspace{0.3in}$\triangleright$ Optimize \Eqref{obj:generator}}  
   \UNTIL{training~stop}
\end{algorithmic}
\end{algorithm}
\end{small}

\vspace{-0.1in}
\section{\approach: Data-Free Federated Distillation via Generative Learning}
\vspace{-0.1in}
In this section, we elaborate our proposed approach with a summary shown in Algorithm~\ref{alg:fedGen}.
An overview of its learning procedure in illustrated in Figure~\ref{fig:fedGen-overview}.

\subsection{Knowledge Extraction}
\vspace{-0.1in}
Our core idea is to extract knowledge about the global view of data distribution, which is otherwise non-observable by conventional FL, and distill such knowledge to local models to guide their learning.
We first consider learning a conditional distribution $Q^*:\gY \to \gX $ to characterize such knowledge, which is consistent with the ground-truth data distributions:  
\begin{align}\label{obj:knowledge-raw}
    {Q}^* =  \argmax_{Q_: \gY \to \gX} ~ \E_{y \sim p(y)} \E_{x \sim Q(x|y)}[\log p(y|x)],  
\end{align}
where $p(y)$ and $p(y|x)$ are the ground-truth \textit{prior} and \textit{posterior} distributions of the target labels, respectively, both of which are unknown.
To make \Eqref{obj:knowledge-raw} optimizable w.r.t $Q$, we replace $p(y)$ and $p(x|y)$ with their empirical approximations.
First, we estimate $p(y)$ as: 
\begin{align*}
    \pe(y) \propto \sum\nolimits_{k} \E_{x\sim \De_k}[\ermI(c^*(x)=y)], 
\end{align*}
where $\ermI(\cdot)$ is an indicator function, and $\De_k$ is the observable data for domain $\gT_k$. In practice, $\pe(y)$ can be obtained by requiring the training label counts from users during the model uploading phase.
Next, we approximate $p(y|x)$ using the ensemble wisdom from user models:
\begin{align*}
   \log \pe(y|x) \propto \frac{1}{K}\sum\nolimits_{k=1}^K \log p(y|x;\vtheta_k).
\end{align*}

Equipped with the above approximations, directly optimizing \Eqref{obj:knowledge-raw} over the input space $\gX$ can still be prohibitive: it brings computation overloads when $\gX$ is of high dimension, and may also leak information about the user data profile.
A more approachable idea is hence to recover an \textit{induced} distribution $G^*: \gY \to \gZ$ over a latent space, which is more compact than the raw data space and can alleviate certain privacy-related concerns:
\begin{small}
\begin{align} \label{obj:knowledge-feature}
    {G}^* = \argmax_{G: \gY \to \gZ} ~\E_{y \sim \pe(y)} \E_{z \sim G(z|y)}\left[\sum_{k=1}^K \log p(y|z;\vtheta_k^p) \right]. 
\end{align}
\end{small}

\vspace{-0.1in}
Following the above reasoning, we aim to perform knowledge extraction by learning a conditional generator $G$, parameterized by $\vw$ to optimize the following objective: 

\vspace{-0.2in}
\begin{small}
\begin{align}\label{obj:generator}
\min_{\vw} J(\vw):= \E_{y \sim \pe(y)} \E_{z \sim G_\vw(z|y)}\left[ l(\sigma (\frac{1}{K} \sum_{k=1}^{K}  g(z;\vtheta^p_k ) ), y) \right], 
\end{align}
\end{small}
\vspace{-0.2in}

where $g$ and $\sigma$ are the logit-output and the activation function as defined in Section~\ref{sec:preliminary}. 
Given an arbitrary sample $y$, optimizing \Eqref{obj:generator} only requires access to the predictor modules $\vtheta_k^p$ of user models.
Specifically, to enable diversified outputs from $G(\cdot|y)$, we introduce a noise vector $\epsilon \sim \gN(0, I)$ to the generator, which is resemblant to the re-parameterization technique proposed by prior art~\cite{kingma2013auto}, so that $z \sim G_{\vw}(\cdot|y) \equiv  G_{\vw}(y, \epsilon | \epsilon \sim \gN(0, I))$. 
We discuss more implementation details in the supplementary.

Given arbitrary target labels $y$, the proposed generator can yield feature representations $z \sim G_{\vw}(\cdot|y)$ that induce ideal predictions from the ensemble of user models.
In other words, the generator approximates an {induced} image of a \textit{consensual} distribution, which is consistent with the user data from a global view.

\subsection{Knowledge Distillation}
The learned generator $G_\vw$ is then broadcasted to local users, so that each user model can sample from $G_w$ to obtain augmented representations $z \sim G_w(\cdot|y)$ over the feature space.
As a result, the objective of a local model $\vtheta_k$ is altered to maximize the  probability that it yields ideal predictions for the augmented samples:
\vspace{-0.05in}
\begin{small}
\begin{align} \label{obj:user} 
    \hspace{-0.2in}
    \min_{\vtheta_k}~J(\vtheta_k) :=  \Le_k(\vtheta_k) + \hat{\E}_{y\sim \pe(y), z \sim G_\vw(z|y) }\left[ l(h(z;\vtheta_k^p); y)\right],
\end{align}
\end{small}
where 
\begin{small} 
    $\Le_k(\vtheta_k):=\frac{1}{|\De_k|} \sum_{x_i \in \De_k}\left[l(h(f(x_i;\vtheta_k^f);\vtheta_k^p),c^*(x_i)) \right]$
\end{small}
is the empirical risk given local data $\De_k$.
We show later that the augmented samples can introduce inductive bias to local users, reinforcing their model learning with a better generalization performance.

Up to this end, our proposed approach has realized data-free knowledge distillation, by interactively learning a lightweight generator that primarily depends on the prediction rule of local models, and leveraging the generator to convey consensual knowledge to local users.
We justify in \Secref{sec:exp-overview} that our approach can effectively handle user heterogeneity in FL, which also enjoys theoretical advantages as analyzed in Section~\ref{sec:theoretical-analysis}.

\subsection{Extensions for Flexible Parameter Sharing} 
In addition to tackling data heterogeneity, \approach~can also handle a challenging FL scenario where sharing the entire model is against communication or privacy prerequisites.
On one hand, advanced networks with deep feature extraction layers typically contain millions of parameters~\cite{he2016deep,brown2020language}, which bring significant burdens to communication.
On the other hand, it has been shown feasible to backdoor regular FL approaches~\cite{wang2020attack}.
For practical FL applications such as healthcare or finance, sharing entire model parameters may be associated with considerable privacy risks, as discussed in prior work~\cite{he2020group}.

\approach\ is ready to alleviate those problems, by sharing only the prediction layer $\vtheta_k^p$ of local models, which is the primary information needed to optimizing \Eqref{obj:generator}, while keeping the feature extractor $\vtheta_k^f$ localized.
This partial sharing paradigm is more efficient,
and at the same time less vulnerable to data leakage, as compared with a strategy that shares the entire model.
Empirical study in \Secref{sec:exp-partial-sharing} shows that, \approach~significantly benefits local users, even without sharing feature extraction modules.
We defer the algorithmic summary of this variant approach to the supplementary.

    
    \begin{figure*}[hbt!]  
        \begin{center}
            \hspace{0.1in} 
            \begin{subfigure}[b]{0.18\textwidth}
                \centerline{\includegraphics[width=\columnwidth]{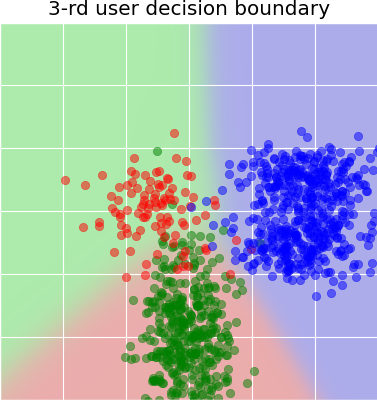}}
            \subcaption{User decision boundaries \textbf{\textit{before}} KD.} \label{fig:user-decision-before}
            \end{subfigure} 
            \hspace{0.15in}
            \begin{subfigure}[b]{0.2\textwidth}
                \centerline{\includegraphics[width=\columnwidth]{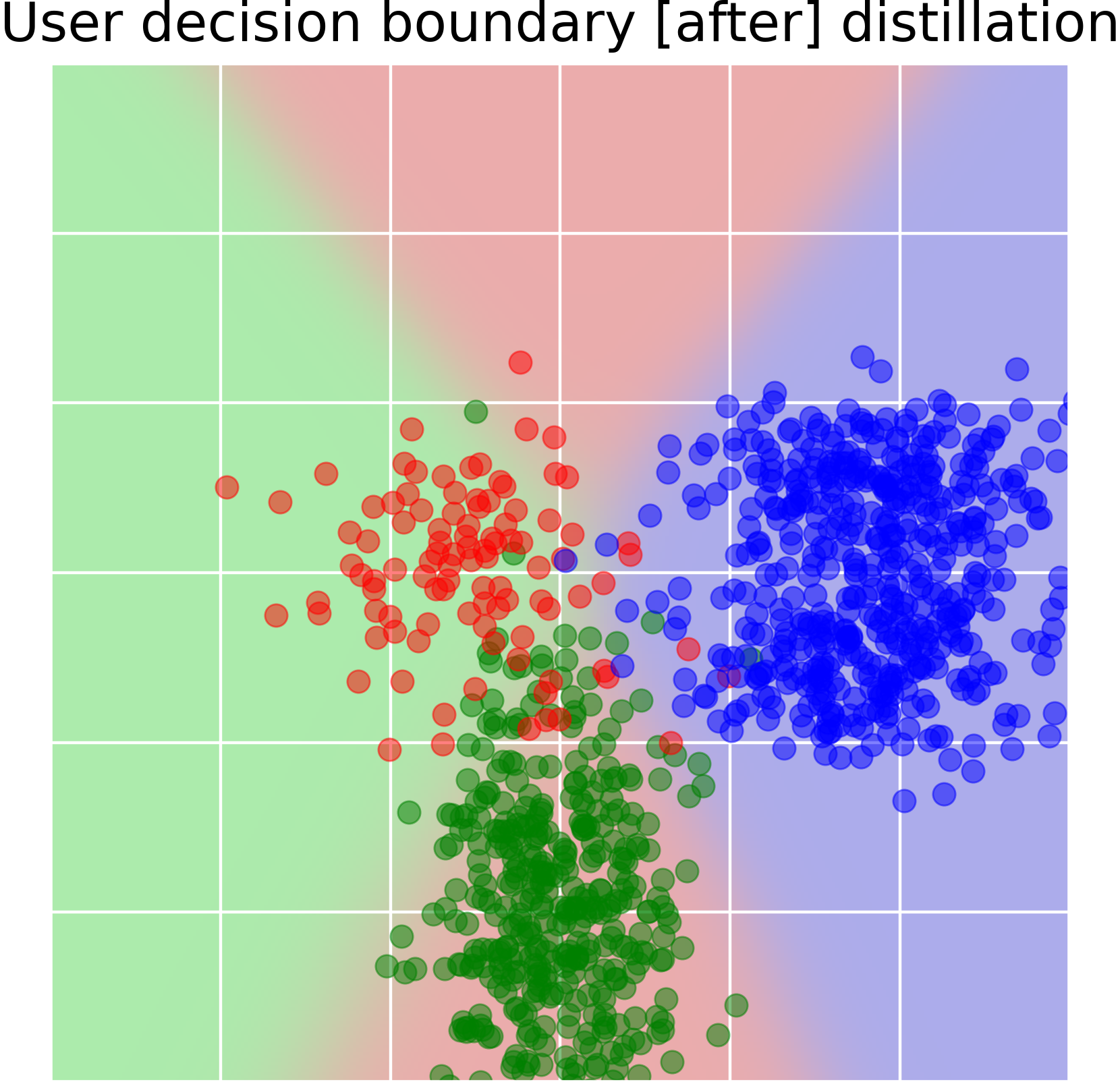}}
            \subcaption{User decision boundaries \textbf{\textit{after}} KD.} \label{fig:user-decision-after}
            \end{subfigure} 
            \hspace{0.15in}
            \begin{subfigure}[b]{0.18\textwidth}
                \centerline{\includegraphics[width=\columnwidth]{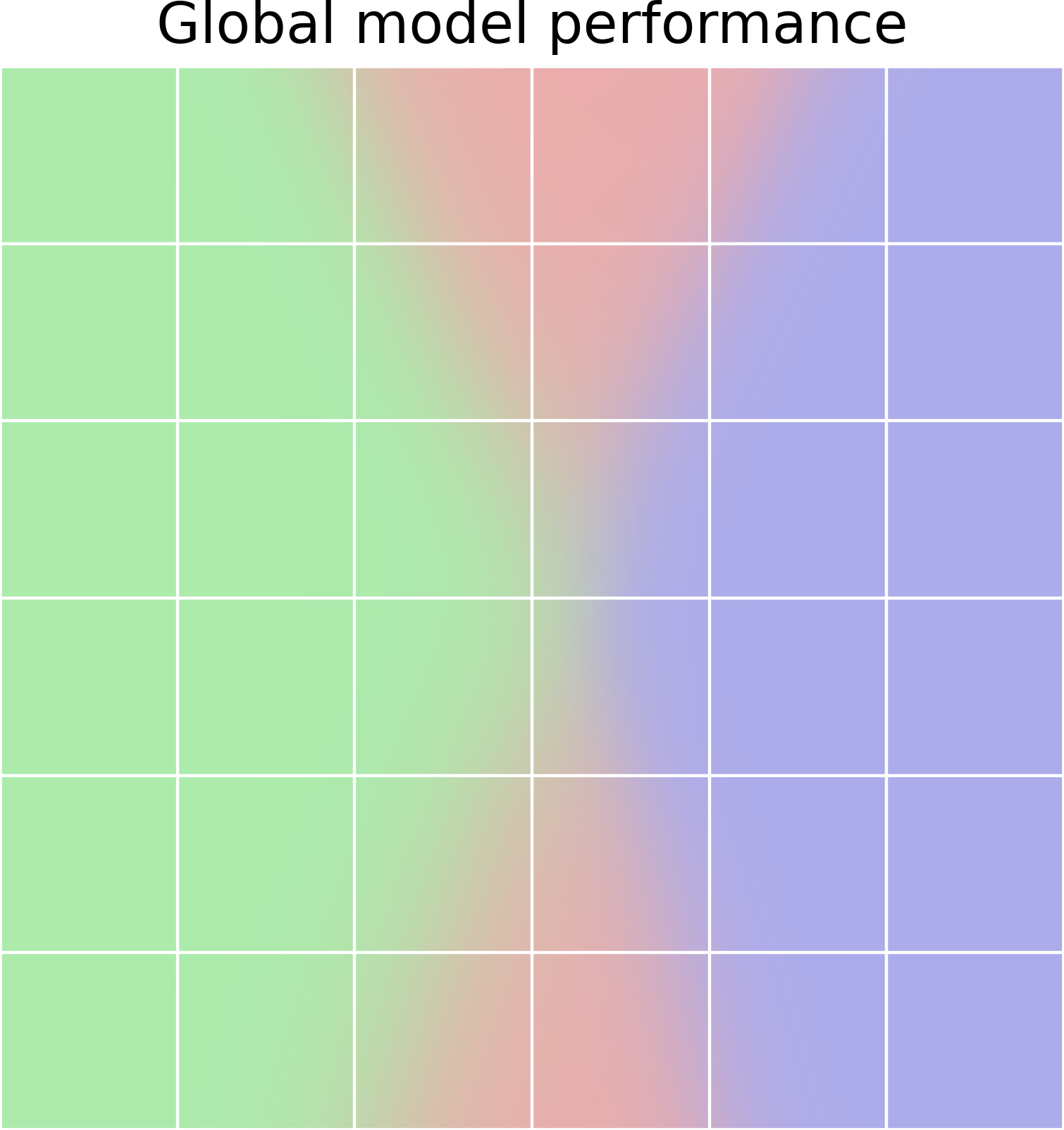}}
            \subcaption{Global model decision boundaries.} \label{fig:global-decision}
            \end{subfigure} 
            \hspace{0.15in}
            \begin{subfigure}[b]{0.19\textwidth}
                \centerline{\includegraphics[width=\columnwidth]{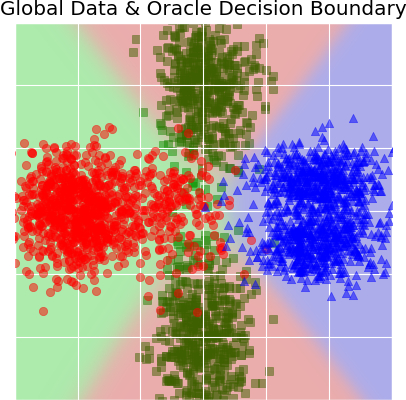}}
            \subcaption{\textit{{Oracle}} decision boundaries learned over all data.} \label{fig:oracle-data-boundary}
            \end{subfigure}  
        \end{center}
        \vspace{-0.2in}
        \caption{After KD, accuracy has improved from $81.2\%$ to $98.4\%$ for one user ( Fig \ref{fig:user-decision-before} - Fig \ref{fig:user-decision-after}), while a global model obtained by parameter-averaging (\textit{without} KD) has $93.2\%$ accuracy (Fig \ref{fig:global-decision}), compared with an oracle model with $98.6\%$ accuracy (Fig \ref{fig:oracle-data-boundary}).}
        \vspace{-0.1in}
    \end{figure*}
    
    \begin{figure*}[hbt!]  
        \begin{center}
            \hspace{0.1in}
            \begin{subfigure}[b]{0.18\textwidth}
                \centerline{\includegraphics[width=\columnwidth]{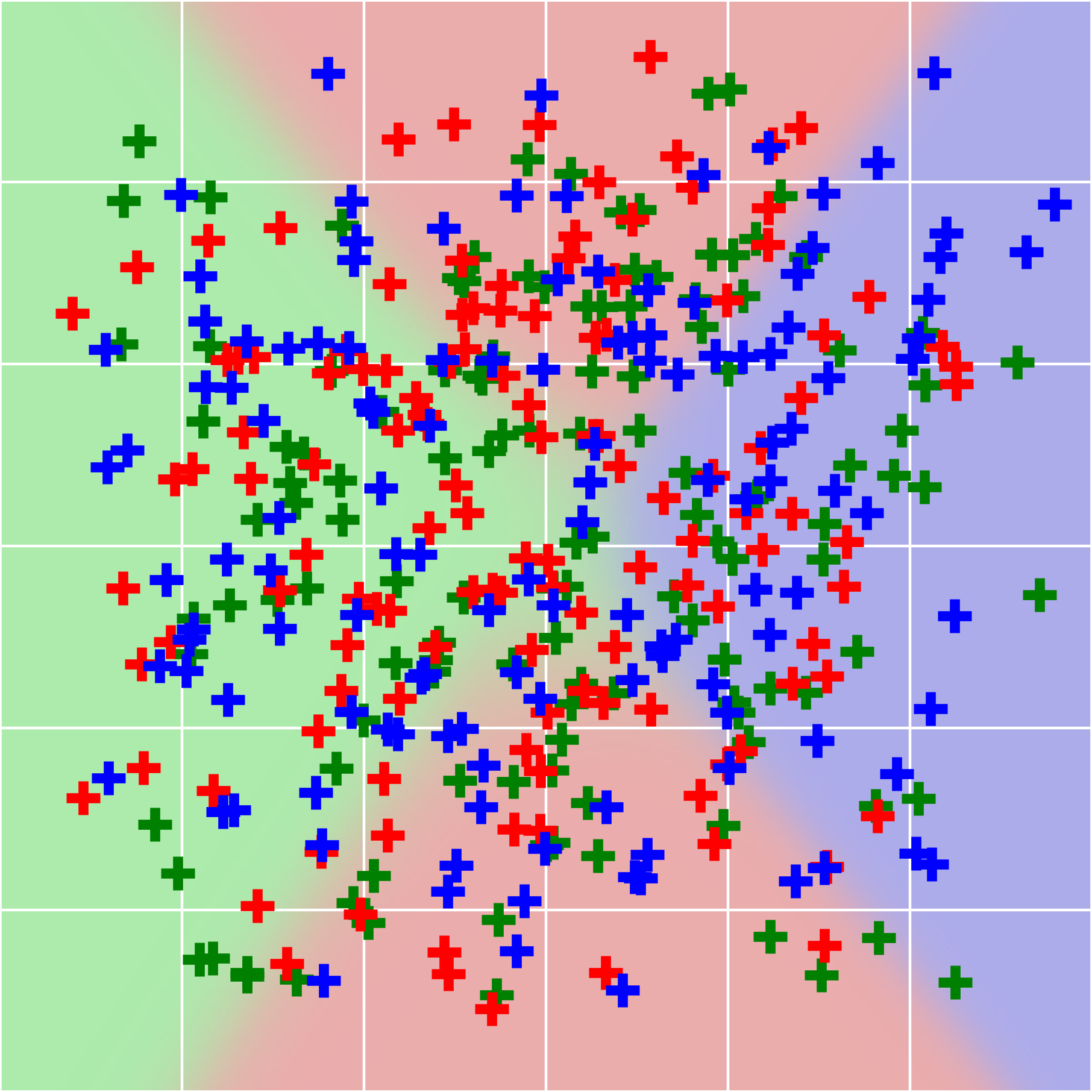}}
            \subcaption{Randomly initialized $r(x|y)$.}
            \end{subfigure} 
            \hspace{0.2in}
            \begin{subfigure}[b]{0.18\textwidth}
                \centerline{\includegraphics[width=\columnwidth]{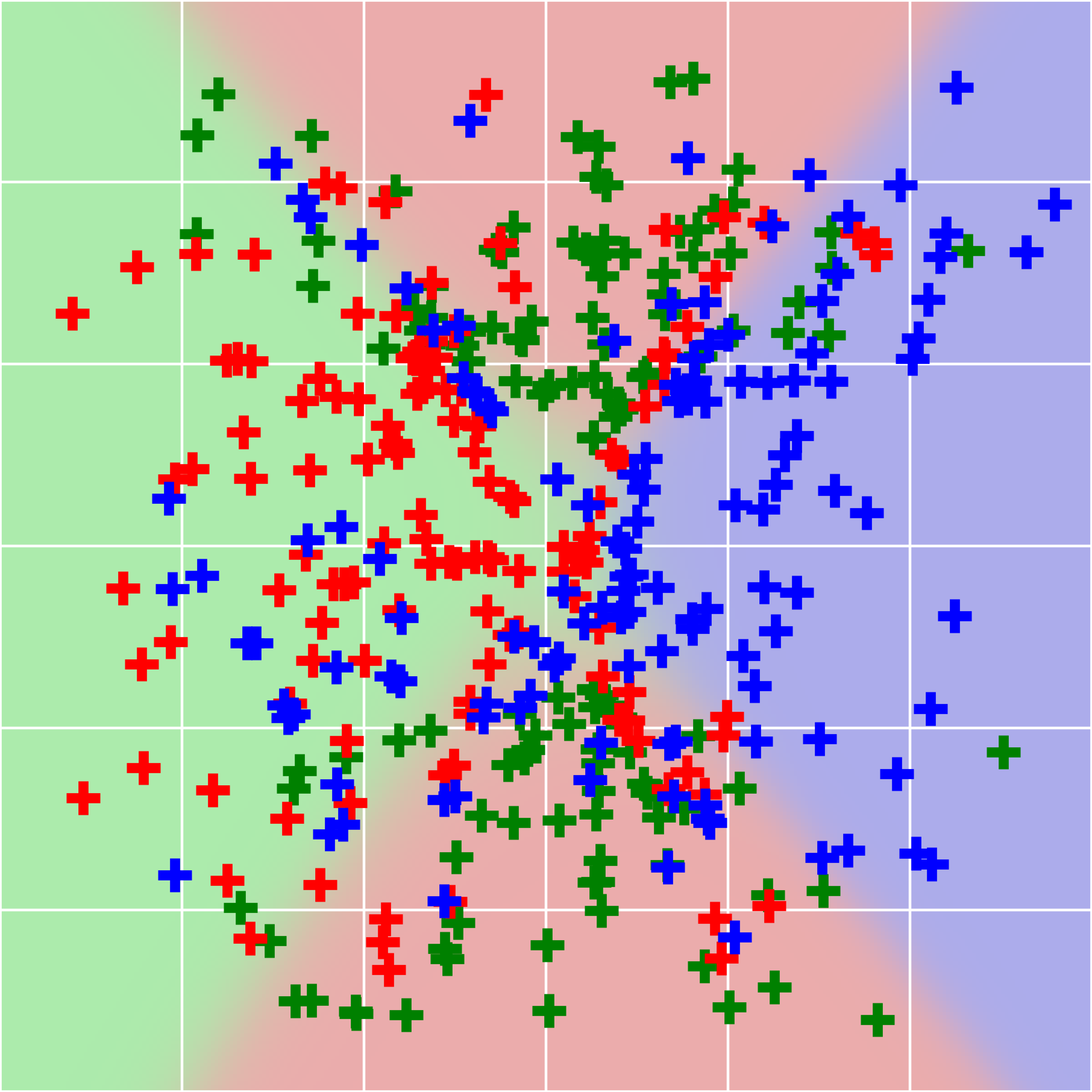}}
            \subcaption{$r(x|y)$  learned after $50$ training steps.}
            \end{subfigure} 
            \hspace{0.2in}
            \begin{subfigure}[b]{0.18\textwidth}
                \centerline{\includegraphics[width=\columnwidth]{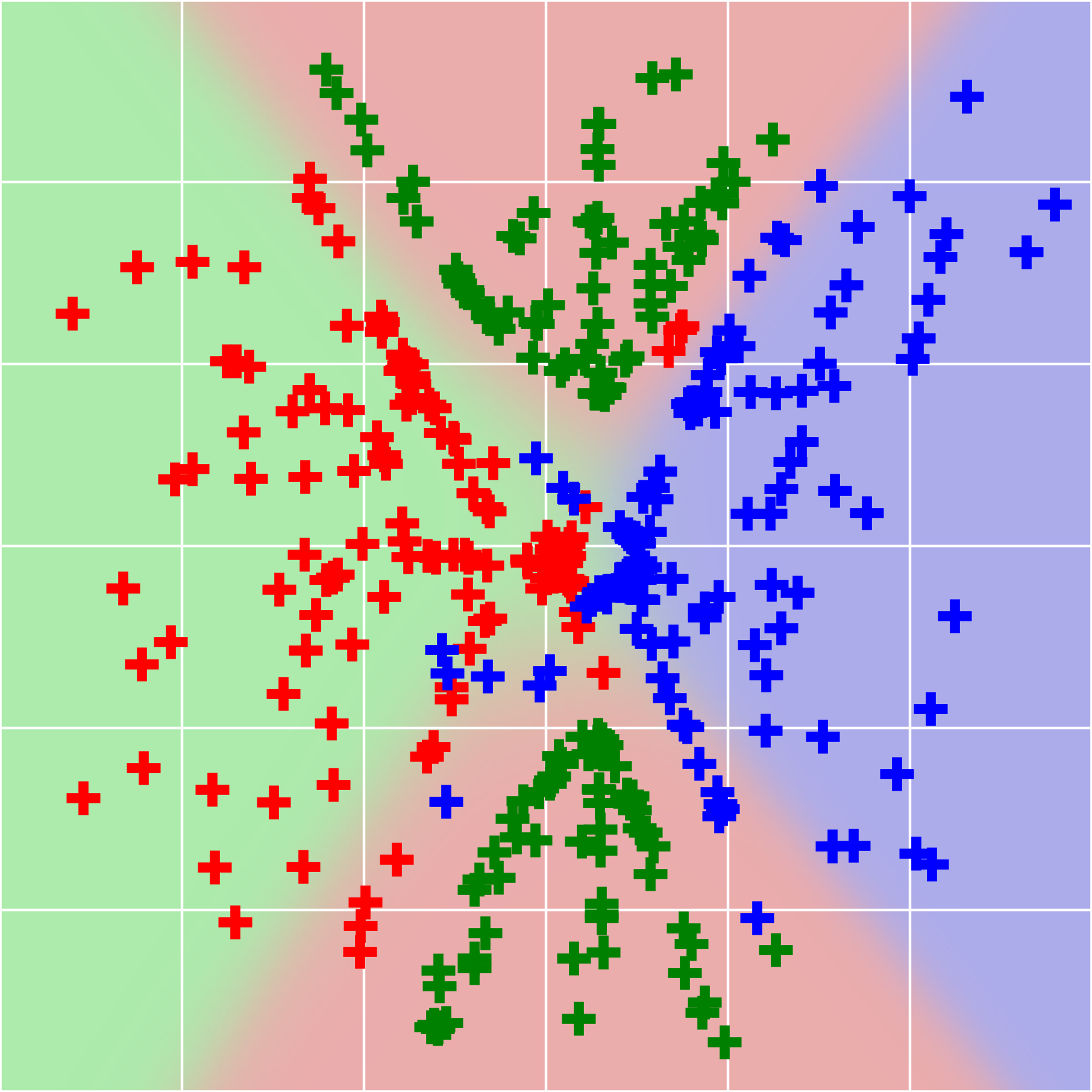}}
            \subcaption{$r(x|y)$  learned after $150$ training steps.}
            \end{subfigure} 
            \hspace{0.2in}
            \begin{subfigure}[b]{0.18\textwidth}
                \centerline{\includegraphics[width=\columnwidth]{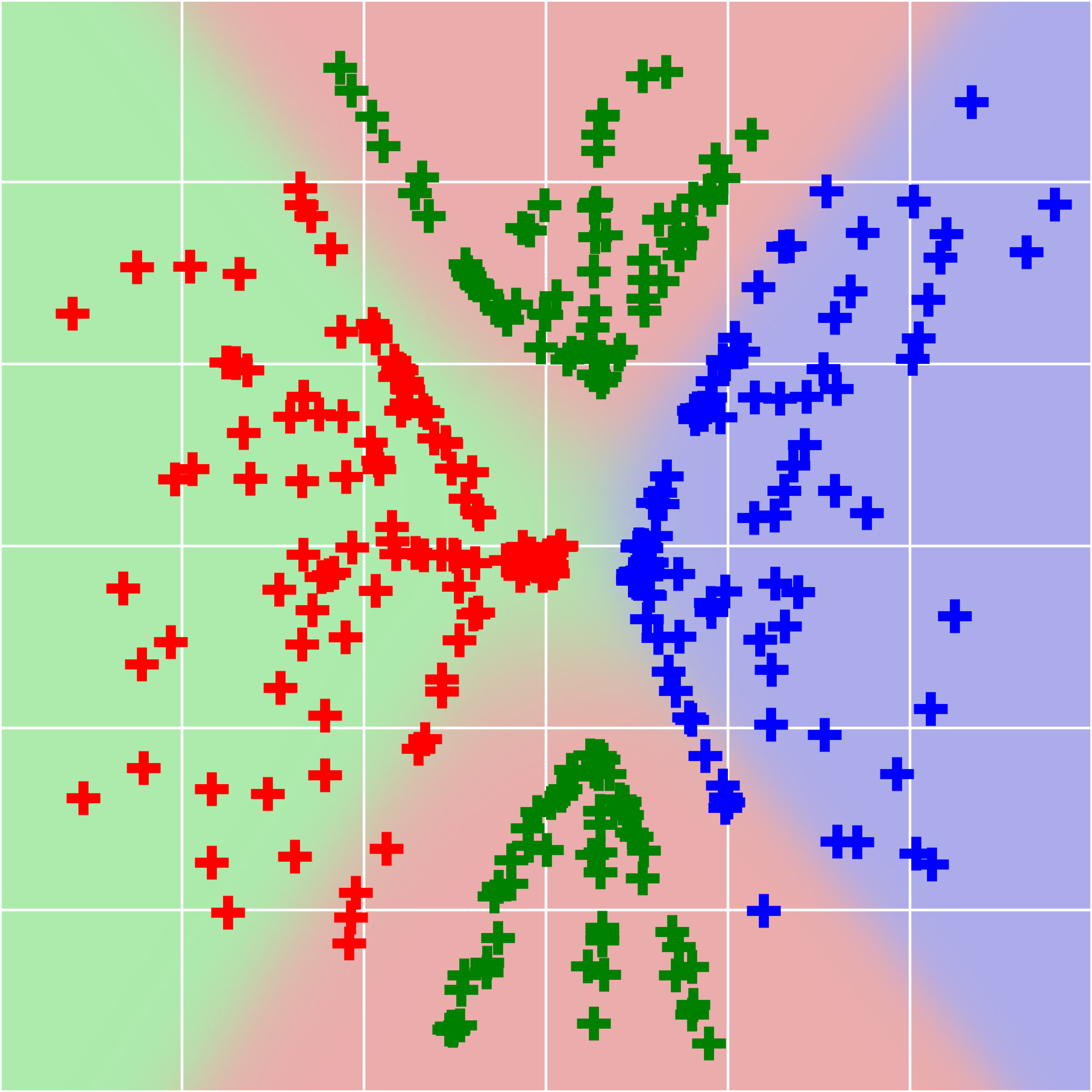}}
            \subcaption{$r(x|y)$  learned after $250$ training steps.}
            \end{subfigure} 
        \end{center}
        \vspace{-0.2in}
        \caption{Samples from the generator gradually approaches to ground-truth distribution, where each user model (teacher) sees limited, disjoint local data. Background color indicates oracle decision boundaries learned over the global data.}\label{fig:gen-data-distribution}
        \vspace{-0.2in}
    \end{figure*}

\section{\approach~Analysis}\label{sec:theoretical-analysis}
\vspace{-0.1in}
In this section, we provide multiple perspectives to understand our proposed approach.
We first visualize \textit{what} knowledge is learned and distilled by \approach, then analyze \textit{why} the distilled knowledge is favorable, from the viewpoint of \textit{distribution matching} and \textit{domain adaptation}, respectively.
We primarily focus on interpreting the rationale behind \approach\ and leave detailed discussion and derivations to the supplementary.

\subsection{Knowledge Distillation for Inductive Bias} 
\vspace{-0.1in}
We illustrate the KD process in \approach~on a FL prototype, which contains three users, each assigned with a disjoint dataset $\De_k,  k \in \{1,2,3\}$. 
%
When trained using only the local data, a user model is prone to learn biased decision boundaries (See Figure~\ref{fig:user-decision-before}). 

Next, a generator $G_w(\cdot|y)$ is learned based on the prediction rule of user models. 
For clear visualizations, we learn $G_w(\cdot|y)$ on the raw feature space $\gY \to \gX \subset \sR^2$ instead of a latent space.
As shown in Figure~\ref{fig:gen-data-distribution}, $r(x|y)$, which denotes the distribution derived from $G_w(x|y)$, gradually coincides with the ground-truth $p(x|y)$ (Figure~\ref{fig:oracle-data-boundary}), even when the individual local models are biased.
In other words, $G_w(x|y)$ can fuse the aggregated information from user models to approximate a global data distribution.

We then let users sample from $G_w(x|y)$, which serves as an inductive bias for users with limited data.
As a result, each user can observe beyond its own training data and adjust their decision boundaries to approach to the ensemble wisdom (Figure~\ref{fig:user-decision-after}).
%


\vspace{-0.1in}
\subsection{Knowledge Distillation for Distribution Matching}
\vspace{-0.1in}
A notable difference between \approach~and prior work is that the knowledge is  distilled to user models instead of the global model. 
As a result, the distilled knowledge, which conveys inductive bias to users, can directly regulate their learning by performing \textit{distribution matching} over the latent space $\gZ$:
\begin{remark} \label{remark:kl-divergence}
    Let $p(y)$ be the prior distribution of labels, and $r(z|y): \gY \to \gZ$ be the conditional distribution derived from generator $G_\vw$.
    Then regulating a user model $\vtheta_k$ using samples from  $r(z|y)$ can minimize the conditional KL-divergence between two distributions, derived from the generator and the user, respectively: 
    \begin{small}
    \begin{align} \label{obj:distribution-matching}
        & \max_{\vtheta_k}~\E_{y \sim p(y),z \sim r(z|y)} \left[ \log p(y|z;{\vtheta_k}) \right]  \nonumber \\ 
        \equiv & \min_{\vtheta_k}~{\KL}[ r(z|y) \Vert p(z|y;\vtheta_k)],
    \end{align} 
    \end{small}
\end{remark}  
\vspace{-0.1in}
where we define $p(z|y;\vtheta_k)$ as the probability that the input feature to the predictor $\vtheta_k$ is $z$ given that it yields a label $y$.
In practice, \Eqref{obj:distribution-matching} is optimized by using empirical samples from the generator: 
$ \{ (z, y ) \vert y \sim \pe(y),z \sim G_w(z|y) \},$
which is consistent with the second term of the local model objective (\Eqref{obj:user}), in that $\forall~y \in \gY$:
\begin{small}
\begin{align*}
    \max_{\vtheta_k}~\E_{z \sim r(z|y)} \left[ \log p(y|z;{\vtheta_k}) \right]  \approx \min_{\vtheta_k}~\E_{z \sim G_\vw(z|y)} \left[  l(h(z;\vtheta_k^p); y) \right].  
\end{align*}
\end{small}
Distinguished from prior work that applies weight regularization to local models~\cite{li2020federated,dinh2020personalized}, 
\approach~ can serve as an alternative and compatible solution to address user heterogeneity,
%
which inherently bridges the gap among user models w.r.t their interpretations of an ideal feature distribution.
%
%


\subsection{Knowledge Distillation for Improved Generalization}
\vspace{-0.1in}
One can also draw a theoretical connection from the knowledge learned by \approach~to an improved generalization bound.
To see this, we first present a performance bound for the aggregated model in FL, which is built upon prior arts from \textit{domain adaptation}~\cite{ben2007analysis,ben2010theory}: 
\begin{theorem} \label{theorem:fl-bounds} \textnormal{(\textbf{Generalization Bounds for FL})}
    \begin{small}
    Consider an FL system with $K$ users.
    Let $\gT_k=\langle \gD_k, c^* \rangle$ and $\gT=\langle \gD, c^* \rangle$ be the $k$-th local domain and the global domain, respectively.
    Let $\gR: \gX \to \gZ$ be a feature extraction function that is simultaneously shared among users.
    Denote $h_k$ the hypothesis learned on domain $\gT_k$, and $h=\frac{1}{K}\sum_{k=1}^K h_k$ the global ensemble of user hypotheses.
    Then with probability at least $1 - \delta$:
    \hspace{-0.2in}
    \begin{align}
    \gL_{\gT}(h) &\equiv \gL_{\gT}\left(\frac{1}{K}\sum_k h_k \right)  \nonumber \\
    & \leq \frac{1}{K} \sum_k \Le_{\gT_k}(h_k) %
    + \frac{1}{K}\sum_k  \left( \dH (\Dz_{k}, \Dz) + \lambda_k  \right) \nonumber\\ 
    & \ \ + \sqrt{\frac{4}{m}\left( d \log\frac{2 e m }{d} + \log \frac{4K}{\delta}\right) },\nonumber %
    \end{align}
    where $\Le_{\gT_k}(h_k)$ is the empirical risk on $\gT_k$, $\lambda_k := \min_h(\gL_{\gT_k}(h) + \gL_{\gT}(h))$ denotes an oracle performance. 
    $\dH (\Dz_{k}, \Dz)$ denotes the divergence measured over a symmetric-difference hypothesis space.
    $\Dz_k$ and $\Dz$ is the \textbf{induced} image of $\gD_k$ and $\gD$ over $\gR$, respectively, 
    \st~
    $\E_{z \sim \Dz_k}[\gB(z)]=\E_{x\sim \gD_k}[\gB(\gR(x))]$ given a probability event $\gB$, and so for $\Dz$.
    \end{small}

\end{theorem}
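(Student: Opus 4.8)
The plan is to reduce this multi-user bound to $K$ essentially independent applications of the classical two-domain adaptation bound of Ben-David et al., glued together by convexity of the loss and a union bound over users. First, I would exploit the convexity of $l$ to dispose of the ensemble hypothesis $h = \frac{1}{K}\sum_k h_k$. Since $l(\cdot,y)$ is convex in its first argument, Jensen's inequality applied pointwise to $l\left(\frac{1}{K}\sum_k h_k(\gR(x)), c^*(x)\right)$ and then integrated over $x \sim \gD$ gives $\gL_{\gT}(h) \leq \frac{1}{K}\sum_k \gL_{\gT}(h_k)$. This reduces the task to bounding each individual \emph{global} risk $\gL_{\gT}(h_k)$, i.e. the risk on $\gT$ of a hypothesis trained on the local domain $\gT_k$ — precisely a domain-adaptation problem with source $\gT_k$ and target $\gT$.

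Next, for each fixed $k$ I would invoke the standard $\gH\triangle\gH$-divergence bound. The subtlety specific to this FL setting is that the feature extractor $\gR:\gX\to\gZ$ is shared across users, so the relevant hypothesis class consists of predictors acting on $\gZ$, and the divergence must be measured between the \emph{pushforward} distributions $\Dz_k$ and $\Dz$ — the induced images of $\gD_k$ and $\gD$ through $\gR$ — as the defining identity $\E_{z\sim\Dz_k}[\gB(z)] = \E_{x\sim\gD_k}[\gB(\gR(x))]$ makes precise. Under this reformulation the classical result yields $\gL_{\gT}(h_k) \leq \gL_{\gT_k}(h_k) + \dH(\Dz_k,\Dz) + \lambda_k$, where $\lambda_k = \min_h(\gL_{\gT_k}(h) + \gL_{\gT}(h))$ is the combined error of the ideal joint hypothesis.

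Then I would replace the true local risk $\gL_{\gT_k}(h_k)$ by its empirical counterpart $\Le_{\gT_k}(h_k)$ through a VC-dimension uniform-convergence argument, which contributes the confidence radius $\sqrt{\frac{4}{m}(d\log\frac{2em}{d} + \log\frac{4K}{\delta})}$ for a class of VC-dimension $d$ with $m$ samples. To make the guarantee hold simultaneously for all users, I would apply a union bound allocating failure probability $\delta/K$ to each of the $K$ users — this is the origin of the factor $4K$ inside the logarithm — and then average the $K$ resulting inequalities, whose confidence radii are identical and hence collapse to the single stated term.

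I expect the main obstacle to be the probabilistic bookkeeping in this last step: tracking the VC constants so that the per-user confidence levels aggregate correctly to overall failure probability $\delta$, and in particular ensuring the uniform-convergence estimate is applied \emph{uniformly over the whole class} $\gH$ rather than only at the data-dependent trained hypothesis $h_k$ (otherwise the bound would not be valid for the learned model). The Jensen reduction and the per-user domain-adaptation step are essentially mechanical once the induced-image pushforward through $\gR$ is set up; the care lies in assembling the uniform-convergence and union-bound pieces into the exact radius shown.
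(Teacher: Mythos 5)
Your proposal is correct and follows essentially the same route as the paper's own proof: Jensen's inequality via convexity of the loss to reduce $\gL_{\gT}(h)$ to $\frac{1}{K}\sum_k \gL_{\gT}(h_k)$, then a per-user application of the Ben-David et al.\ domain-adaptation bound with source $\gT_k$ and target $\gT$ over the induced feature distributions, and finally a union bound allocating $\delta/K$ to each user, which produces the $\log\frac{4K}{\delta}$ term. The only cosmetic difference is that you separate the uniform-convergence step from the adaptation bound, whereas the paper invokes the lemma in its packaged form (which already holds uniformly over $\gH$, addressing the concern you raise at the end).
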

Specifically, $\gL_\gT(h)$ is usually considered as an ideal upper-bound for the global model in FL~\cite{peng2019federated,lin2020ensemble}.
Two key implications can be derived from Theorem \ref{theorem:fl-bounds}: 
i) Large user \textit{\textbf{heterogeneity}} leads to high distribution divergence ($\dH (\Dz_{k}, \Dz)$), which undermines the quality of the global model;  
ii) More empirical samples ($m$) are favorable to the generalization performance, which softens the numerical constraints.

In other words, the generalization performance can be improved by enriching local users with augmented data that aligns with the global distribution:
\begin{small}
\begin{corollary}\label{corollary-augmentation} 
    Let $\gT$, $\gT_k$, $\gR$ defined as in Theorem~\ref{theorem:fl-bounds}. 
    $\gD_\rA$ denotes an \textbf{augmented}  distribution, and $\gD_k'=\frac{1}{2}(\gD_k + \gD_\rA)$ is a \textbf{mixture} of distributions.
    Accordingly, $\Dz_\rA$, $\Dz_k'$ denotes the \textbf{induced} image of $\gD_\rA$,  $\gD_k'$  over $\gR$, respectively.
    Let $\De_k'= \De_k \cup \De_\rA$ be an empirical dataset of $\gD_k'$, with $|\De_k|=m$, $|\De_k'| = m' > m$.  
    If $\dH (\Dz_A, \Dz) $ is bounded, s.t $\exists~ \epsilon >0, \dH (\Dz_A, \Dz) \leq \epsilon $,
    then with probability $1 - \delta$:
    \vspace{-0.1in}
        \begin{align}
            \hspace{-0.1in}
            \gL_{\gT}(h) & \leq \frac{1}{K} \sum_k \gL_{\gT_k'}(h_k) %
        + \frac{1}{K}\sum_k  ( \dH (\Dz_k', \Dz) + \lambda_k'  ) \nonumber \\
        & \ \ + \sqrt{\frac{4}{m'} \left( d \log\frac{2 e m' }{d} + \log \frac{4K}{\delta} \right) }, 
        \end{align}
    where $\gT_k'=\{\gD_k', c^* \}$ is the updated local domain, $\dH (\Dz_k', \Dz) \leq \dH (\Dz_k, \Dz)$ when $\epsilon$ is small, and $\sqrt{\frac{4}{m'} ( d \log\frac{2 e m' }{d} + \log \frac{4K}{\delta} ) } < \sqrt{\frac{4}{m}( d \log\frac{2 e m }{d} + \log \frac{4K}{\delta})  }$.
\end{corollary}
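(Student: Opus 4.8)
The plan is to derive Corollary~\ref{corollary-augmentation} as a direct specialization of Theorem~\ref{theorem:fl-bounds}, applied to the modified FL system in which each local domain $\gT_k$ is replaced by the mixture domain $\gT_k' = \langle \gD_k', c^* \rangle$ with $\gD_k' = \frac{1}{2}(\gD_k + \gD_\rA)$. Since the theorem holds for any collection of user domains sharing $c^*$ and a common feature extractor $\gR$, instantiating it on $\{\gT_k'\}_{k=1}^K$ with empirical sample size $m' = |\De_k'|$ immediately yields the stated inequality, with $\Le_{\gT_k'}(h_k)$, $\dH(\Dz_k', \Dz)$, and $\lambda_k'$ in place of their unprimed counterparts, and $m'$ replacing $m$ inside the complexity term. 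Thus the bulk of the corollary is not a fresh argument but an invocation of the theorem; what remains is to justify the three comparative claims asserted in the final sentence.

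First I would establish the complexity-term comparison $\sqrt{\frac{4}{m'}(d\log\frac{2em'}{d} + \log\frac{4K}{\delta})} < \sqrt{\frac{4}{m}(d\log\frac{2em}{d} + \log\frac{4K}{\delta})}$. The plan is to treat the right-hand side as a function $\phi(n) = \frac{4}{n}(d\log\frac{2en}{d} + \log\frac{4K}{\delta})$ of the sample size $n$ and show $\phi$ is decreasing for $n \geq d$; differentiating, the $\frac{1}{n}$ prefactor dominates the slow $\log n$ growth, so $\phi'(n) < 0$ in the relevant regime, and since $m' > m$ the inequality follows by monotonicity. This is the routine VC-style observation that more samples tighten the uniform-convergence term.

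Next I would address the divergence comparison $\dH(\Dz_k', \Dz) \leq \dH(\Dz_k, \Dz)$. The plan is to exploit that $\Dz_k'$ is the induced image of the mixture $\frac{1}{2}(\gD_k + \gD_\rA)$, which by linearity of the induced-image construction equals $\frac{1}{2}(\Dz_k + \Dz_\rA)$. I would then use the fact that for any fixed hypothesis pair defining the $\HH$-discrepancy, the disagreement probability under a mixture is the average of the disagreements under its components, so that $\dH(\Dz_k', \Dz) \leq \frac{1}{2}\dH(\Dz_k, \Dz) + \frac{1}{2}\dH(\Dz_\rA, \Dz)$ via a sup-of-averages $\leq$ average-of-sups step together with the triangle-like structure of the discrepancy. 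Invoking the hypothesis $\dH(\Dz_\rA, \Dz) \leq \epsilon$, the second term is controlled by $\epsilon$, so for $\epsilon$ small enough the mixture divergence does not exceed $\dH(\Dz_k, \Dz)$, giving the claimed inequality. I expect this to be the \textbf{main obstacle}: the precise constants and the exact sense in which the $\dH$ discrepancy is convex (or subadditive) under mixing require careful handling of the supremum over the symmetric-difference class, and the ``$\epsilon$ small'' qualifier must be made quantitative relative to the gap between $\dH(\Dz_k,\Dz)$ and the augmented contribution.

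Finally, the oracle term $\lambda_k' = \min_h(\gL_{\gT_k'}(h) + \gL_\gT(h))$ and the empirical-risk term $\Le_{\gT_k'}(h_k)$ carry over verbatim from the theorem applied to the primed domains, so no separate argument is needed beyond noting that augmenting with data aligned to $\gD$ (small $\epsilon$) keeps $\lambda_k'$ comparable to $\lambda_k$. Assembling the three comparisons, each term on the right-hand side of the primed bound is no larger than its analogue in Theorem~\ref{theorem:fl-bounds}, which substantiates the interpretation that augmenting local users with globally-aligned samples tightens the generalization bound, completing the proof.
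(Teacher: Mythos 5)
Your proposal is correct and follows essentially the same route as the paper: the main inequality is obtained by instantiating Theorem~\ref{theorem:fl-bounds} on the primed domains $\gT_k'$ with sample size $m'$, the complexity term shrinks by monotonicity in the sample size, and the divergence claim comes from writing $\Dz_k' = \frac{1}{2}(\Dz_k + \Dz_\rA)$ and bounding $\dH(\Dz_k',\Dz) \leq \frac{1}{2}\dH(\Dz_k,\Dz) + \frac{1}{2}\dH(\Dz_\rA,\Dz)$ via the triangle inequality and subadditivity of the supremum. The quantitative reading of ``$\epsilon$ small'' that you flag as the main obstacle is resolved in the paper exactly as your bound suggests, by assuming $\epsilon \leq \min_k \dH(\Dz_k,\Dz)$ (or, for the averaged version, $\epsilon \leq \frac{1}{K}\sum_k \dH(\Dz_k,\Dz)$), which makes $\frac{1}{2}\dH(\Dz_k,\Dz) + \frac{1}{2}\epsilon \leq \dH(\Dz_k,\Dz)$ immediate.
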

\end{small}
\vspace{-0.1in}
%
%
Such an augmented distribution $\gD_\rA$ can facilitate FL from multiple aspects: not only does it relax the numerical constraints with more empirical samples ($m' > m$), but it also reduces the discrepancy between the local and global feature distributions ($\dH (\Dz_k', \Dz)$).
This finding coincides the merits of \approach: since the generator $G_w(z|y)$ is learned to recover an aggregated distribution over the feature space, one can treat samples from the generator $\{z |y\sim \pe(y), z\sim G_w(z|y)\}$ as the augmented data from ${\Dz_\rA}$, which naturally has a small deviation from the global induced distribution $\Dz$.
More rigorous analysis along this line is left to our future work.
We elaborate the role of such an augmentation distribution $\gD_A$ in the supplementary.


\section{Related Work}
\vspace{-0.1in}
\textbf{Federated Learning} (FL) is first proposed by \cite{mcmahan2017communication} as a decentralized machine learning paradigm.
Subsequent work along this line tackles different challenges faced by FL, including heterogeneity~\cite{karimireddy2020scaffold,li2020federated,mansour2020three}, privacy~\cite{duchi2014privacy,agarwal2018cpsgd}, communication efficiency~\cite{guha2019one,konevcny2016federated}, and convergence analysis~\cite{kairouz2019advances,qu2020federated,yuan2019convergence}.
Specifically, a wealth of work has been proposed to handle user \textit{\textbf{heterogeneity}}, by regularizing model weight updates~\cite{li2020federated}, allowing personalized user models~\cite{fallah2020personalized,dinh2020personalized}, or introducing new model aggregation schemes~\cite{yurochkin2019bayesian, mansour2020three}.
We refer readers to \cite{li2020federatedsurvey} for an organized discussion of recent progress on FL.

\textbf{Knowledge Distillation} (KD) is a technique to compress knowledge from one or more teacher models into an empty student~\cite{hinton2015distilling,bucilua2006model,ba2014deep,jacobs1991adaptive}.
Conventional KD hinges on a proxy dataset~\cite{hinton2015distilling}. 
More recent work enables KD with fewer data involved, such as dataset distillation~\cite{wang2018dataset}, or core-data selection~\cite{tsang2005core,sener2017active}.
Later there emerges data-free KD approaches which aim to reconstruct samples used for training the teacher~\cite{yoo2019knowledge,micaelli2019zero}.
Particularly, \cite{lopes2017data} extracts the meta-data from the teacher's activation layers.
\cite{yoo2019knowledge} learns a conditional generator which yield samples that maximizes the teacher's prediction probability of a target label.
Along the same spirit, \cite{micaelli2019zero} learns a generator by adversarial training. 
%
Different from prior work, we learn a generative model that is tailored for FL, by ensembling the knowledge of multiple user models over the latent space, which is more lightweight for learning and communication.

\textbf{Knowledge Distillation in Federated Learning} has recently emerged as an effective approach to tackle user heterogeneity.
Most existing work is data-dependent~\cite{lin2020ensemble,sun2020federated,guha2019one,chenfedbe}.
Particularly,~\cite{lin2020ensemble} proposed \textbf{\Fusion}, which performs KD to refine the global model, assuming that an unlabeled dataset is available with samples from the same or similar domains.
Complementary KD efforts have been made to confront data heterogeneity \cite{li2019fedmd,sattler2021fedaux}.
Specifically, \cite{li2019fedmd} transmits the proxy dataset instead of the model parameters.
\fedaux~\cite{sattler2021fedaux} performs \textit{data-dependent} distillation by leveraging an auxiliary dataset to initialize the server model and to {weighted-ensemble} user models, while \approach\ performs knowledge distillation in a data-free manner.
\mix~\cite{yoonfedmix} is a \textit{data-augmented} FL framework, where users share their {\textit{batch-averaged}} data among others to assist local training.
On the country, \approach\ extracts knowledge  from the existing user model parameters, which faces less privacy risks. 
%
%
%
\textbf{\FD} (Federated Distillation) is proposed by \cite{seo2020federated} which extracts from user models the statistics of the logit-vector outputs, and shares this meta-data to users for KD.
We provide detailed comparisons with work along this line in \Secref{sec:experiments}.
%
%

\vspace{-0.1in}
\section{Experiments}\label{sec:experiments}
\vspace{-0.1in}
In this section, we compare the performance of our proposed approach  with other key related work. We leave implementation details and extended experimental results to the supplementary.

\begin{figure*}[hbt!]  
    \begin{center}
        \hspace{-0.1in}
        \begin{subfigure}[b]{0.24\textwidth}
            \centerline{\includegraphics[width=\columnwidth]{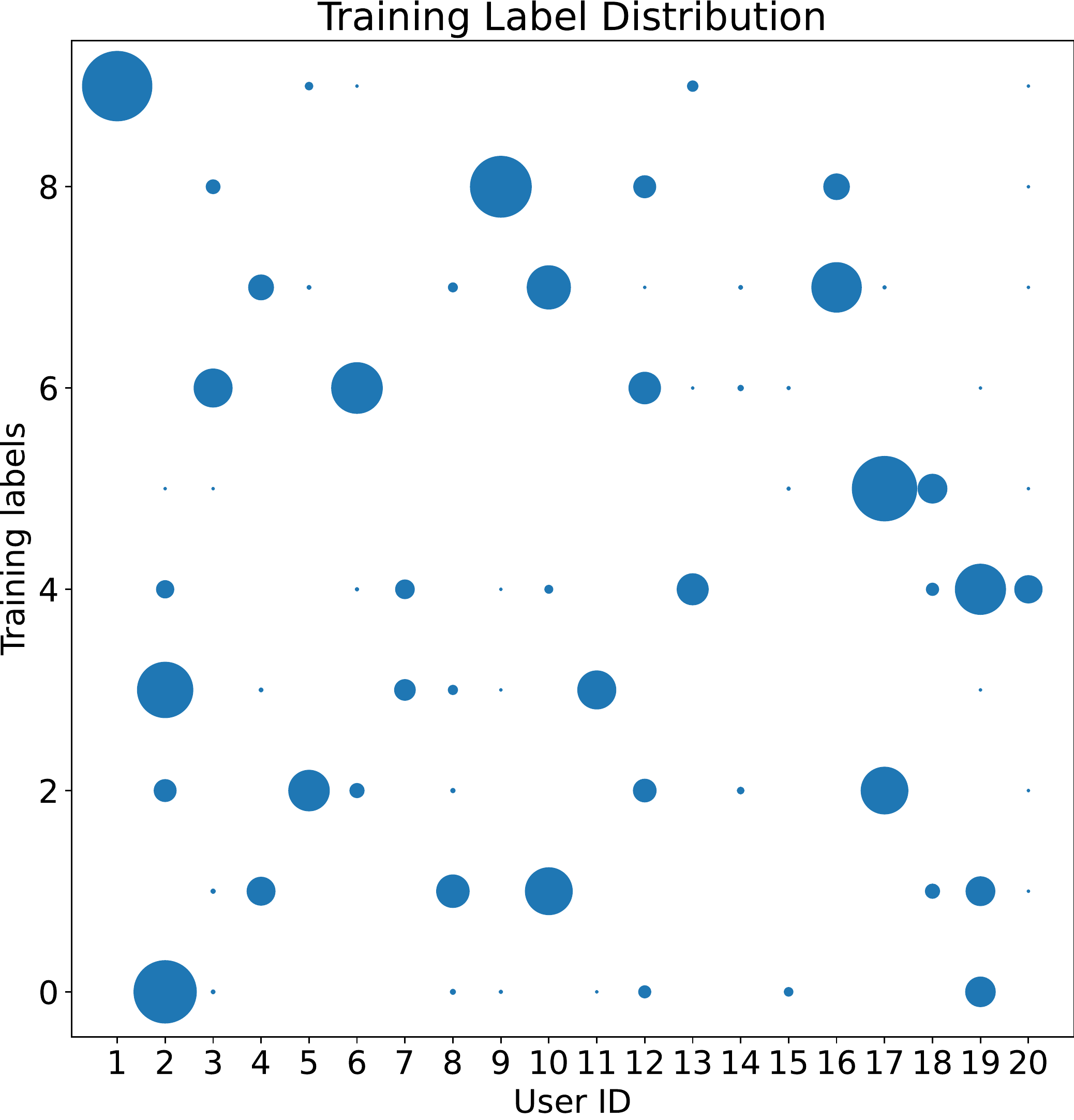}}
        \subcaption{$\alpha=0.05$}
        \end{subfigure} 
        \begin{subfigure}[b]{0.24\textwidth}
            \centerline{\includegraphics[width=\columnwidth]{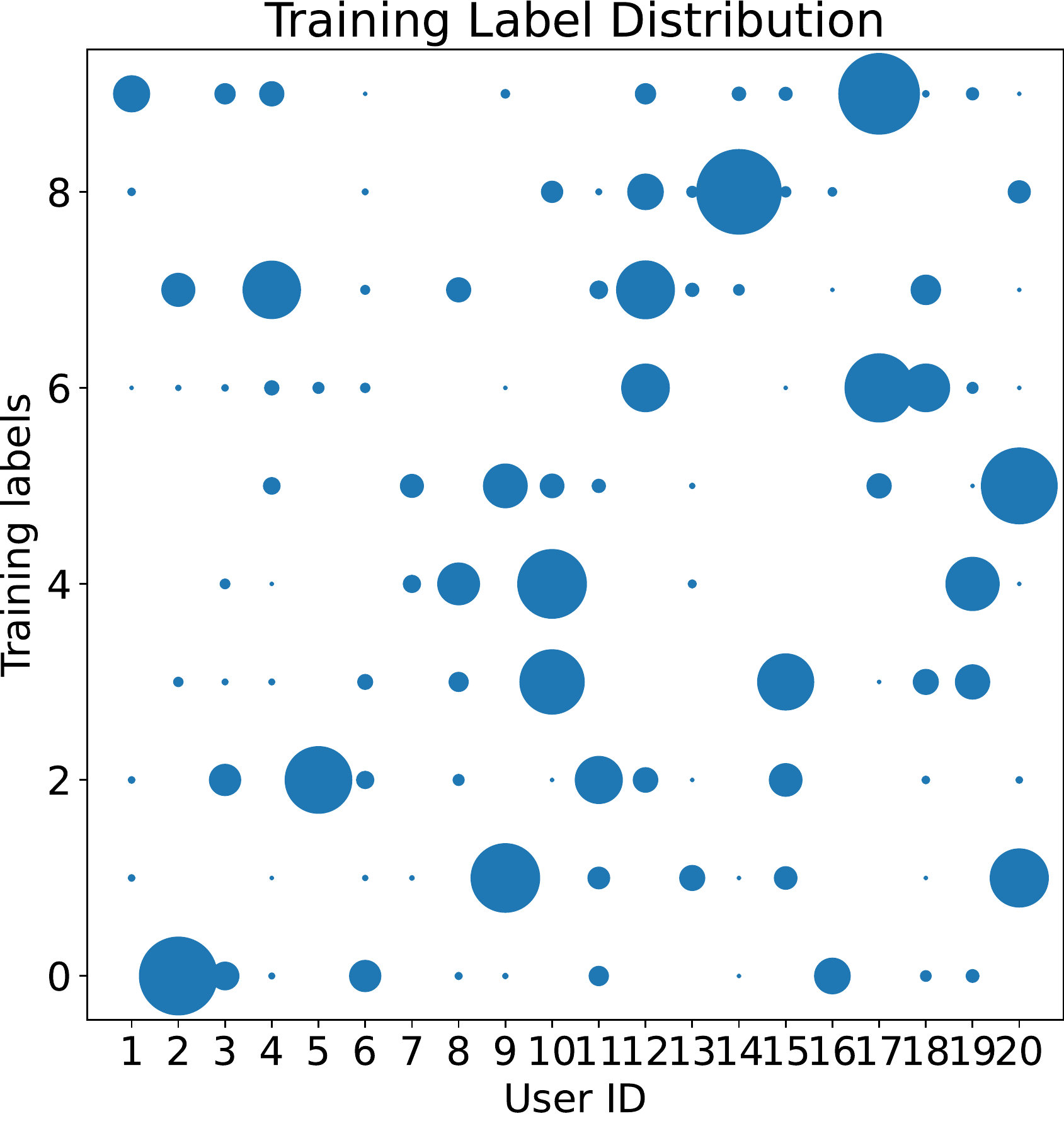}}
        \subcaption{$\alpha=0.1$}
        \end{subfigure} 
        \begin{subfigure}[b]{0.24\textwidth}
            \centerline{\includegraphics[width=\columnwidth]{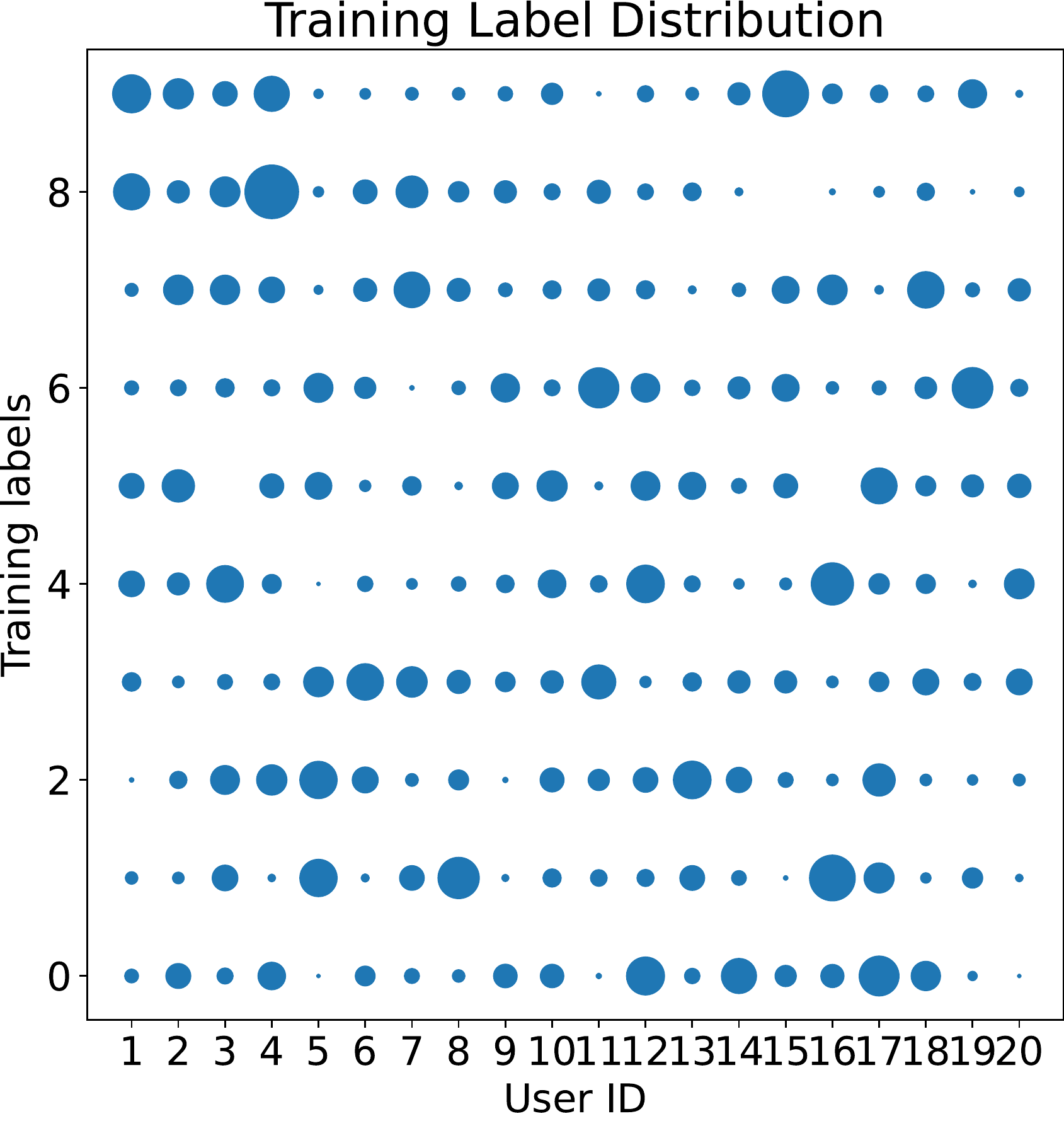}}
        \subcaption{$\alpha=1$}
        \end{subfigure} 
        \begin{subfigure}[b]{0.24\textwidth}
            \centerline{\includegraphics[width=\columnwidth]{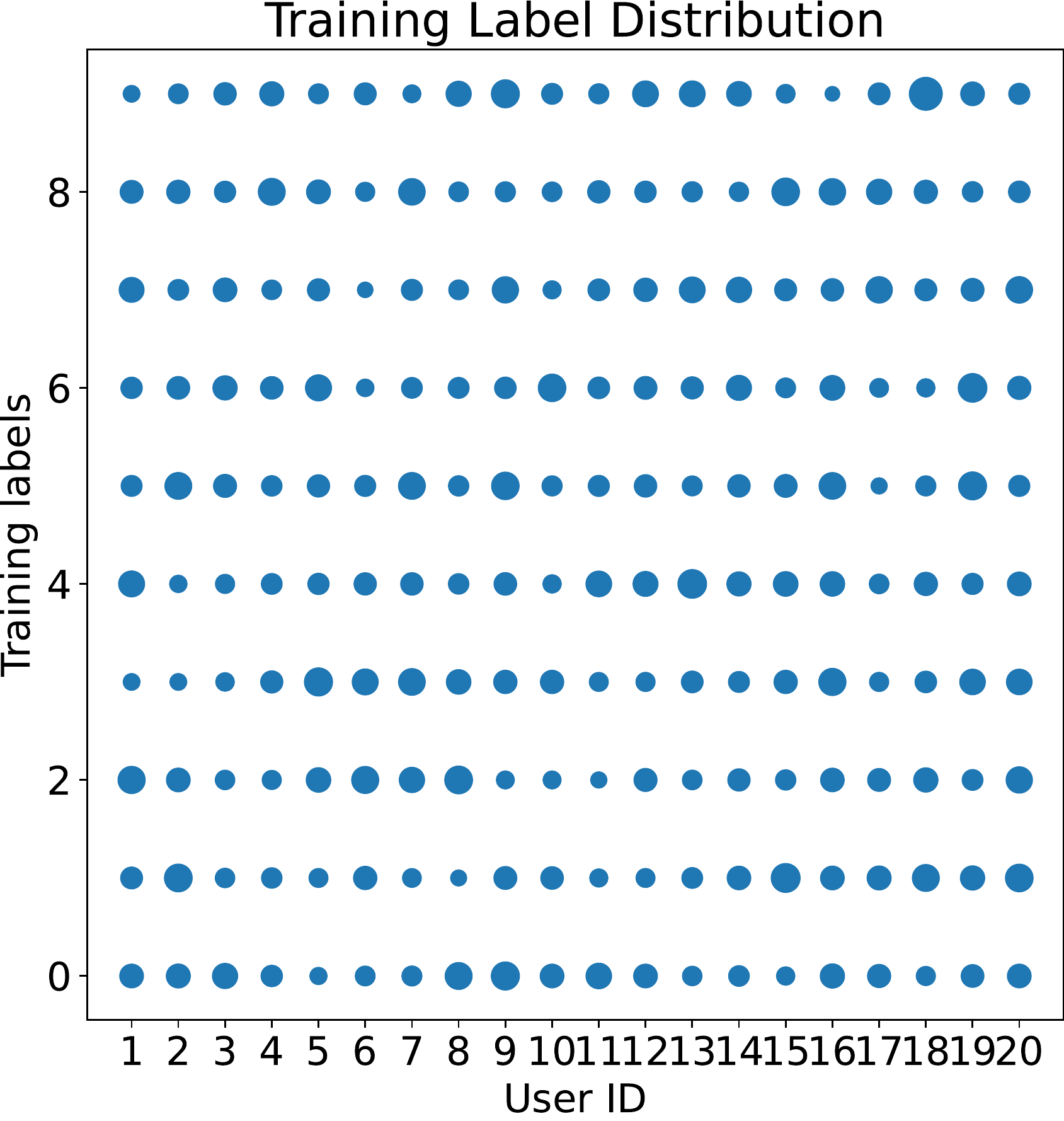}}
        \subcaption{$\alpha=10$}
        \end{subfigure} 
    \vspace{-0.2in}
    \caption{Visualization of statistical heterogeneity among users on \Mdata\ dataset, where the $x$-axis indicates user IDs, the $y$-axis indicates class labels, and the size of scattered points indicates the number of training samples for a label  available to that user.}\label{fig:label-heterogeneity}
    \end{center}
\end{figure*} 

\begin{table*}[htb!]
    \begin{center}
        \scalebox{0.8}{
            \begin{tabular}{llcccccccc}
                \toprule 
\multicolumn{1}{l}{}  &  & \multicolumn{6}{c}{\textbf{Top-1 Test Accuracy.}}  \\ \hline 
Dataset  
&  
Setting 
& \multicolumn{1}{c}{\Avg} 
& \multicolumn{1}{c}{\Prox} 
& \multicolumn{1}{c}{\Ensemble}
& \multicolumn{1}{c}{\textsc{\FD}} 
& \multicolumn{1}{c}{\textsc{\FDFL}} 
& \multicolumn{1}{c}{\Fusion} 
& \multicolumn{1}{c}{\approach} \\ \hline
\multirow{3}{*}{{\begin{tabular}[c]{@{}c@{}}\Mdata, \\ $T$=20\end{tabular}}} 
            &$\alpha$ = 0.05 & 87.70$\pm$2.07 & 87.49$\pm$2.05 & 88.85$\pm$0.68  & 70.56$\pm$1.24 & 86.70$\pm$2.27  & 90.02$\pm$0.96  & \textbf{91.30$\pm$0.74}\\  
            &$\alpha$ = 0.1 & 90.16$\pm$0.59  & 90.10$\pm$0.39 & 90.78$\pm$0.39 & 64.11$\pm$ 1.36 & 90.28$\pm$0.89  & 91.11$\pm$0.43   & \textbf{93.03$\pm$0.32}  \\  
            &$\alpha$ = 1 & 93.84$\pm$0.25  & 93.83 $\pm$ 0.29 & 93.91$\pm$0.28  & 79.88$\pm$0.66  &  94.73$\pm$0.15 & 93.37$\pm$0.40  & \textbf{95.52$\pm$0.07}  \\  
            \midrule

\multirow{3}{*}{\begin{tabular}[c]{@{}c@{}}\Cdata, \\ $T$=20\end{tabular}} 
 &$r = 5/10$ & 87.48$\pm$0.39  & 87.67$\pm$0.39 & 88.48$\pm$0.23 & 76.68$\pm$1.23  & 86.37$\pm$0.41  & 87.01$\pm$1.00  & \textbf{89.70$\pm$0.32}  \\  
 &$r = 5/25$ & 89.13$\pm$0.25  & 88.84$\pm$0.19 & \textbf{90.22$\pm$0.31} & 74.99$\pm$1.57  & 88.05$\pm$ 0.43 &   88.93$\pm$0.79  & 89.62$\pm$0.34  \\  
 &$r = 10/25$ & 89.12$\pm$0.20 & 89.01$\pm$0.33 & 90.08$\pm$0.24 & 75.88$\pm$1.17  & 88.14$\pm$0.37 & 89.25$\pm$0.56   & \textbf{90.29$\pm$0.47} \\  
 \midrule

\multirow{4}{*} {\begin{tabular}[c]{@{}c@{}}\Edata, \\ $T$=20\end{tabular}} 
            &$\alpha$ = 0.05 & 62.25$\pm$2.82  & $61.93\pm$2.31 & 64.99$\pm$0.35 & 60.49$\pm$1.27  & 61.56$\pm$2.15 & \textbf{70.40$\pm$0.79}   & {68.53$\pm$1.17}  \\  
            &$\alpha$ = 0.1 & 66.21$\pm$2.43  & 65.29$\pm$2.94  & 67.53$\pm$1.19 & 50.32$\pm$1.39 & 66.06$\pm$3.18  &  70.94$\pm$0.76 & \textbf{72.15$\pm$0.21}  \\  
            &$\alpha$ = 10 & 74.83$\pm$ 0.69 & 74.24$\pm$0.81  & 74.90$\pm$0.80 & 54.77$\pm$0.33  & 75.55 $\pm$0.94  & {74.36$\pm$0.40} & \textbf{78.43$\pm$0.74}  \\  
            \midrule
\multirow{2}{*} {\begin{tabular}[c]{@{}c@{}}\Edata, \\ $\alpha$=1\end{tabular}} 
            &$T$ = 20 & 74.83$\pm$0.99  & 74.12$\pm$0.88  & 75.12$\pm$1.07  & 46.19$\pm$0.70  & 75.41$\pm$1.05  & 75.43$\pm$0.37  & \textbf{78.48$\pm$1.04} \\  

            &$T$ = 40 & 77.02$\pm$1.09 & 75.93 $\pm$0.95 & 77.68$\pm$0.98  & 46.72$\pm$0.73 & 78.12$\pm$0.90 & 77.58$\pm$0.37  & \textbf{78.92$\pm$ 0.73}\\  
            \midrule
\bottomrule 
\end{tabular}} 
\caption{Performance overview given different  data settings. For \Mdata\ and \Edata, a \textit{smaller} $\alpha$ indicates \textit{higher} heterogeneity. For \Cdata, $r$ denotes the ratio between active users and total users. $T$ denotes the local training steps (communication delay).\label{table:performance-overview}}
\end{center}
\vspace{-0.1in}
\end{table*}

\vspace{-0.2in}
\begin{figure*}[h!bt]  
    \begin{center}
        \begin{subfigure}[b]{0.21\textwidth}
            \centerline{\includegraphics[width=\columnwidth]{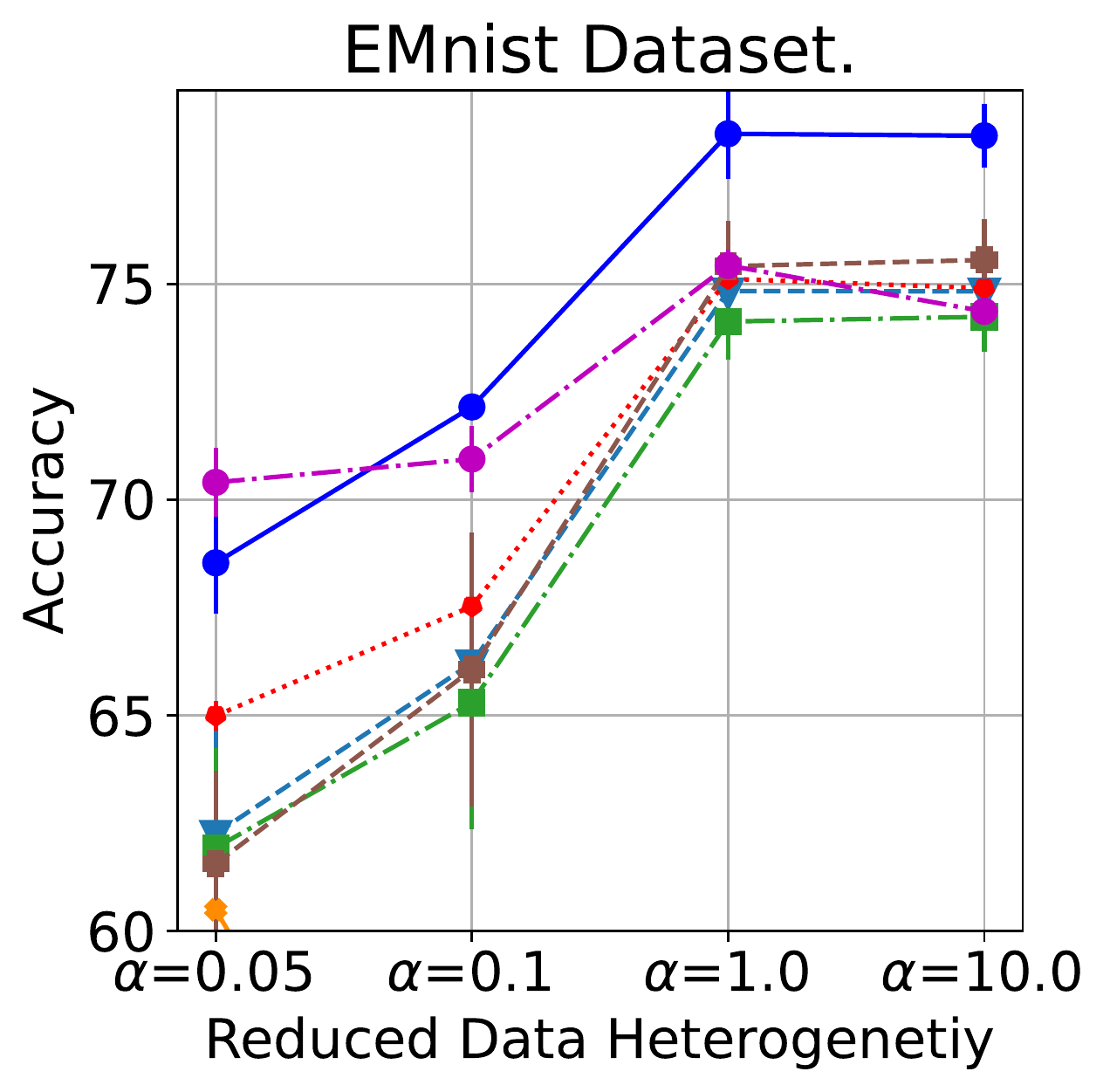}}
        \subcaption{\Edata\ Dataset.} \label{fig:emnist-batch}
        \end{subfigure} 
        %
        \begin{subfigure}[b]{0.3\textwidth}
            \centerline{\includegraphics[width=\columnwidth]{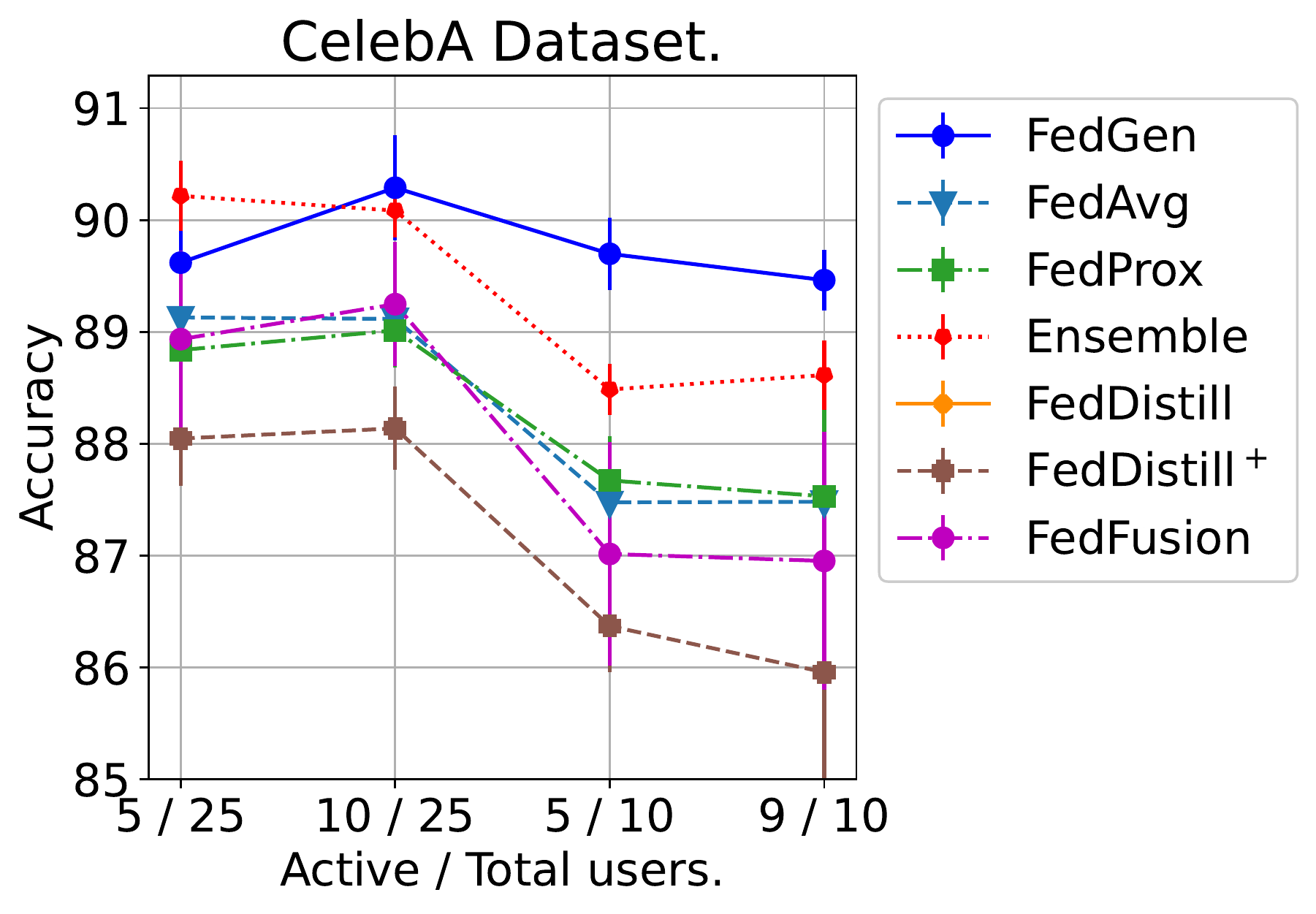}}
            \caption{\Cdata\ Dataset.}\label{fig:celeb-batch}
            \end{subfigure} 
            %
        \hspace{-0.1in}
        \begin{subfigure}[b]{0.21\textwidth}
            \centerline{\includegraphics[width=\columnwidth]{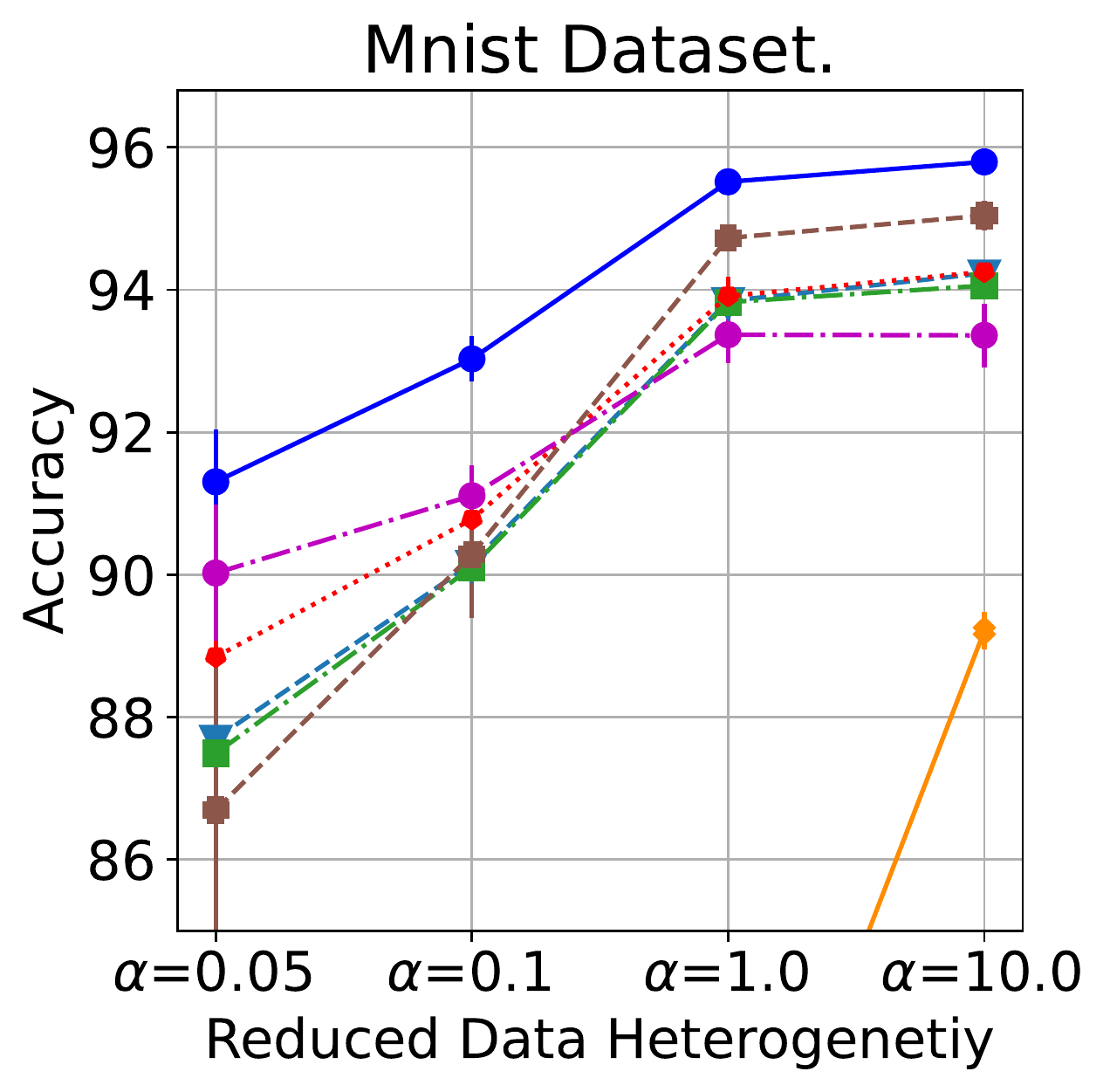}}
        \subcaption{\Mdata on CNN.}\label{fig:mnist-batch-cnn}
        \end{subfigure} 
        %
        \begin{subfigure}[b]{0.21\textwidth}
        \centerline{\includegraphics[width=\columnwidth]{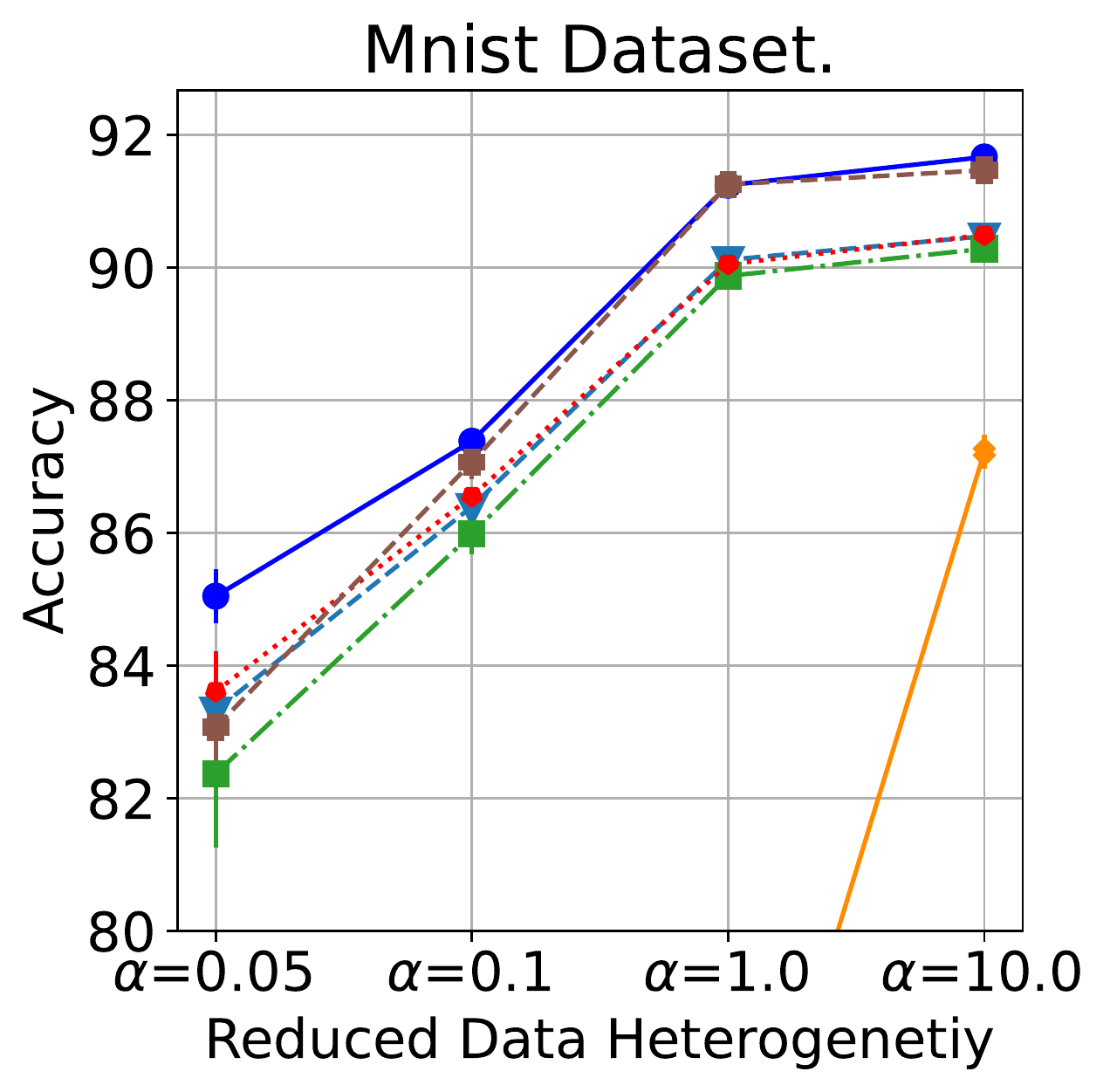}}
        \subcaption{\Mdata\ on MLP.}\label{fig:mnist-batch-dnn}
        \end{subfigure} 
    \end{center}
    \vspace{-0.2in}
    \caption{Visualized performance w.r.t data heterogeneity.}\label{fig:heterogeneity-effects}
\end{figure*}

\begin{figure*}[hbt!]  
    \begin{center}
        \begin{subfigure}[b]{0.21\textwidth}
        \centerline{\includegraphics[width=\columnwidth]{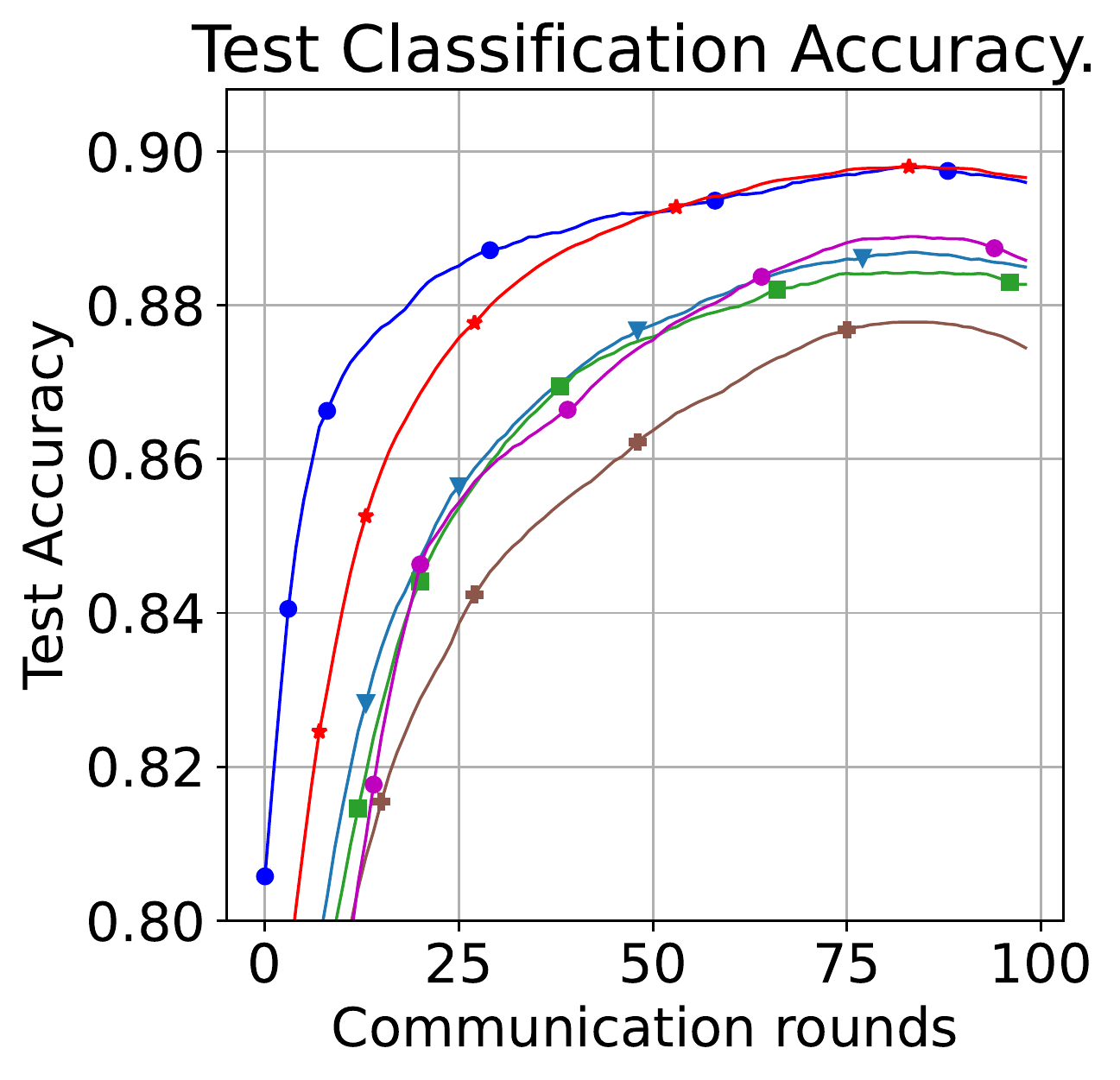}}
        \subcaption{\Cdata, $r=10/25$.}\label{fig:celeb-u25a5}
        \end{subfigure}  
        \begin{subfigure}[b]{0.3\textwidth}
        \centerline{\includegraphics[width=\columnwidth]{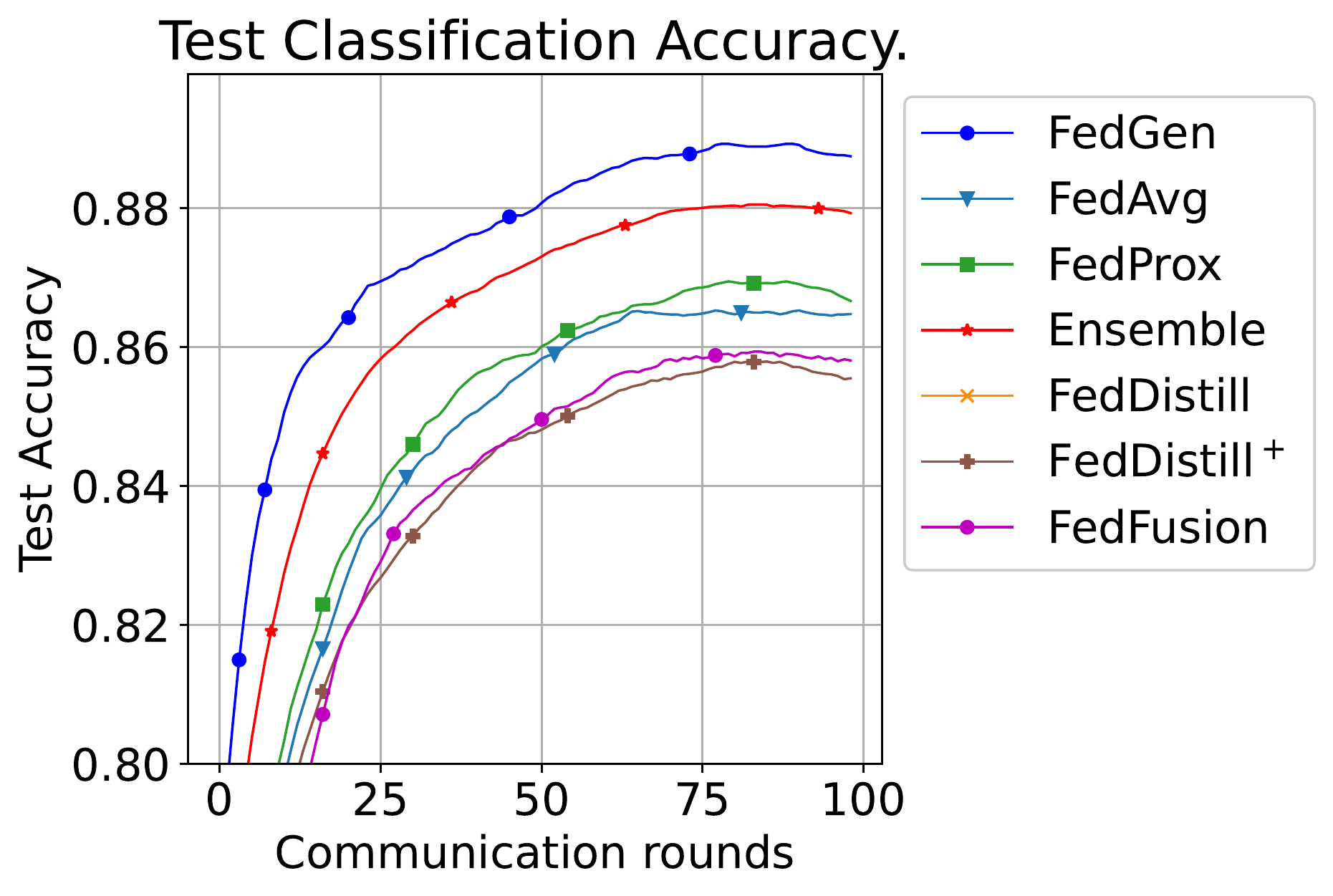}} 
        \subcaption{\Cdata, $r=5/10$.}\label{fig:celeb-u10a5}
        \end{subfigure} 
        \hspace{-0.1in}
        \begin{subfigure}[b]{0.21\textwidth}
            \centerline{\includegraphics[width=\columnwidth]{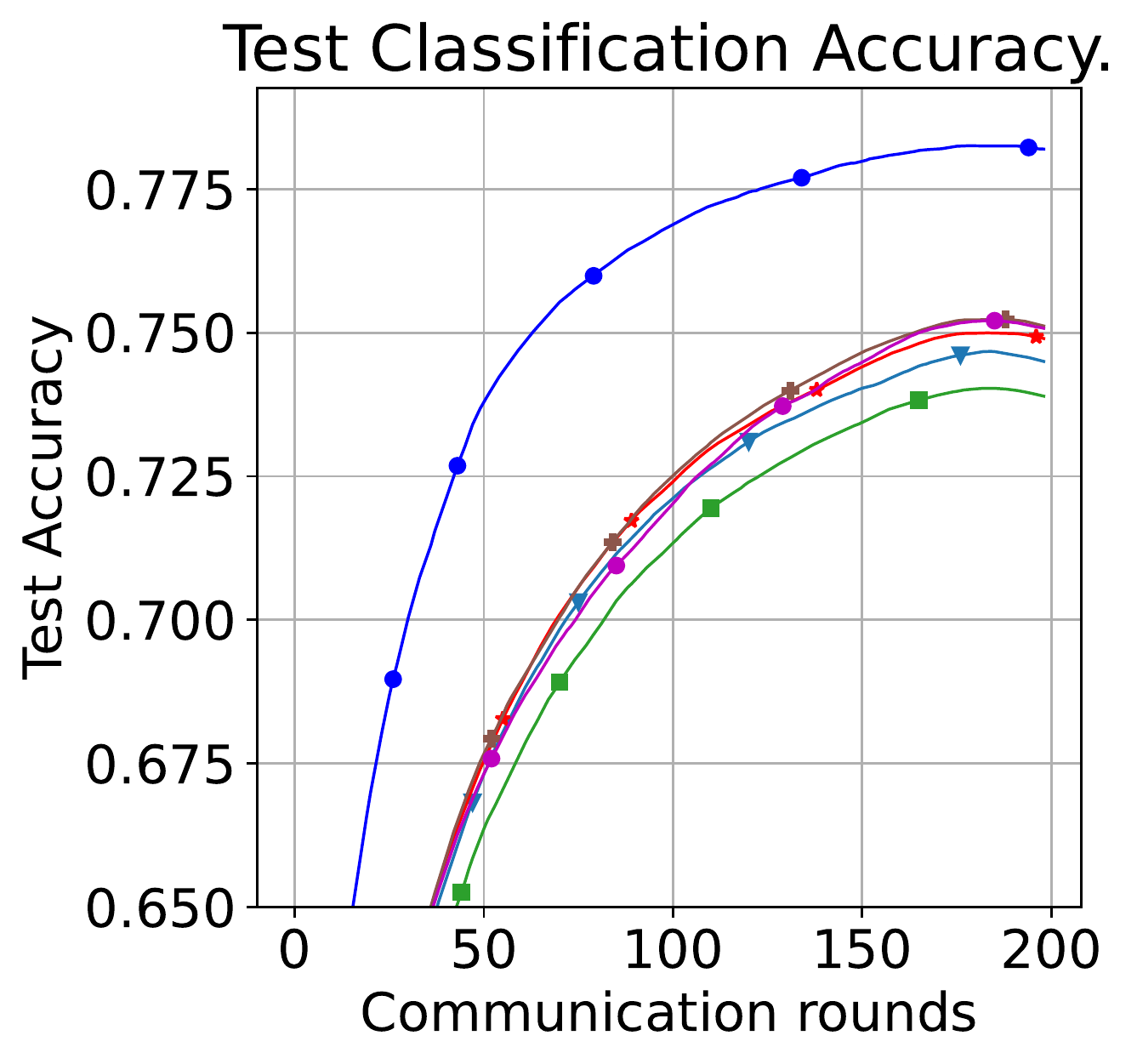}}
            \subcaption{\Edata, $\alpha=1$.}\label{fig:emnist-alpha1-c20}
            \end{subfigure} %
        \begin{subfigure}[b]{0.21\textwidth}
        \centerline{\includegraphics[width=\columnwidth]{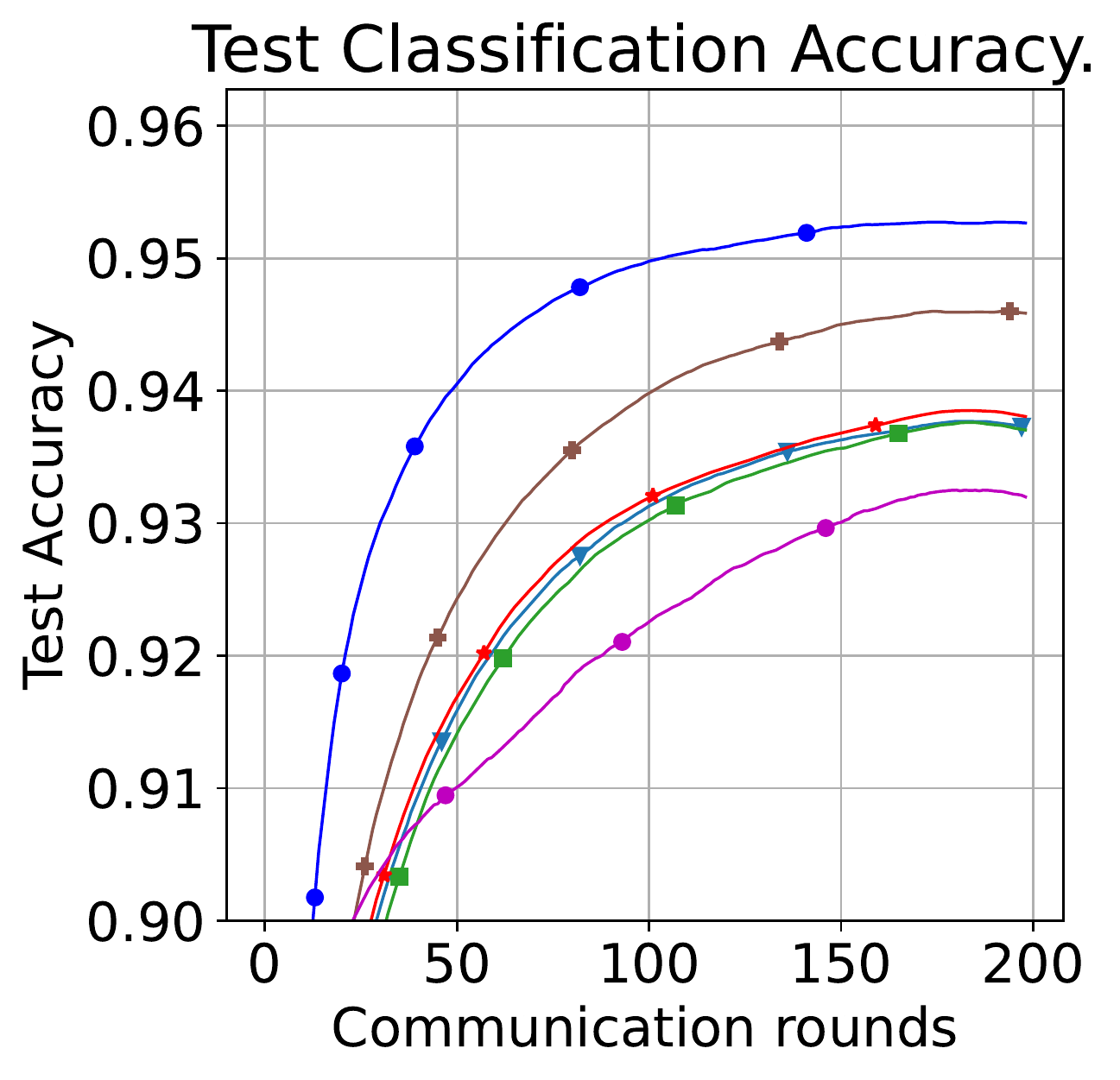}}
        \subcaption{\Mdata, $\alpha=1$.}\label{fig:mnist-alpha1}
        \end{subfigure}  
    \end{center}
    \vspace{-0.2in}
    \caption{Selected learning curves, averaged over 3 random seeds.} 
    \label{fig:learning-curves}
    \vspace{-0.25in}
\end{figure*}

\vspace{0.1in}
\subsection{Setup} \label{sec:exp-setup}
\vspace{-0.1in}
\textbf{Baselines:}
In addition to \textbf{\textsc{FedAvg}} ~\cite{mcmahan2017communication},
\textbf{\textsc{FedProx}} regularizes the local model training with a proximal term in the model objective~\cite{li2020federated}.
\textbf{\textsc{FedEnsemble}} extends \textsc{FedAvg} to ensemble the prediction output of all user models. 
%
\textbf{\textsc{FedDFusion}} is a data-based KD approach~\cite{lin2020ensemble}, for which we provide unlabeled training samples as the proxy dataset. 
%
\textbf{\textsc{FedDistill}}~\cite{jeong2018communication} is a data-free KD approach which shares label-wise average of logit-vectors among users.
It does not share network parameters and therefore experience non-negligible performance drops compared with other baselines.
For a fair comparison, we derive a baseline from \FD, which shares both model parameters and the label-wise logit vectors. We name this stronger baseline as \textbf{\FDFL}.  

\textbf{Dataset:} We conduct experiments on three image datasets: \textbf{\Mdata}~\cite{lecun-mnisthandwrittendigit-2010}, \textbf{\Edata}~\cite{cohen2017emnist}, and \textbf{\Cdata}~\cite{liu2015faceattributes}, as suggested by the \textsc{Leaf} FL benchmark~\cite{caldas2018leaf}. 
Among them, \Mdata~and \Edata~dataset is for digit and character image classifications, and \Cdata~is a celebrity-face dataset which is used to learn a binary-classification task, \ie~to predict whether the celebrity in the picture is smiling.

\textbf{Configurations:} Unless otherwise mentioned, we run 200 global communication rounds, with 20 user models in total and an active-user ratio $r = 50\%$. 
We adopt a local updating step $T=20$, and each step uses a mini batch with size $B=32$. 
We use at most $50\%$ of the total training dataset and distribute it to user models, and use all testing dataset for performance evaluation.
For the classifier, we follow the network architecture of \cite{mcmahan2017communication}, and treat the last MLP layer as the predictor $\vtheta_k^p$ and all previous layers as the feature extractor $\vtheta_k^f$. 
The generator $G_\vw$ is MLP based. It takes a noise vector $\epsilon$ and an one-hot label vector $y$ as the input, which, after a hidden layer with dimension $d_h$, outputs a feature representation with dimension $d$.
To further increase the diversity of the generator output, we also leverage the idea of \textit{diversity loss} from prior work~\cite{mao2019mode} to train the generator model.


\textbf{User heterogeneity}: for \Mdata~and \Edata~ dataset, we follow prior arts~\cite{lin2020ensemble,hsu2019measuring}  to model non-iid data distributions using a Dirichlet distribution \textbf{Dir}$(\alpha)$, in which a smaller $\alpha$ indicates higher data heterogeneity, as it makes the distribution of $p_k(y)$ more biased for a user $k$.  
We visualize the effects of adopting different $\alpha$ on the statistical heterogeneity for the \Mdata\ dataset in Figure \ref{fig:label-heterogeneity}.
For \Cdata, the raw data is naturally non-iid distributed. We further increase the data heterogeneity by aggregating  pictures belonging to different celebrities into disjoint groups, with each group assigned to one user. 

\vspace{-0.1in}
\subsection{Performance Overview:}\label{sec:exp-overview}
\vspace{-0.1in}
From Table \ref{table:performance-overview}, we can observe that \approach~outperforms other baselines with a considerable margin. 

\textbf{Impacts of data heterogeneity:}
\approach~is the only algorithm that is robust against different levels of user heterogeneity while consistently performs well. 
As shown in Figure~\ref{fig:heterogeneity-effects}, the gain of \approach~is more notable when the data distributions are highly heterogeneous (with a small $\alpha$).
This result verifies our motivations, since the advantage of \approach~is induced from the knowledge distilled to local users, which mitigates the discrepancy of latent distributions across users.
This knowledge is otherwise not accessible by baselines such as \Avg\ or \Prox. 

As one of most competitive baselines, the advantage of \Fusion\ vanishes as data heterogeneity becomes mitigated, which gradually becomes comparable to \Avg, as shown in Figure \ref{fig:emnist-batch} and Figure \ref{fig:mnist-batch-cnn}. 
Unlike \Fusion, the performance gain of our approach is consistently significant, which outperforms \Fusion\ in most cases.
This discrepancy implies that our proposed approach, which directly distills the knowledge to user models, can be more effective than fine-tuning the global model using a proxy dataset, especially when the distilled knowledge contains inductive bias to guide local model learning.

As a data-free KD baseline, \FD~experiences non-negligible performance drops, which implies the importance of parameter sharing in FL.
%
%
\FDFL, on the other hand, is vulnerable to data heterogeneities.
As shown in Table \ref{table:performance-overview}, it can outperform \Avg~ when data distributions are near-iid (\eg~when $\alpha \geq 1$ ), thanks to the shared logit statistics as the distilled knowledge, but performs worse than \Avg~when $\alpha$ gets smaller, which indicates that sharing such meta-data alone may not be effective enough to confront user heterogeneity.

\Ensemble~ enjoys the benefit of ensemble predictions from all user models, although its gain is less significant compared with \approach.
We ascribe the leading performance of our approach to the better generalized performance of local models.
Guided by the distilled knowledge, a user model in \approach~can quickly jump out of its local optimum, whose aggregation can be better than the ensemble of potentially biased models as in \Ensemble.

\textbf{Learning efficiency:}
As shown in Figure \ref{fig:learning-curves}, \approach~has the most rapid learning curves to reach a performance and outperforms other baselines.
Although \Fusion~enjoys a learning efficiency higher than other baselines under certain data settings, due the advantages induced from a proxy data,
our approach can directly benefit each local user with actively learned knowledge, whose effect is more explicit and consistent (More illustrations in supplementary). 

{\textbf{Comments on sharing generative model}: Given a compact latent space, the generative model can be lightweight for learning or downloading. 
In practice, we use a generator network with 2 compact MLP layers, whose parameter size is small compared with the user classification model. 
The above empirical results also indicates that the leading performance gain combined with a faster convergence rate can trade off the communication load brought by sharing a generative model.

\begin{figure*}[hbt!]  
    \scalebox{0.9}{
    \begin{minipage}{0.3\textwidth}
      \centering
        \includegraphics[width=\linewidth]{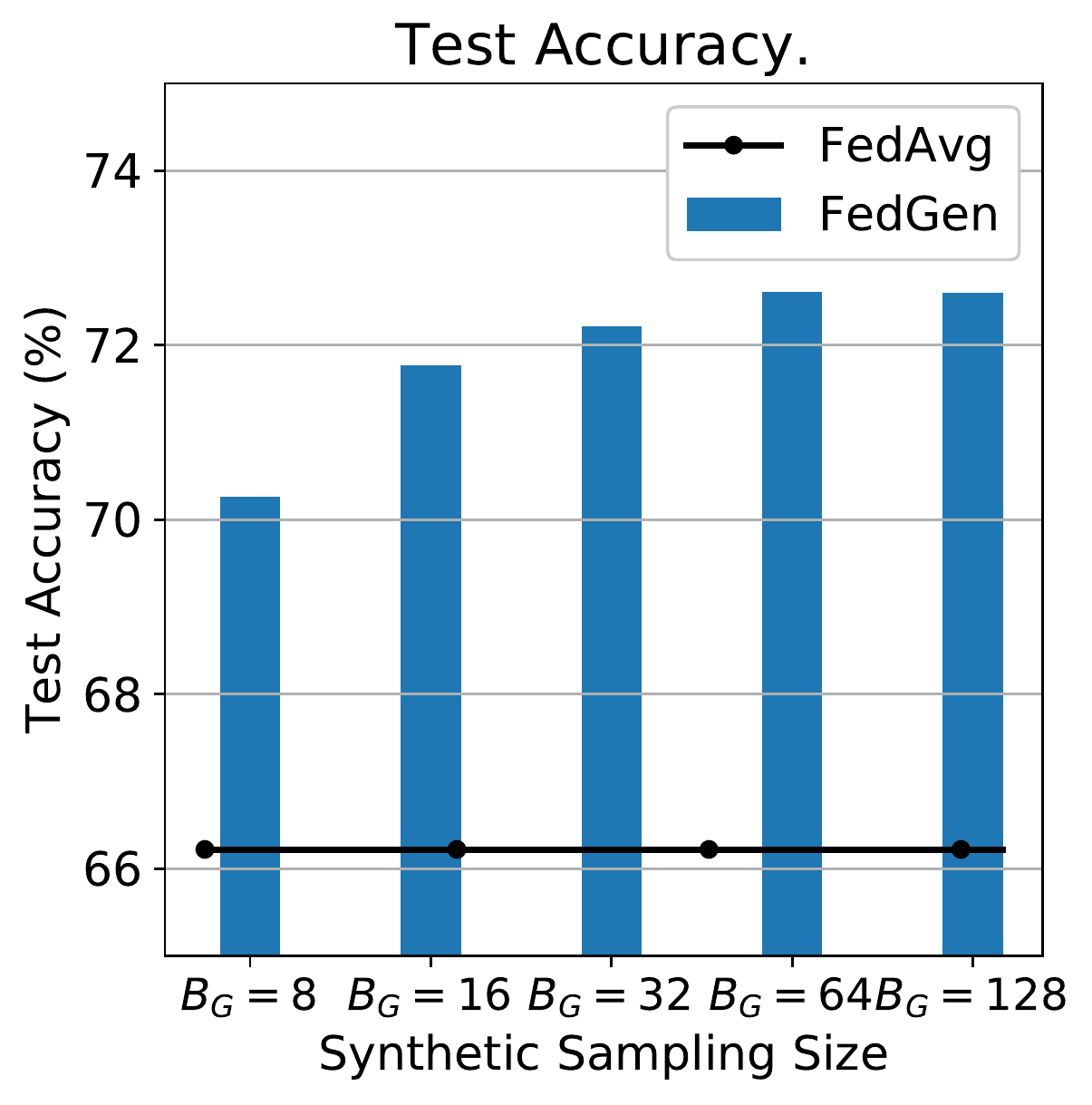}
        \vspace{-0.3in}
      \captionof{figure}{Effects of synthetic samples. \label{fig:samples-analysis}}
    \end{minipage}}
    \hfill
    \scalebox{0.8}{
    \begin{minipage}{0.9\linewidth}
      \centering
        \begin{tabular}{cccccc}
          \toprule
          \multicolumn{1}{c}{} & \multicolumn{5}{c}{\textbf{Effects of the Generator Network Structure.}}  \\ 
          %
          {[$d_\epsilon$, $d_h$]}
          & \multicolumn{1}{c}{$[64,256]$ } 
          & \multicolumn{1}{c}{$[32,256]$} 
          & \multicolumn{1}{c}{$[32,128]$} 
          & \multicolumn{1}{c}{$[16,128]$} 
          & \multicolumn{1}{c}{$[32,64]$} \\  \hline  
          \multicolumn{1}{c}{\textbf{Accuracy(\%)}} & \multicolumn{4}{c}{\Avg=66.22$\pm$2.58}  \\ 
  
          \approach
          & 71.61$\pm$0.25 & 72.09$\pm$0.46 & 72.43$\pm$0.57 & 72.01$\pm$0.76  & 70.98$\pm$0.85       \\            %
          \bottomrule
          \end{tabular}
       \vspace{-0.1in}
      \captionof{table}{Effects of the generator's network structure, using \Edata\ dataset with $\alpha=0.1$\label{table:sensitivity-on-network}.}
      \begin{tabular}{cccccc}
        \toprule
        \multicolumn{1}{c}{} & \multicolumn{4}{c}{\textbf{Performance w.r.t different synthetic sample sizes.}}  \\ 
        \multicolumn{1}{c}{\textbf{Generator sampling size}} 
        & \multicolumn{1}{c}{$B_G=8$} 
        & \multicolumn{1}{c}{$B_G=16$} 
        & \multicolumn{1}{c}{$B_G=32$} 
        & \multicolumn{1}{c}{$B_G=64$} 
        & \multicolumn{1}{c}{$B_G=128$} \\  \hline 

        %
         {\textbf{Local training time (ms)}}  & \multicolumn{4}{c}{\Avg$=47.66\pm1.68$} \\ 
         \approach 
         &57.20$\pm$2.22 &  57.39$\pm$2.21 & 58.17$\pm$2.24  & 58.91$\pm$2.29 & 60.06$\pm$2.32 \\   
         \bottomrule 
        \end{tabular}
       \vspace{-0.1in}
       \captionof{table}{Effects of the number of synthetic samples, using \Edata\ dataset with $\alpha=0.1$. \label{table:sensitivity-on-gen-samples}}
      \end{minipage}}
    \end{figure*}

\vspace{-0.1in}
\subsection{Sensitivity Analysis}
\vspace{-0.05in}

\textbf{Impacts of straggler users:}  
We explore different numbers of total users versus active users on the \textsc{CelebA} dataset, with the active ratios $r$ ranging from $0.2$ to $0.9$. 
Figure \ref{fig:celeb-batch} shows that our approach is next to \Ensemble~when the number of straggler users are high ($r= 0.2$, with 5 out 25 active users per learning round), and is consistently better than all baselines w.r.t to the asymptotic performance given a moderate number of active users.
Combined with Figure~\ref{fig:celeb-u25a5} and Figure~\ref{fig:celeb-u10a5}, one can observe that our approach requires  much less communication rounds to reach high performance, regardless of the setting of straggler users.

\textbf{Effects of different network architectures:} we conduct analysis on the \Mdata\ dataset, using both CNN and MLP network architectures.
%
As shown in Figure~\ref{fig:mnist-batch-dnn} and Figure~\ref{fig:mnist-batch-cnn}, the outstanding performance of \approach~ is consistent across two different network settings, although the overall performance trained with CNN networks is noticeably higher than those with MLP networks.

\textbf{Effects of communication frequency:} We explore different local updating steps $T$ on the \Edata, so that a higher $T$ means longer communication delays before the global communication.
Results in Table \ref{table:performance-overview} indicates that our approach is robust against different levels of communication delays (See supplementary for more results).

\begin{figure}[hbt!]  
    \begin{center}
        \hspace{-0.1in} 
        \begin{subfigure}[b]{0.2\textwidth}
            \centerline{\includegraphics[width=\columnwidth]{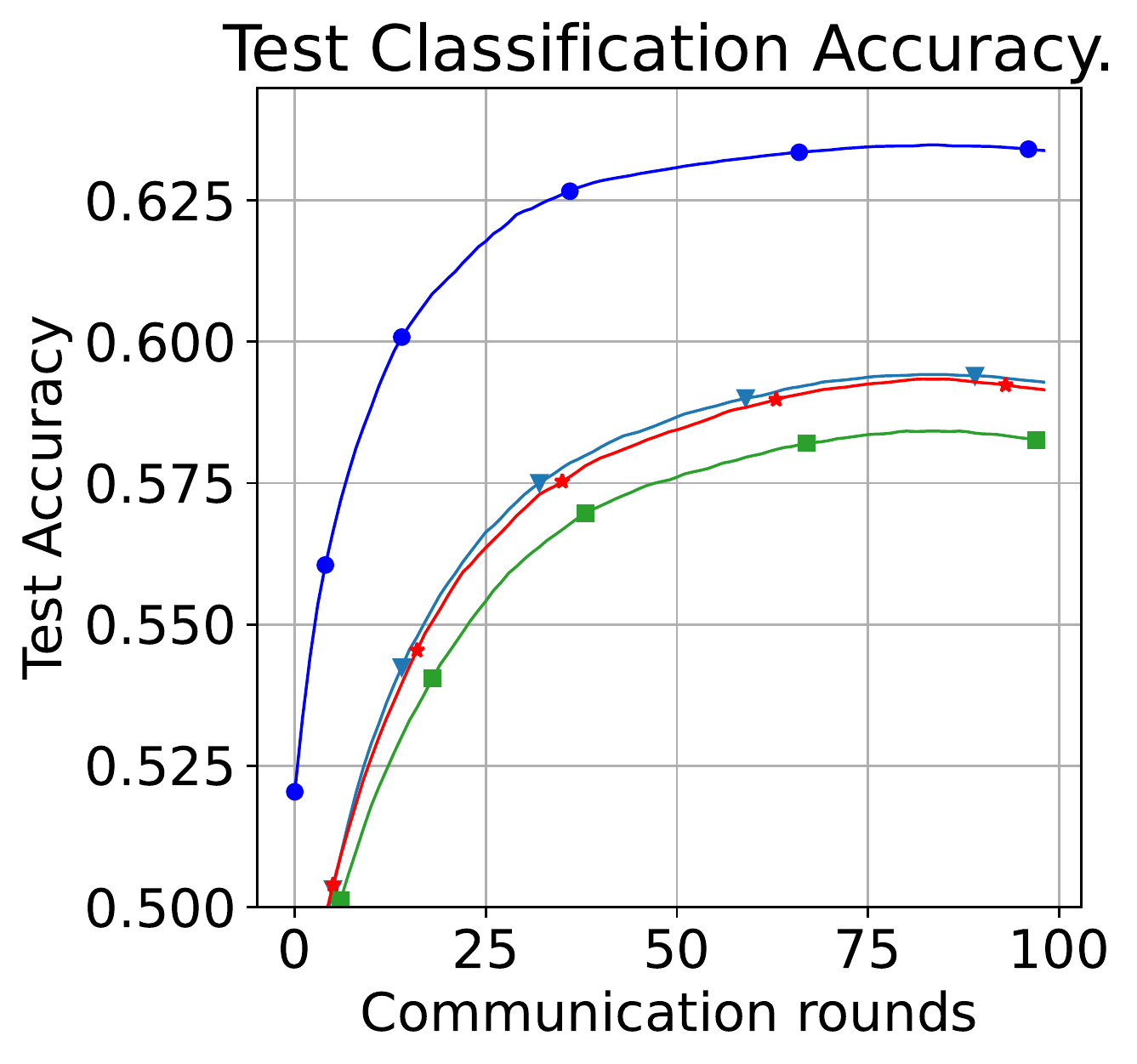}}
        \subcaption{$\alpha=0.05$}
        \end{subfigure} 
        \begin{subfigure}[b]{0.28\textwidth}
            \centerline{\includegraphics[width=\columnwidth]{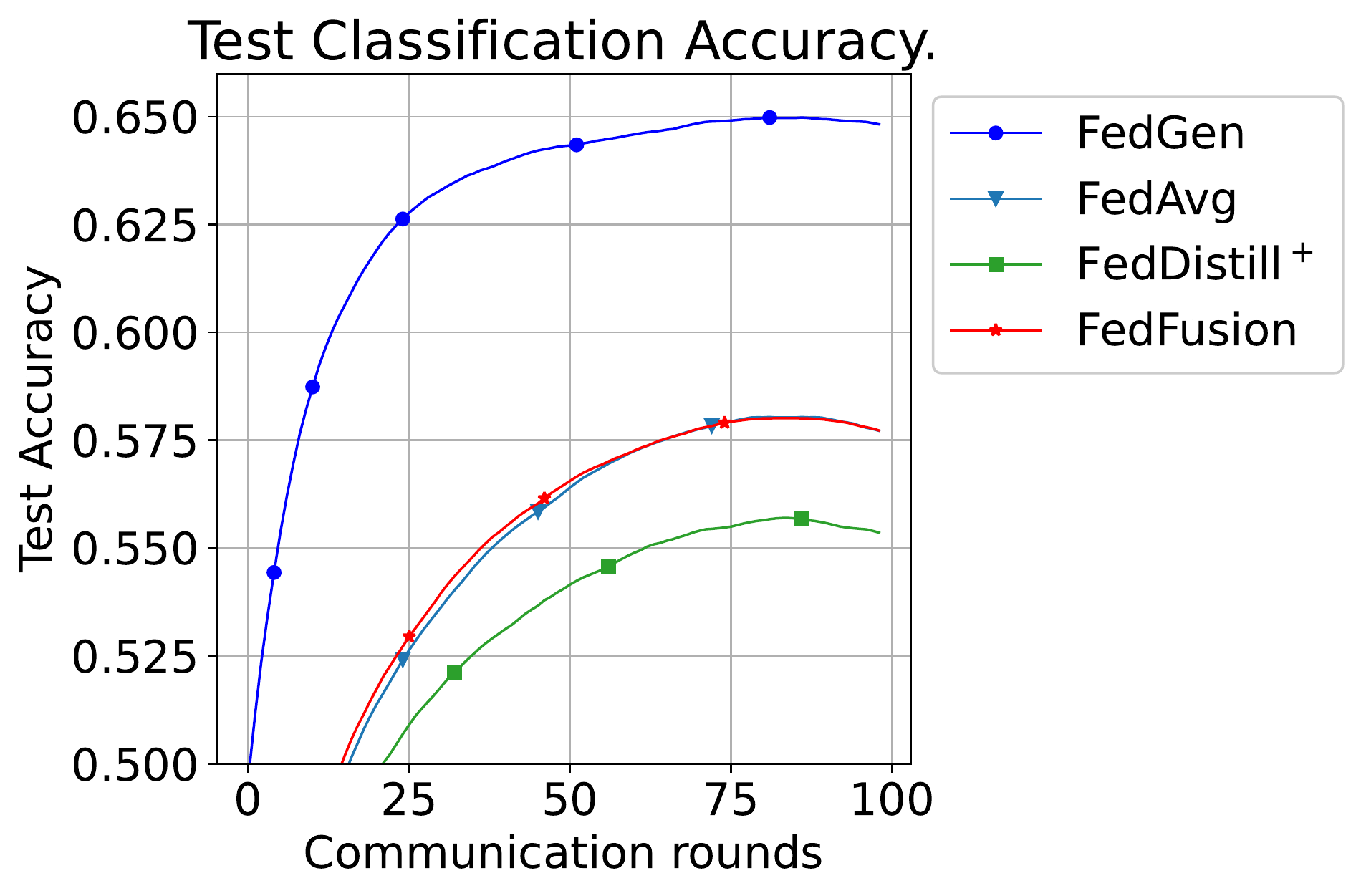}}
        \subcaption{$\alpha=0.1$}
        \end{subfigure} 

        \hspace{-0.6in}
        \begin{subfigure}[b]{0.2\textwidth}
            \centerline{\includegraphics[width=\columnwidth]{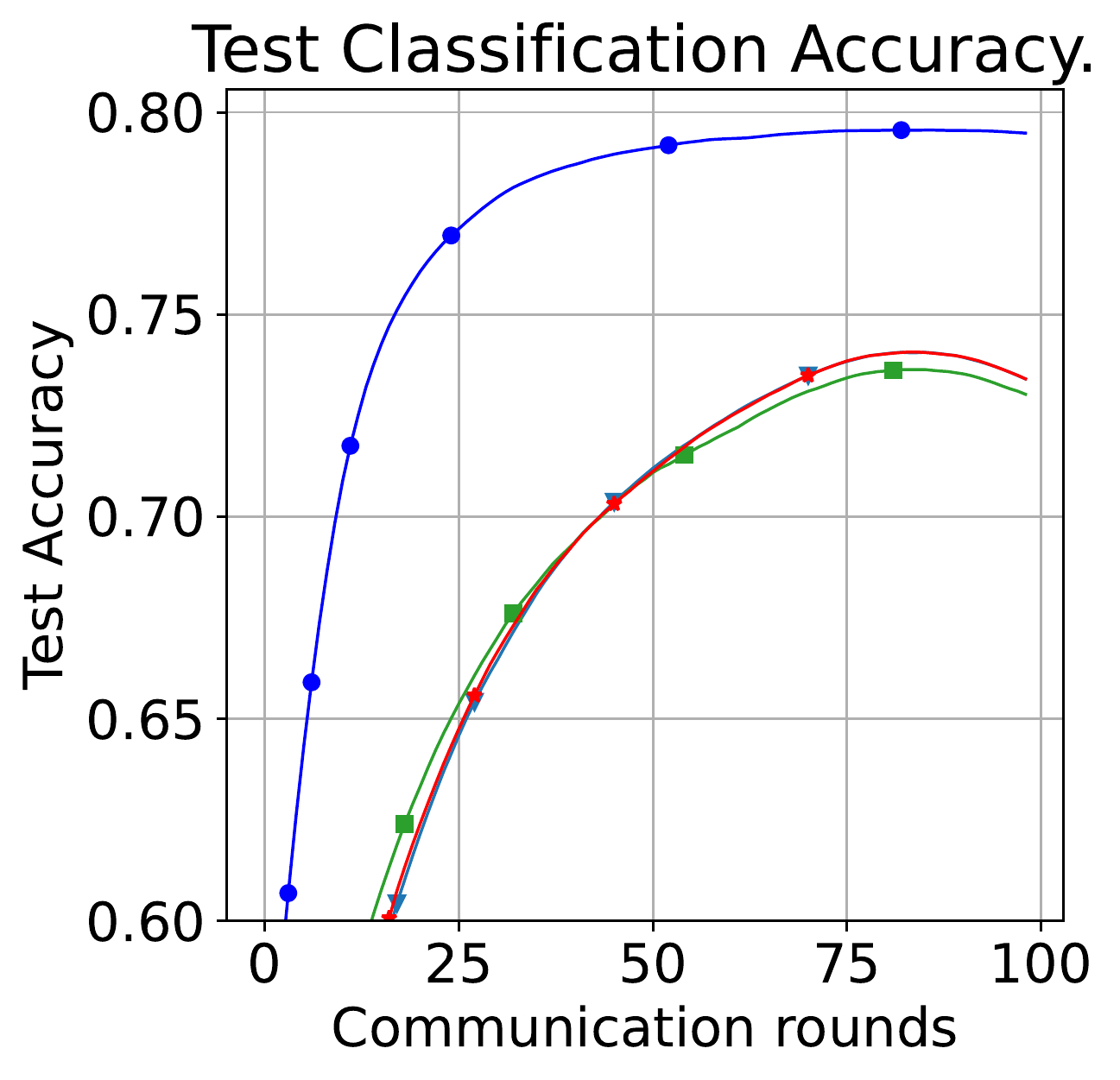}}
        \subcaption{$\alpha=1$}
        \end{subfigure} 
        \begin{subfigure}[b]{0.2\textwidth}
            \centerline{\includegraphics[width=\columnwidth]{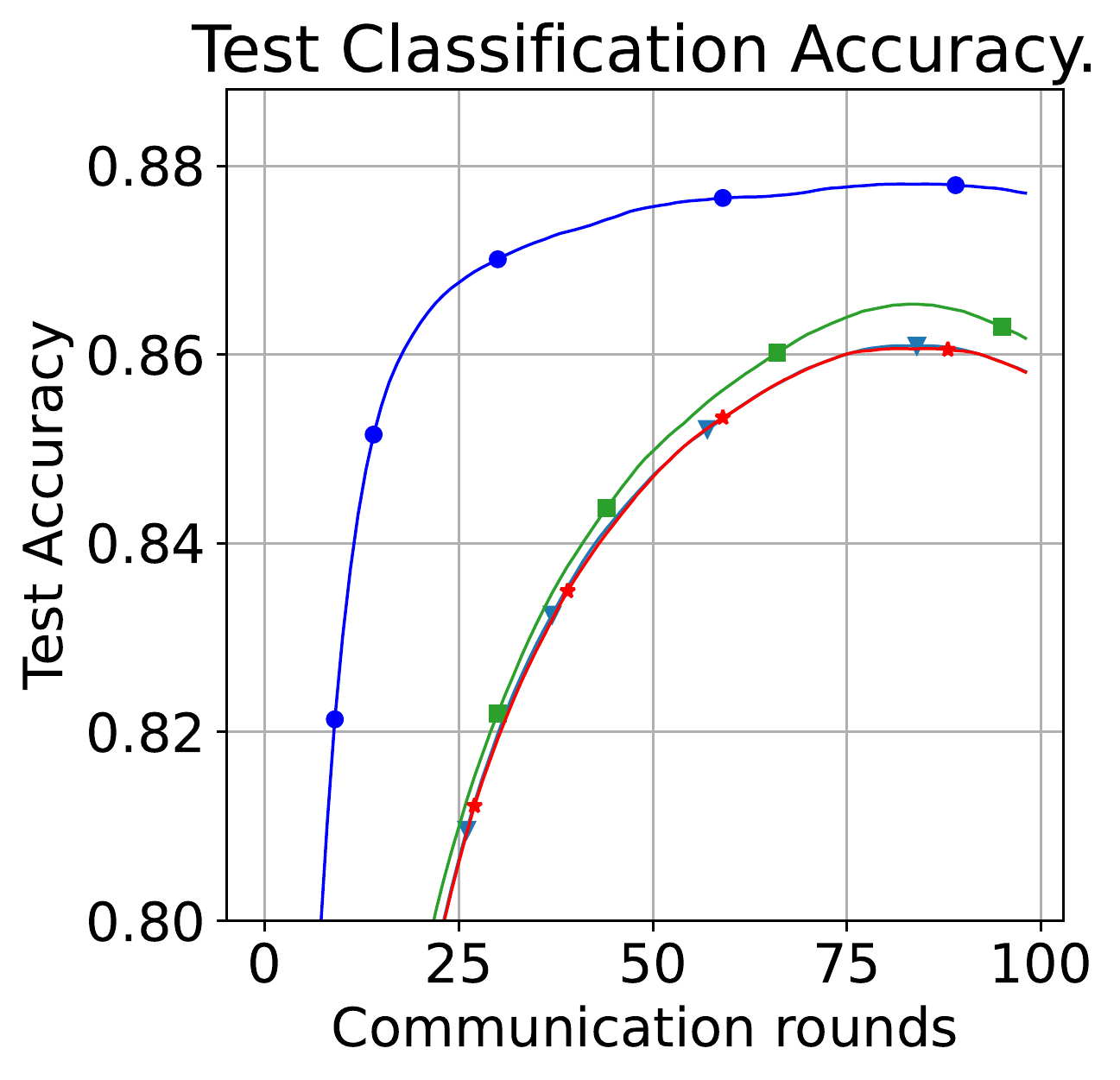}}
        \subcaption{$\alpha=10$}
        \end{subfigure} 
    \end{center}
    \vspace{-0.2in}
    \caption{Learning curves on \Mdata\ with limited param. sharing.}\label{fig:mnist-partial-sharing}
\end{figure} 
  
\begin{table}[htb!]
    \begin{center}
        \scalebox{0.7}{
            \setlength\tabcolsep{4pt}
            \begin{tabular}{lcccc}
                \toprule
                \multicolumn{1}{l}{}  &  & \multicolumn{1}{c}{\textbf{Top 1 Accuracy ($\%$)}}  \\ \hline 
                \multicolumn{1}{c}{Algorithms} 
                & \multicolumn{1}{c}{\Avg} 
                & \multicolumn{1}{c}{\textsc{\FDFL}} 
                & \multicolumn{1}{c}{\Fusion} 
                & \multicolumn{1}{c}{\approach} \\ \midrule 

                $\alpha$ = 0.05  & 59.67$\pm$0.76 & 58.83$\pm$0.62  & 59.62$\pm$0.84 & \textbf{63.60$\pm$0.63} \\
                $\alpha$ = 0.1 & 58.39$\pm$0.74 & 56.25$\pm$0.98  & 58.38$\pm$0.81  & \textbf{65.42$\pm$0.29} \\  
                $\alpha$ = 1 & 74.49$\pm$ 0.57 & 74.24$\pm$0.60 & 74.51$\pm$0.55  & \textbf{79.72$\pm$0.52}  \\ 
                $\alpha$ = 10 &86.35$\pm$0.60& 86.89$\pm$0.26  & 86.28$\pm$0.69  & \textbf{87.92$\pm$0.46} \\  
                \bottomrule
                \end{tabular}
        } 
    \vspace{-0.1in}
    \caption{Performance overview on \Mdata, by only sharing the last prediction layer.\label{table:partial-sharing}}
\end{center} 
\vspace{-0.1in}
\end{table}

\textbf{Effects of the generator's network architecture and sampling size:} 
Extended analysis has verified that \approach\ is \textit{{robust}} across different generator {network architectures} (Table~\ref{table:sensitivity-on-network}).
Moreover, sampling synthetic data from the generator only adds minor training workload to local users (Table ~\ref{table:sensitivity-on-network}).
The gain of \approach\ over \Avg\ is consistently remarkable given {different synthetic sample sizes}, whereas a sufficient number of synthetic samples brings even better performance (Figure~\ref{fig:samples-analysis}). 
%
Especially, in Table ~\ref{table:sensitivity-on-network}, we explored different dimensions for the input noise ($d_\epsilon$) and the hidden layer ($d_{h}$) of the generator while keeping its output layer dimension fixed (\ie\ the dimension of the feature space $\gZ$). 
Table~\ref{table:sensitivity-on-gen-samples} shows the training time for one local update, averaged across users and the communication rounds. $B_G$ denotes the number of \textit{synthetic} samples used for each mini-batch optimization. By default, we set $B_G=B$, and $B$ is the number of \textit{real} samples drawn from the local dataset (see Algorithm 1).

\vspace{-0.05in}
\subsection{Extensions to Flexible Parameter Sharing} \label{sec:exp-partial-sharing} 
\vspace{-0.1in}
Motivated to alleviate privacy and communication concerns, \approach~ is ready to benefit distributed learning without sharing entire model parameters.
To explore this potential, we conduct a case study on \Avg, \FDFL, and \approach, where user models share only the last prediction layer and keep their feature extraction layers localized.
Note that \Fusion\ is not designed to address FL with partial parameter sharing, which requires entire user models for KD. 
For a fair comparison, we modify \Fusion~to let it upload entire user models during the model aggregation phase, but disable the downloading of feature extractors, so that the server model can still be fine-tuned using the proxy data.

Results in Table \ref{table:partial-sharing} show that our approach consistently outperforms other baselines by a remarkable margin, the trend of which is more significant given high data heterogeneity (Figure~\ref{fig:mnist-partial-sharing}).
Its distinguished performance from \Fusion~verifies the efficacy of data-free distillation under this challenging scenario.
This promising results show that \approach~has the potential to further reduce communication workload, not only by fast convergence but also by a flexible parameter sharing strategy.

\vspace{-0.1in}
\section{Conclusions}
\vspace{-0.1in}
In this paper, we propose an FL paradigm that enables efficient knowledge distillation to address user heterogeneity without requiring any external data.
%
Extensive empirical experiments, guided by theoretical implications, have shown that our proposed approach can benefit federated learning with better generalization performance using less communication rounds, compared with the state-of-the-art.

\vspace{-0.1in}
\subsection*{Acknowledgments}
\vspace{-0.1in}
This research was jointly supported by the National Science Foundation IIS-1749940, the Office of Naval Research N00014-20-1-2382, and the National Institue on Aging RF1AG072449.

\bibliography{ref}

\begin{thebibliography}{55}
\providecommand{\natexlab}[1]{#1}
\providecommand{\url}[1]{\texttt{#1}}
\expandafter\ifx\csname urlstyle\endcsname\relax
  \providecommand{\doi}[1]{doi: #1}\else
  \providecommand{\doi}{doi: \begingroup \urlstyle{rm}\Url}\fi

\bibitem[Agarwal et~al.(2018)Agarwal, Suresh, Yu, Kumar, and
  Mcmahan]{agarwal2018cpsgd}
Agarwal, N., Suresh, A.~T., Yu, F., Kumar, S., and Mcmahan, H.~B.
\newblock cpsgd: Communication-efficient and differentially-private distributed
  sgd.
\newblock \emph{Advances in Neural Information Processing Systems}, 2018.

\bibitem[Aggarwal et~al.(2021)Aggarwal, Zhou, and Jain]{aggarwal2021fedface}
Aggarwal, D., Zhou, J., and Jain, A.~K.
\newblock Fedface: Collaborative learning of face recognition model.
\newblock \emph{Proceedings of International Joint Conference on Biometrics},
  2021.

\bibitem[Ammad-Ud-Din et~al.(2019)Ammad-Ud-Din, Ivannikova, Khan, Oyomno, Fu,
  Tan, and Flanagan]{ammad2019federated}
Ammad-Ud-Din, M., Ivannikova, E., Khan, S.~A., Oyomno, W., Fu, Q., Tan, K.~E.,
  and Flanagan, A.
\newblock Federated collaborative filtering for privacy-preserving personalized
  recommendation system.
\newblock \emph{arXiv preprint arXiv:1901.09888}, 2019.

\bibitem[Ba \& Caruana(2014)Ba and Caruana]{ba2014deep}
Ba, J. and Caruana, R.
\newblock Do deep nets really need to be deep?
\newblock \emph{Advances in neural information processing systems},
  27:\penalty0 2654--2662, 2014.

\bibitem[Ben-David et~al.(2007)Ben-David, Blitzer, Crammer, and
  Pereira]{ben2007analysis}
Ben-David, S., Blitzer, J., Crammer, K., and Pereira, F.
\newblock Analysis of representations for domain adaptation.
\newblock In \emph{Advances in neural information processing systems}, pp.\
  137--144, 2007.

\bibitem[Ben-David et~al.(2010)Ben-David, Blitzer, Crammer, Kulesza, Pereira,
  and Vaughan]{ben2010theory}
Ben-David, S., Blitzer, J., Crammer, K., Kulesza, A., Pereira, F., and Vaughan,
  J.~W.
\newblock A theory of learning from different domains.
\newblock \emph{Machine learning}, 79\penalty0 (1-2):\penalty0 151--175, 2010.

\bibitem[Blitzer et~al.(2008)Blitzer, Crammer, Kulesza, Pereira, and
  Wortman]{blitzer2008learning}
Blitzer, J., Crammer, K., Kulesza, A., Pereira, F., and Wortman, J.
\newblock Learning bounds for domain adaptation.
\newblock In \emph{Advances in neural information processing systems}, pp.\
  129--136, 2008.

\bibitem[Brown et~al.(2020)Brown, Mann, Ryder, Subbiah, Kaplan, Dhariwal,
  Neelakantan, Shyam, Sastry, Askell, et~al.]{brown2020language}
Brown, T.~B., Mann, B., Ryder, N., Subbiah, M., Kaplan, J., Dhariwal, P.,
  Neelakantan, A., Shyam, P., Sastry, G., Askell, A., et~al.
\newblock Language models are few-shot learners.
\newblock \emph{Advances in Neural Information Processing Systems}, 2020.

\bibitem[Buciluǎ et~al.(2006)Buciluǎ, Caruana, and
  Niculescu-Mizil]{bucilua2006model}
Buciluǎ, C., Caruana, R., and Niculescu-Mizil, A.
\newblock Model compression.
\newblock In \emph{Proceedings of the 12th ACM SIGKDD international conference
  on Knowledge discovery and data mining}, pp.\  535--541, 2006.

\bibitem[Caldas et~al.(2018)Caldas, Duddu, Wu, Li, Kone{\v{c}}n{\`y}, McMahan,
  Smith, and Talwalkar]{caldas2018leaf}
Caldas, S., Duddu, S. M.~K., Wu, P., Li, T., Kone{\v{c}}n{\`y}, J., McMahan,
  H.~B., Smith, V., and Talwalkar, A.
\newblock Leaf: A benchmark for federated settings.
\newblock \emph{arXiv preprint arXiv:1812.01097}, 2018.

\bibitem[Chen \& Chao(2021)Chen and Chao]{chenfedbe}
Chen, H.-Y. and Chao, W.-L.
\newblock Fedbe: Making bayesian model ensemble applicable to federated
  learning.
\newblock \emph{ICLR}, 2021.

\bibitem[Cohen et~al.(2017)Cohen, Afshar, Tapson, and
  Van~Schaik]{cohen2017emnist}
Cohen, G., Afshar, S., Tapson, J., and Van~Schaik, A.
\newblock Emnist: Extending mnist to handwritten letters.
\newblock In \emph{2017 International Joint Conference on Neural Networks
  (IJCNN)}, pp.\  2921--2926. IEEE, 2017.

\bibitem[Dinh et~al.(2020)Dinh, Tran, and Nguyen]{dinh2020personalized}
Dinh, C.~T., Tran, N.~H., and Nguyen, T.~D.
\newblock Personalized federated learning with moreau envelopes.
\newblock \emph{34th Conference on Neural Information Processing Systems
  (NeurIPS 2020)}, 2020.

\bibitem[Duchi et~al.(2014)Duchi, Jordan, and Wainwright]{duchi2014privacy}
Duchi, J.~C., Jordan, M.~I., and Wainwright, M.~J.
\newblock Privacy aware learning.
\newblock \emph{Journal of the ACM (JACM)}, 61\penalty0 (6):\penalty0 1--57,
  2014.

\bibitem[Fallah et~al.(2020)Fallah, Mokhtari, and
  Ozdaglar]{fallah2020personalized}
Fallah, A., Mokhtari, A., and Ozdaglar, A.
\newblock Personalized federated learning with theoretical guarantees: A
  model-agnostic meta-learning approach.
\newblock \emph{Advances in Neural Information Processing Systems}, 33, 2020.

\bibitem[Guha et~al.(2019)Guha, Talwalkar, and Smith]{guha2019one}
Guha, N., Talwalkar, A., and Smith, V.
\newblock One-shot federated learning.
\newblock \emph{arXiv preprint arXiv:1902.11175}, 2019.

\bibitem[Hard et~al.(2018)Hard, Rao, Mathews, Ramaswamy, Beaufays, Augenstein,
  Eichner, Kiddon, and Ramage]{hard2018federated}
Hard, A., Rao, K., Mathews, R., Ramaswamy, S., Beaufays, F., Augenstein, S.,
  Eichner, H., Kiddon, C., and Ramage, D.
\newblock Federated learning for mobile keyboard prediction.
\newblock \emph{arXiv preprint arXiv:1811.03604}, 2018.

\bibitem[He et~al.(2020)He, Annavaram, and Avestimehr]{he2020group}
He, C., Annavaram, M., and Avestimehr, S.
\newblock Group knowledge transfer: Federated learning of large cnns at the
  edge.
\newblock \emph{Advances in Neural Information Processing Systems}, 33, 2020.

\bibitem[He et~al.(2016)He, Zhang, Ren, and Sun]{he2016deep}
He, K., Zhang, X., Ren, S., and Sun, J.
\newblock Deep residual learning for image recognition.
\newblock In \emph{Proceedings of the IEEE conference on computer vision and
  pattern recognition}, pp.\  770--778, 2016.

\bibitem[Hinton et~al.(2015)Hinton, Vinyals, and Dean]{hinton2015distilling}
Hinton, G., Vinyals, O., and Dean, J.
\newblock Distilling the knowledge in a neural network.
\newblock \emph{arXiv preprint arXiv:1503.02531}, 2015.

\bibitem[Hsu et~al.(2019)Hsu, Qi, and Brown]{hsu2019measuring}
Hsu, T.-M.~H., Qi, H., and Brown, M.
\newblock Measuring the effects of non-identical data distribution for
  federated visual classification.
\newblock \emph{arXiv preprint arXiv:1909.06335}, 2019.

\bibitem[Jacobs et~al.(1991)Jacobs, Jordan, Nowlan, and
  Hinton]{jacobs1991adaptive}
Jacobs, R.~A., Jordan, M.~I., Nowlan, S.~J., and Hinton, G.~E.
\newblock Adaptive mixtures of local experts.
\newblock \emph{Neural computation}, 3\penalty0 (1):\penalty0 79--87, 1991.

\bibitem[Jeong et~al.(2018)Jeong, Oh, Kim, Park, Bennis, and
  Kim]{jeong2018communication}
Jeong, E., Oh, S., Kim, H., Park, J., Bennis, M., and Kim, S.-L.
\newblock Communication-efficient on-device machine learning: Federated
  distillation and augmentation under non-iid private data.
\newblock \emph{arXiv preprint arXiv:1811.11479}, 2018.

\bibitem[Kairouz et~al.(2019)Kairouz, McMahan, Avent, Bellet, Bennis, Bhagoji,
  Bonawitz, Charles, Cormode, Cummings, et~al.]{kairouz2019advances}
Kairouz, P., McMahan, H.~B., Avent, B., Bellet, A., Bennis, M., Bhagoji, A.~N.,
  Bonawitz, K., Charles, Z., Cormode, G., Cummings, R., et~al.
\newblock Advances and open problems in federated learning.
\newblock \emph{arXiv preprint arXiv:1912.04977}, 2019.

\bibitem[Karimireddy et~al.(2020)Karimireddy, Kale, Mohri, Reddi, Stich, and
  Suresh]{karimireddy2020scaffold}
Karimireddy, S.~P., Kale, S., Mohri, M., Reddi, S., Stich, S., and Suresh,
  A.~T.
\newblock Scaffold: Stochastic controlled averaging for federated learning.
\newblock In \emph{International Conference on Machine Learning}, pp.\
  5132--5143. PMLR, 2020.

\bibitem[Khoussainov et~al.(2005)Khoussainov, He{\ss}, and
  Kushmerick]{khoussainov2005ensembles}
Khoussainov, R., He{\ss}, A., and Kushmerick, N.
\newblock Ensembles of biased classifiers.
\newblock In \emph{Proceedings of the 22nd international conference on Machine
  learning}, pp.\  425--432, 2005.

\bibitem[Kingma \& Welling(2014)Kingma and Welling]{kingma2013auto}
Kingma, D.~P. and Welling, M.
\newblock Auto-encoding variational bayes.
\newblock \emph{ICLR}, 2014.

\bibitem[Kone{\v{c}}n{\`y} et~al.(2016)Kone{\v{c}}n{\`y}, McMahan, Yu,
  Richt{\'a}rik, Suresh, and Bacon]{konevcny2016federated}
Kone{\v{c}}n{\`y}, J., McMahan, H.~B., Yu, F.~X., Richt{\'a}rik, P., Suresh,
  A.~T., and Bacon, D.
\newblock Federated learning: Strategies for improving communication
  efficiency.
\newblock \emph{arXiv preprint arXiv:1610.05492}, 2016.

\bibitem[LeCun \& Cortes(2010)LeCun and
  Cortes]{lecun-mnisthandwrittendigit-2010}
LeCun, Y. and Cortes, C.
\newblock {MNIST} handwritten digit database.
\newblock 2010.
\newblock URL \url{http://yann.lecun.com/exdb/mnist/}.

\bibitem[Li \& Wang(2019)Li and Wang]{li2019fedmd}
Li, D. and Wang, J.
\newblock Fedmd: Heterogenous federated learning via model distillation.
\newblock \emph{arXiv preprint arXiv:1910.03581}, 2019.

\bibitem[Li et~al.(2020{\natexlab{a}})Li, Sahu, Talwalkar, and
  Smith]{li2020federatedsurvey}
Li, T., Sahu, A.~K., Talwalkar, A., and Smith, V.
\newblock Federated learning: Challenges, methods, and future directions.
\newblock \emph{IEEE Signal Processing Magazine}, 37\penalty0 (3):\penalty0
  50--60, 2020{\natexlab{a}}.

\bibitem[Li et~al.(2020{\natexlab{b}})Li, Sahu, Zaheer, Sanjabi, Talwalkar, and
  Smith]{li2020federated}
Li, T., Sahu, A.~K., Zaheer, M., Sanjabi, M., Talwalkar, A., and Smith, V.
\newblock Federated optimization in heterogeneous networks.
\newblock \emph{Proceedings of Machine Learning and Systems}, 2:\penalty0
  429--450, 2020{\natexlab{b}}.

\bibitem[Li et~al.(2020{\natexlab{c}})Li, Huang, Yang, Wang, and
  Zhang]{li2019convergence}
Li, X., Huang, K., Yang, W., Wang, S., and Zhang, Z.
\newblock On the convergence of fedavg on non-iid data.
\newblock \emph{ICLR}, 2020{\natexlab{c}}.

\bibitem[Lin et~al.(2020)Lin, Kong, Stich, and Jaggi]{lin2020ensemble}
Lin, T., Kong, L., Stich, S.~U., and Jaggi, M.
\newblock Ensemble distillation for robust model fusion in federated learning.
\newblock \emph{arXiv preprint arXiv:2006.07242}, 2020.

\bibitem[Liu et~al.(2015)Liu, Luo, Wang, and Tang]{liu2015faceattributes}
Liu, Z., Luo, P., Wang, X., and Tang, X.
\newblock Deep learning face attributes in the wild.
\newblock In \emph{Proceedings of International Conference on Computer Vision
  (ICCV)}, December 2015.

\bibitem[Lopes et~al.(2017)Lopes, Fenu, and Starner]{lopes2017data}
Lopes, R.~G., Fenu, S., and Starner, T.
\newblock Data-free knowledge distillation for deep neural networks.
\newblock \emph{arXiv preprint arXiv:1710.07535}, 2017.

\bibitem[Mansour et~al.(2020)Mansour, Mohri, Ro, and Suresh]{mansour2020three}
Mansour, Y., Mohri, M., Ro, J., and Suresh, A.~T.
\newblock Three approaches for personalization with applications to federated
  learning.
\newblock \emph{arXiv preprint arXiv:2002.10619}, 2020.

\bibitem[Mao et~al.(2019)Mao, Lee, Tseng, Ma, and Yang]{mao2019mode}
Mao, Q., Lee, H.-Y., Tseng, H.-Y., Ma, S., and Yang, M.-H.
\newblock Mode seeking generative adversarial networks for diverse image
  synthesis.
\newblock In \emph{Proceedings of the IEEE/CVF Conference on Computer Vision
  and Pattern Recognition}, pp.\  1429--1437, 2019.

\bibitem[McMahan et~al.(2017)McMahan, Moore, Ramage, Hampson, and
  y~Arcas]{mcmahan2017communication}
McMahan, B., Moore, E., Ramage, D., Hampson, S., and y~Arcas, B.~A.
\newblock Communication-efficient learning of deep networks from decentralized
  data.
\newblock In \emph{Artificial Intelligence and Statistics}, pp.\  1273--1282.
  PMLR, 2017.

\bibitem[Micaelli \& Storkey(2019)Micaelli and Storkey]{micaelli2019zero}
Micaelli, P. and Storkey, A.~J.
\newblock Zero-shot knowledge transfer via adversarial belief matching.
\newblock In \emph{Advances in Neural Information Processing Systems}, pp.\
  9551--9561, 2019.

\bibitem[Peng et~al.(2019)Peng, Huang, Zhu, and Saenko]{peng2019federated}
Peng, X., Huang, Z., Zhu, Y., and Saenko, K.
\newblock Federated adversarial domain adaptation.
\newblock \emph{arXiv preprint arXiv:1911.02054}, 2019.

\bibitem[Qu et~al.(2020)Qu, Lin, Kalagnanam, Li, Zhou, and
  Zhou]{qu2020federated}
Qu, Z., Lin, K., Kalagnanam, J., Li, Z., Zhou, J., and Zhou, Z.
\newblock Federated learning's blessing: Fedavg has linear speedup.
\newblock \emph{arXiv preprint arXiv:2007.05690}, 2020.

\bibitem[Sattler(2021)]{sattler2021fedaux}
Sattler, F. e.~a.
\newblock Fedaux: Leveraging unlabeled auxiliary data in federated learning.
\newblock \emph{arXiv preprint arXiv:2102.02514}, 2021.

\bibitem[Sener \& Savarese(2018)Sener and Savarese]{sener2017active}
Sener, O. and Savarese, S.
\newblock Active learning for convolutional neural networks: A core-set
  approach.
\newblock \emph{ICLR}, 2018.

\bibitem[Seo et~al.(2020)Seo, Park, Oh, Bennis, and Kim]{seo2020federated}
Seo, H., Park, J., Oh, S., Bennis, M., and Kim, S.-L.
\newblock Federated knowledge distillation.
\newblock \emph{arXiv preprint arXiv:2011.02367}, 2020.

\bibitem[Sheller et~al.(2020)Sheller, Edwards, Reina, Martin, Pati, Kotrotsou,
  Milchenko, Xu, Marcus, Colen, et~al.]{sheller2020federated}
Sheller, M.~J., Edwards, B., Reina, G.~A., Martin, J., Pati, S., Kotrotsou, A.,
  Milchenko, M., Xu, W., Marcus, D., Colen, R.~R., et~al.
\newblock Federated learning in medicine: facilitating multi-institutional
  collaborations without sharing patient data.
\newblock \emph{Scientific reports}, 10\penalty0 (1):\penalty0 1--12, 2020.

\bibitem[Sun \& Lyu(2020)Sun and Lyu]{sun2020federated}
Sun, L. and Lyu, L.
\newblock Federated model distillation with noise-free differential privacy.
\newblock \emph{arXiv preprint arXiv:2009.05537}, 2020.

\bibitem[Tsang et~al.(2005)Tsang, Kwok, and Cheung]{tsang2005core}
Tsang, I.~W., Kwok, J.~T., and Cheung, P.-M.
\newblock Core vector machines: Fast svm training on very large data sets.
\newblock \emph{Journal of Machine Learning Research}, 6\penalty0
  (Apr):\penalty0 363--392, 2005.

\bibitem[Wang et~al.(2020{\natexlab{a}})Wang, Sreenivasan, Rajput, Vishwakarma,
  Agarwal, Sohn, Lee, and Papailiopoulos]{wang2020attack}
Wang, H., Sreenivasan, K., Rajput, S., Vishwakarma, H., Agarwal, S., Sohn,
  J.-y., Lee, K., and Papailiopoulos, D.
\newblock Attack of the tails: Yes, you really can backdoor federated learning.
\newblock \emph{arXiv preprint arXiv:2007.05084}, 2020{\natexlab{a}}.

\bibitem[Wang et~al.(2020{\natexlab{b}})Wang, Yurochkin, Sun, Papailiopoulos,
  and Khazaeni]{wang2020federated}
Wang, H., Yurochkin, M., Sun, Y., Papailiopoulos, D., and Khazaeni, Y.
\newblock Federated learning with matched averaging.
\newblock \emph{ICLR}, 2020{\natexlab{b}}.

\bibitem[Wang et~al.(2018)Wang, Zhu, Torralba, and Efros]{wang2018dataset}
Wang, T., Zhu, J.-Y., Torralba, A., and Efros, A.~A.
\newblock Dataset distillation.
\newblock \emph{arXiv preprint arXiv:1811.10959}, 2018.

\bibitem[Yoo et~al.(2019)Yoo, Cho, Kim, and Kang]{yoo2019knowledge}
Yoo, J., Cho, M., Kim, T., and Kang, U.
\newblock Knowledge extraction with no observable data.
\newblock In \emph{Advances in Neural Information Processing Systems}, pp.\
  2705--2714, 2019.

\bibitem[Yoon(2021)]{yoonfedmix}
Yoon, T. e.~a.
\newblock Fedmix: Approximation of mixup under mean augmented federated
  learning.
\newblock \emph{ICLR}, 2021.

\bibitem[Yuan \& Li(2019)Yuan and Li]{yuan2019convergence}
Yuan, X.-T. and Li, P.
\newblock On convergence of distributed approximate newton methods:
  Globalization, sharper bounds and beyond.
\newblock \emph{arXiv preprint arXiv:1908.02246}, 2019.

\bibitem[Yurochkin et~al.(2019)Yurochkin, Agarwal, Ghosh, Greenewald, Hoang,
  and Khazaeni]{yurochkin2019bayesian}
Yurochkin, M., Agarwal, M., Ghosh, S., Greenewald, K., Hoang, N., and Khazaeni,
  Y.
\newblock Bayesian nonparametric federated learning of neural networks.
\newblock In \emph{International Conference on Machine Learning}, pp.\
  7252--7261. PMLR, 2019.

\end{thebibliography}


\onecolumn
\icmltitle{Data-Free Knowledge Distillation for Heterogeneous Federated Learning: Supplementary Document}

\section{Theoretical Derivations}
\subsection{Notations and Preliminaries}
Let $\gX \subset \R^p$ be the \textit{input} space, $\gZ \subset \sR^d$ be the {\textit{latent}} feature space, and $\gY \subset \sR$ be the output space.
$\gR:  \gX \to \gZ$ denotes a \textbf{\textit{representation function}} that maps inputs into features.
$\gT$ denotes a \textbf{\textit{domain}} (or \textit{task}), which consists of a data distribution $\gD$ over $\gX$ and a ground-truth {\textit{labeling}} function $c^*: \gX \to \gY $.
%
%
Given a domain $\gT := \langle \gD, c^* \rangle$ and a representation function $\gR$, 
we use $\Dz$ to denote the \textit{induced image} of $\gD$ under $\gR$~\cite{ben2007analysis}, \st~given a probability event $\gB$,
\begin{align*}
\E_{z \sim \Dz}[\gB(z)]=\E_{x\sim \gD}[\gB(\gR(x))].
\end{align*}
Accordingly, ${\cz}$ denotes the {\textit{induced}} labeling function under $\gR$:
\begin{align*}
    \cz(z) := \E_{x \sim D}\left[ c^*(x) | \gR(x) = z \right].
\end{align*}
Let $h: \gZ \to \gY $ denote a \textbf{\textit{hypothesis}} that maps features to predicted labels, and $\gH \subseteq \{h: \gZ \to \gY \}$ denote a hypothesis class.
For our analysis, we assume the FL tasks are for binary classification, \ie~$\gY=\{0,1\}$, and the loss function is 0-1 bounded, with $l(\hat{y},y)=|\hat{y}-y|$. Same assumptions have been adopted by various prior art~\cite{ben2007analysis,blitzer2008learning,ben2010theory,lin2020ensemble,ben2007analysis}.

Given two distributions $\gD$ and $\gD'$, $d_\gH(\gD, \gD')$ is defined as the $\gH$-divergence between $\gD$ and $\gD'$, \ie:
\begin{align*}
d_\gH(\gD, \gD'):=2 \sup_{\gA \in \gA_{\gH}} | \rp_{\gD}(\gA) - \rp_{\gD'}(\gA) | \}, 
\end{align*}
where $\gA_{\gH}$ is a set of measurable subsets under $\gD$ and $\gD'$ for certain $h \in \gH$.
Moreover, $\gH \triangle \gH$ is defined as the symmetric difference hypothesis space~\cite{blitzer2008learning}, \ie:
\begin{align*}
    \gH \triangle \gH := \left \{  h(z) \oplus h'(z), h, h' \in \gH \right \}
\end{align*}
where $\oplus$ denotes the XOR operator, so that $h(z) \oplus h'(z)$ indicates that $h$ and $h'$ disagrees with each other.
Accordingly, $\gA_\HH$ is a set of measurable subsets for $\forall~h(z) \oplus h'(z) \in \HH$.
Then $d_{\gH \triangle \gH}(\cdot, \cdot)$ is defined as the \textit{distribution divergence}  induced by the symmetric difference hypothesis space~\cite{blitzer2008learning}:
\begin{align*}
   \dH(\gD, \gD'):=2 \sup_{\gA \in \gA_{\HH}} | \rp_{\gD}(\gA) - \rp_{\gD'}(\gA) | \}.
\end{align*}

Specifically, let $\gD, \gD'$ be two arbitrary distributions on the input space $\gX$, and let $\Dz, \Dz'$ be their induced images over $\gR$.
Then based on the definition of $\dH(\cdot,\cdot)$, one can have: 
\begin{align*} 
\dH (\Dz,\Dz') &= 2 \sup_{ \gA \in \gA_\HH }  \left \vert \E_{x \sim \gD}\left[\rp(\gA(\gR(x))) \right] - \E_{x \sim \gD'} \left[\rp(\gA(\gR(x))) \right] \right \vert \\
 &= 2 \sup_{ \gA \in \gA_\HH }  \left \vert \E_{z \sim \Dz}\left[\rp(\gA(x)) \right] - \E_{z \sim \Dz'} \left[\rp(\gA(z)) \right] \right \vert \\
&= 2 \sup_{\gA \in \gA_{\HH}} | \rp_{\Dz}(\gA) - \rp_{\Dz'}(\gA) | \}. 
\end{align*}

\subsection{Derivations of Remark 1}

\begin{remark*}
    %
    Let $p(y)$ be the prior distribution of labels, and $r(z|y): \gY \to \gZ$ be the conditional distribution derived from generator $G_\vw$.
    Then regulating a user model $\vtheta_k$ using samples from  $r(z|y)$ can minimize the conditional KL-divergence between two distributions, derived from the user and from the generator, respectively: 
    \begin{align*} 
        \max_{\vtheta_k}~\E_{y \sim p(y),z \sim r(z|y)} \left[ \log p(y|z;{\vtheta_k}) \right]   \equiv \min_{\vtheta_k}~{\KL}[ r(z|y) \Vert p(z|y;\vtheta_k)],
    \end{align*} 
\end{remark*}
\begin{proof}
    Expanding the KL-divergence, we have 
    \begin{align*}
        \because ~& {\KL}[ r(z|y) \Vert p(z|y;\vtheta_k)] \equiv \E_{y \sim p(y)} \left[ \E_{z \sim r(z|y)} \left[ \log \frac{ r(z|y)}{ p(z|y;\vtheta)} \right] \right] \\
        & = \E_{y \sim p(y)} \E_{z \sim r(z|y)} \left[ \log r(z|y) \right ] - \E_{y \sim p(y)} \E_{z \sim r(z|y)} \left[   { \log p(z|y;\vtheta)} \right]\\
         & = - \underset{\text{constant w.r.t~} \vtheta_k}{H(r(z|y))} - \E_{y \sim p(y)} \E_{z\sim r(z|y)} [ \log p(z|y;\vtheta)]. 
     \end{align*}
     where $H(r(z|y))$ is constant w.r.t $\vtheta_k$. Therefore when optimizing $\vtheta_k$ we have:
     \begin{align*}
      ~& \min_{\vtheta_k}~{\KL}[ r(z|y) \Vert p(z|y;\vtheta_k)] \\
        \equiv & \min_{\vtheta_k}~ -\E_{y \sim p(y),z \sim r(z|y)} \left[  \log p(z|y;{\vtheta_k}) \right]  \\
         \equiv & \max_{\vtheta_k} \E_{y \sim p(y)} \E_{z\sim r(z|y)} [ \log \frac{p(y|z;\vtheta_k)  p(z)}{p(y)}   ] \\
         \equiv & \max_{\vtheta_k} \E_{y \sim p(y)} \E_{z\sim r(z|y)} [ \log p(y|z;\vtheta_k) + {\log p(z)- \log p(y)}]  \\  
         \equiv & \max_{\vtheta_k} \E_{y \sim p(y)} \E_{z\sim r(z|y)} [ \log p(y|z;\vtheta_k) ] . 
    \end{align*}
   where $H(r(z|y))$ denotes the entropy of the probability distribution $r(z|y)$ which is \textit{not} optimizable w.r.t $\vtheta_k$, and $p(z|y;\vtheta_k):=\frac{p(y|z;\vtheta_k)p(z)}{p(y)}$ is defined as the probability that the input representation to the predictor is $z$ if it yields a label $y$.
\end{proof}


\subsection{Derivations of Theorem 1}

Before deriving Theorem 1, we first present an upper-bound for the generalization performance from prior art~\cite{ben2007analysis}, which analyzes the role of a feature representation function in the context of \textit{domain adaptation}:


\begin{lemma}  \label{lemma:da-bounds}
    \textnormal{\textbf{Generalization Bounds for Domain Adaptation \cite{ben2007analysis,blitzer2008learning}:}}

    Let $\gT_S$ and $\gT_T$ be the source and target domains, whose data distributions are $\gD_S$ and $\gD_T$.
    Let $\gR: \gX \to \gZ$ be a feature representation function, and $\Dz_S, \Dz_T$ be the induced images of $\gD_S$ and $\gD_T$  over $\gR$, respectively. 
     Let $\gH$ be a set of hypothesis with VC-dimension $d$.  
    Then with probability at least $1 - \delta$, $\forall~h \in \gH$:
    %
    \begin{align} 
    \gL_{\gT_T}(h) &\leq \Le_{\gT_S}(h) %
     + \sqrt{\frac{4}{m} \left( d \log\frac{2 e m }{d} + \log \frac{4}{\delta} \right) }  
     + \dH (\Dz_{S}, \Dz_{T}) + \lambda,
    \end{align}
\end{lemma}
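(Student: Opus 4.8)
The plan is to establish the bound in two independent stages and then glue them with a single high-probability event. The first stage is a purely \emph{deterministic} domain-adaptation inequality that controls the true target risk by the true source risk, the $\HH$-divergence, and the joint-optimal error $\lambda$. The second is a standard VC-dimension uniform-convergence argument that trades the true source risk for its empirical counterpart $\Le_{\gT_S}(h)$ at the cost of the $\sqrt{\cdot}$ term. Since the statement lives over the latent space, I would work throughout with the induced images $\Dz_S,\Dz_T$ and the induced labeling $\cz$ under $\gR$; the preliminary calculation already shows that $\dH(\Dz_S,\Dz_T)$ is precisely the $\HH$-divergence evaluated on these induced distributions, so the whole problem collapses to a classical adaptation bound for hypotheses $h:\gZ\to\gY$.

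For the deterministic stage I would introduce the pairwise disagreement $\gL_{\Dz}(h,h'):=\E_{z\sim\Dz}\,|h(z)-h'(z)|$, noting that in the binary $0$--$1$ setting $\gL_{\gT}(h)=\gL_{\Dz}(h,\cz)$ (conditioning on $z=\gR(x)$ turns $\E_x|h(z)-c^*(x)|$ into $|h(z)-\cz(z)|$ exactly, because $h$ is $\{0,1\}$-valued). The pointwise inequality $|h(z)-h''(z)|\le|h(z)-h'(z)|+|h'(z)-h''(z)|$ yields, after taking expectations, a triangle inequality for $\gL_{\Dz}(\cdot,\cdot)$. Letting $h^\ast=\argmin_{h}(\gL_{\gT_S}(h)+\gL_{\gT_T}(h))$ with $\lambda=\gL_{\gT_S}(h^\ast)+\gL_{\gT_T}(h^\ast)$, I would chain
\begin{align*}
\gL_{\gT_T}(h)=\gL_{\Dz_T}(h,\cz)\le \gL_{\Dz_T}(h,h^\ast)+\gL_{\Dz_T}(h^\ast,\cz),
\end{align*}
bound $\gL_{\Dz_T}(h,h^\ast)$ by $\gL_{\Dz_S}(h,h^\ast)$ plus a cross-distribution gap (next step), and expand $\gL_{\Dz_S}(h,h^\ast)\le\gL_{\gT_S}(h)+\gL_{\gT_S}(h^\ast)$ by the triangle inequality once more. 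Absorbing the two $h^\ast$-terms into $\lambda$ gives $\gL_{\gT_T}(h)\le\gL_{\gT_S}(h)+(\text{gap})+\lambda$.

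The crux, and the step I expect to be the main obstacle, is bounding the gap $|\gL_{\Dz_T}(h,h^\ast)-\gL_{\Dz_S}(h,h^\ast)|$ by the divergence. The key observation is that for binary hypotheses $|h(z)-h^\ast(z)|$ is the indicator of the region $\gA=\{z:h(z)\ne h^\ast(z)\}$, i.e.\ of $h\oplus h^\ast\in\HH$, so $\gA\in\gA_\HH$ and the gap equals $|\rp_{\Dz_T}(\gA)-\rp_{\Dz_S}(\gA)|$. This is at most $\sup_{\gA\in\gA_\HH}|\rp_{\Dz_T}(\gA)-\rp_{\Dz_S}(\gA)|=\tfrac12\dH(\Dz_S,\Dz_T)\le\dH(\Dz_S,\Dz_T)$, which is exactly the divergence term appearing in the bound. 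Recognizing the disagreement set as a member of the symmetric-difference class is precisely what forces $\dH$ (rather than the plain $\gH$-divergence) to be the right quantity, and keeping the looser factor of $1$ instead of $\tfrac12$ only makes the stated inequality easier to satisfy.

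Finally, for the statistical stage I would invoke the standard VC uniform-convergence estimate: with $\gH$ of VC-dimension $d$ and $m$ i.i.d.\ source samples, with probability at least $1-\delta$ we have $\gL_{\gT_S}(h)\le\Le_{\gT_S}(h)+\sqrt{\tfrac{4}{m}(d\log\tfrac{2em}{d}+\log\tfrac{4}{\delta})}$ uniformly over $h\in\gH$, where the Sauer--Shelah growth-function bound $\Pi_\gH(m)\le(em/d)^d$ supplies the $d\log(2em/d)$ exponent. Substituting this into the deterministic inequality on the same $1-\delta$ event produces the claimed bound. The remaining checks, that the induced-image expectations are well defined and that $\lambda$ is finite, follow immediately from the definitions in the preliminaries, so no additional probabilistic machinery is needed beyond these two classical ingredients.
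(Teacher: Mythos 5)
Your proposal is correct, but there is nothing in the paper to compare it against: the paper imports this lemma verbatim from prior art (Ben-David et al.; Blitzer et al.) with a citation and no proof, using it only as a black box to derive its Theorem 1. What you have reconstructed is essentially the standard proof of the cited result, and it is sound. The two-stage decomposition (a deterministic adaptation inequality via the triangle inequality for the disagreement pseudo-metric $\gL_{\Dz}(h,h') = \E_{z\sim\Dz}|h(z)-h'(z)|$, followed by classical VC uniform convergence on the source to swap $\gL_{\gT_S}(h)$ for $\Le_{\gT_S}(h)$) is exactly how the references argue, and your identification of the key step is on target: since $|h(z)-h^\ast(z)|$ is the indicator of the disagreement region, that region lies in $\gA_{\HH}$, so the cross-distribution gap is bounded by $\tfrac12 \dH(\Dz_S,\Dz_T)$ --- which is why the symmetric-difference divergence, and not the plain $\gH$-divergence, is the right quantity. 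Two of your side remarks are worth keeping: first, the factor you obtain is $\tfrac12$, so your argument in fact proves a slightly stronger inequality than the one stated (the paper's coefficient $1$ is a loosening); second, your observation that $\gL_{\gT}(h)=\E_{z\sim\Dz}\left[|h(z)-\cz(z)|\right]$ holds exactly for $\{0,1\}$-valued $h$ (despite $\cz$ being $[0,1]$-valued) is the correct justification for working entirely over the latent space, and the triangle inequality needs nothing more than $|\cdot|$ on real values, so the chain involving $\cz$ goes through even though the induced labeling functions of source and target may differ.
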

where $e$ is the base of the natural logarithm, $\Le_{\gT_S}$(h) is the empirical risk of the source domain given $m$ observable samples, and $\lambda = \min_{h\in \gH} \left( \gL_{\gT_T}(h) + \gL_{\gT_S}(h) \right )$ is the optimal risk on the two domains. 

One insight from Lemma \ref{lemma:da-bounds} is that a good representation function plays a tradeoff between minimizing the empirical risk ($\Le_{\gT_S}(h)$) and the induced distributional discrepancy ( $d_{\gH \triangle \gH}(\Dz_{S}, \Dz_{T})$).
%
Based on Lemma~\ref{lemma:da-bounds}, one can establish Theorem 1 as the following:
\begin{theorem*} \textnormal{(\textbf{Generalization Bounds for FL})}
    Consider an FL system with $K$ users.
    Let $\gT_k=\langle \gD_k, c^* \rangle$ and $\gT=\langle \gD, c^* \rangle$ be the $k$-th local domain and the global domain, respectively.
    Let $\gR: \gX \to \gZ$ be a feature extraction function that is simultaneously shared among users.
    Let $h_k$ denote the hypothesis learned on domain $\gT_k$, and $h=\frac{1}{K}\sum_{k=1}^K h_k$ be the global ensemble of user predictors.
    Then with probability at least $1 - \delta$:
    \vspace{-0.1in}
    \hspace{-0.2in}
    \begin{align}
    &\gL_{\gT}(h) 
    \leq \frac{1}{K} \sum_{k\in [K]} \Le_{\gT_k}(h_k) %
    + \frac{1}{K}\sum_{k \in [K]}  ( \dH (\Dz_{k}, \Dz) + \lambda_k  )
    + \sqrt{\frac{4}{m} \left( d \log\frac{2 e m }{d} + \log \frac{4K}{\delta} \right) },\nonumber %
    \end{align}
    %
    where $\Le_{\gT_k}(h_k)$ is the empirical risk of $h_k$, $\lambda_k := \min_h(\gL_{\gT_k}(h) + \gL_{\gT}(h))$ denotes an oracle performance on $\gT_k$ and $\gT$, and $\Dz_k$ and $\Dz$  is the \textbf{induced} image of $\gD_k$ and $\gD$ from $\gR$, respectively, \st~
    $\E_{z \sim \Dz_k}[\gB(z)]=\E_{x\sim \gD_k}[\gB(\gR(x))]$ given a probability event $\gB$, and so for $\Dz$. 
\end{theorem*}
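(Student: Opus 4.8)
The plan is to reduce the $K$-user ensemble bound to $K$ separate source/target generalization bounds, each handled by Lemma~\ref{lemma:da-bounds}, and then recombine them. The two ingredients are (i) a convexity argument that pulls the ensemble average outside the risk, and (ii) a union bound accounting for the $K$-fold application of the single-domain guarantee.

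First I would exploit the convexity of the loss. By the standing assumption $l(\hat{y},y)=|\hat{y}-y|$, the map $\hat{y}\mapsto l(\hat{y},y)$ is convex, so for the ensemble $h=\frac{1}{K}\sum_{k} h_k$ Jensen's inequality gives, pointwise in $z\sim\Dz$, $l\!\left(\frac{1}{K}\sum_k h_k(z),\cz(z)\right)\le \frac{1}{K}\sum_k l(h_k(z),\cz(z))$. Taking the expectation over the induced image $\Dz$ then yields $\gL_{\gT}(h)\le \frac{1}{K}\sum_k \gL_{\gT}(h_k)$, which decouples the ensemble into a sum of individual risks of the $h_k$ evaluated on the global domain $\gT$.

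Next I would bound each $\gL_{\gT}(h_k)$ by invoking Lemma~\ref{lemma:da-bounds} with source domain $\gT_S=\gT_k$ and target domain $\gT_T=\gT$, applied to the hypothesis $h_k$. This yields, for each $k$, $\gL_{\gT}(h_k)\le \Le_{\gT_k}(h_k)+\dH(\Dz_k,\Dz)+\lambda_k+\sqrt{\frac{4}{m}\left(d\log\frac{2em}{d}+\log\frac{4}{\delta'}\right)}$, where $\lambda_k=\min_h(\gL_{\gT_k}(h)+\gL_{\gT}(h))$ is exactly the oracle term in the statement. Each such inequality holds with probability at least $1-\delta'$; to make all $K$ of them hold simultaneously I would set $\delta'=\delta/K$ and take a union bound over the $K$ users, so the joint event has probability at least $1-\delta$ while the confidence term sharpens to $\log\frac{4K}{\delta}$. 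Averaging the $K$ resulting inequalities and chaining with the Jensen step from the previous paragraph gives the claimed bound.

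The calculations here are routine; the only points requiring care are the bookkeeping in the union bound (which is precisely what converts the $\log\frac{4}{\delta}$ of Lemma~\ref{lemma:da-bounds} into the $\log\frac{4K}{\delta}$ of the theorem) and the verification that the convexity step is legitimate at the level of the induced representations, that is, that the ensemble $h=\frac{1}{K}\sum_k h_k$ must be read as an average of predictors on $\gZ$ so that Jensen applies to $l$ directly. I expect this interpretation of the ensemble to be the main conceptual subtlety; once it is fixed, no estimates are needed beyond those already supplied by Lemma~\ref{lemma:da-bounds}.
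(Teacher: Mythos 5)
Your proposal is correct and follows essentially the same route as the paper's own proof: Jensen's inequality (convexity of the loss) to bound $\gL_{\gT}(h)$ by $\frac{1}{K}\sum_k \gL_{\gT}(h_k)$, then Lemma~\ref{lemma:da-bounds} applied per user with source $\gT_k$ and target $\gT$ at confidence level $\delta/K$, combined via a union bound that produces the $\log\frac{4K}{\delta}$ term. The only cosmetic difference is that the paper writes the union bound as a chain of failure-probability inequalities rather than fixing $\delta'=\delta/K$ up front, but the logic is identical.
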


\begin{proof}
    By treating each one of the local domains $k \in [K]$ as the \textit{source} and the global domain as the \textit{target}, one can have that, $\forall~\delta > 0$, with probability $1 - \frac{\delta} {K}$:
    \begin{align*}
        \gL_\gT(h_k) \leq \Le_{\gT_k}(h_k)+ \dH(\Dz_k, \Dz) + \lambda_k + \sqrt{\frac{4}{m} \left(d \log\frac{2 e m }{d}  + \log \frac{4K}{\delta} \right) }.
    \end{align*}
    Also, due to the convexity of risk function and Jesen inequality, one can have: $$\gL_{\gT}(h) \equiv \gL_{\gT} \left(\frac{1}{K}\sum_{k \in [K]} h_k \right)  \leq  \frac{1}{K} \sum_{k \in [K]} \gL_{\gT} (h_k).$$
    Therefore,
    \begin{align*}
        & \rp \left[ \gL_\gT(h) >  \frac{1}{K}  \sum_{k\in [K]} \left( \Le_{\gT_k}(h_k) %
        + \sum_{k \in [K]}  ( \dH (\Dz_{k}, \Dz) + \lambda_k  )
        + \sqrt{\frac{4}{m}\left( d \log\frac{2 e m }{d} + \log \frac{4K}{\delta}\right) } \right)
         \right] \\
         \leq &\rp \left[ \frac{1}{K} \sum_{k \in [K]} \gL_{\gT} (h_k) > 
         \frac{1}{K} \sum_{k\in [K]} \left ( \Le_{\gT_k}(h_k) %
        + \sum_{k \in [K]}  ( \dH (\Dz_{k}, \Dz) + \lambda_k  )
        + \sqrt{\frac{4}{m}\left( d \log\frac{2 e m }{d} + \log \frac{4K}{\delta}\right) }
          \right) \right]\\
       \leq &\rp \left[ \bigvee_{k \in [K]}  \gL_\gT(h_k)  > 
          \Le_{\gT_k}(h_k) +  \dH (\Dz_{k}, \Dz) + \lambda_k  
        + \sqrt{\frac{4}{m}\left( d \log\frac{2 e m }{d} + \log \frac{4K}{\delta}\right) }
       \right] \\%
        \leq & \sum_{k \in [K]} \frac{\delta}{K} = \delta.
    \end{align*}
\end{proof}
%
Theorem 1 shows that the performance of the aggregated hypothesis is upper-bounded by:
1) the local performance of each user hypothesis ($\Le_{\gT_k}(h_k)$), 
2) the dissimilarity between the global and local distributions over the feature space ($\dH (\Dz_{k}, \Dz)$), 
3) the oracle performance ($\lambda_k$), 
and 4) the numerical constraints regarding the number of empirical samples $m$ and the VC-dimension $d$.

\subsection{Derivations of Corollary 1}

\begin{corollary*} 
    Let $\gT$, $\gT_k$, $\gR$ defined as in Theorem 1. 
    $\gD_\rA$ denotes an \textbf{augmented} data distribution, and $\gD_k'=\frac{1}{2}(\gD_k + \gD_\rA)$ is a \textbf{mixture} of distributions.
    Accordingly, $\Dz_\rA$ and $\Dz_k'$ denote the \textbf{induced} image of $\gD_\rA$ and  $\gD_k'$  over $\gR$, respectively.
    Let $\De_k'= \De_k \cup \De_\rA$ be an empirical dataset of $\gD_k'$, with $|\De_k|$=m, $|\De_k'| = |\De_k| + |\De_\rA|= m'$ .  
    Assume the discrepancy between $\Dz_\rA$ and $\Dz$ is bounded, s.t $\exists~ \epsilon >0, \dH (\Dz_A, \Dz) \leq \epsilon $,
    then with probability $1 - \delta$:
        \begin{align} \label{eq:corollary}
            \hspace{-0.1in}
            \gL_{\gT}(h) & \leq \frac{1}{K} \sum_k \gL_{\gT_k'}(h_k) %
        + \frac{1}{K}\sum_k  ( \dH (\Dz_k', \Dz) ) + \frac{1}{K}\sum_{k}\lambda_k'    
        + \sqrt{\frac{4}{m'} \left( d \log\frac{2 e m' }{d} + \log \frac{4K}{\delta} \right) }, 
        \end{align}
    where $\gT_k'=\{\gD_k', c^* \}$ is the updated local domain, 
    $\lambda_k' = \min_h(\gL_{\gT_k'}(h) + \gL_{\gT}(h))$ denotes the oracle performance,
    %
    %
    and  $\dH (\Dz_k', \Dz) \leq  \dH (\Dz_k, \Dz)$ when $\epsilon$ is small.
\end{corollary*}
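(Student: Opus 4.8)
The plan is to apply Theorem~\ref{theorem:fl-bounds} directly with each local domain replaced by the augmented domain $\gT_k'=\langle \gD_k', c^* \rangle$. Since $\gT_k'$ is itself a legitimate domain sharing the same labeling function $c^*$, and since the empirical dataset $\De_k'$ now contains $|\De_k'|=m' > m$ samples, Theorem~\ref{theorem:fl-bounds} applies verbatim to the collection $\{\gT_k'\}_{k=1}^K$. This immediately yields the stated bound with $\Le_{\gT_k'}(h_k)$ in place of $\Le_{\gT_k}(h_k)$, $\dH(\Dz_k',\Dz)$ in place of $\dH(\Dz_k,\Dz)$, $\lambda_k'$ in place of $\lambda_k$, and $m'$ in place of $m$ in the final penalty term. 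So the inequality in \Eqref{eq:corollary} is not the main content; it is essentially a restatement of Theorem~1 on the augmented domains. The substance of the corollary lies in the two \emph{comparison} claims: that the divergence term does not increase, $\dH(\Dz_k',\Dz)\le\dH(\Dz_k,\Dz)$, and that the numerical penalty strictly decreases because $m'>m$.

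First I would establish the numerical comparison, which is the easy half. The function $m \mapsto \sqrt{\tfrac{4}{m}\left(d\log\tfrac{2em}{d}+\log\tfrac{4K}{\delta}\right)}$ is monotonically decreasing in $m$ for $m$ in the relevant range (the $1/m$ prefactor dominates the slowly growing logarithm), so $m'>m$ gives the strict inequality on the last term directly. Next I would handle the divergence term, which is where the $\gD_k'=\tfrac12(\gD_k+\gD_\rA)$ mixture structure and the assumption $\dH(\Dz_\rA,\Dz)\le\epsilon$ enter. The key step is to exploit that $\dH(\cdot,\cdot)$, being defined through a supremum of differences of probabilities $|\rp_{\Dz}(\gA)-\rp_{\Dz'}(\gA)|$, behaves like a metric-type quantity that is linear in its arguments under mixing: since $\Dz_k'=\tfrac12(\Dz_k+\Dz_\rA)$ is the induced image of the mixture (which follows because the induced-image operator is linear in the distribution), for any measurable $\gA$ one has $\rp_{\Dz_k'}(\gA)=\tfrac12\rp_{\Dz_k}(\gA)+\tfrac12\rp_{\Dz_\rA}(\gA)$. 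Plugging this into the definition and applying the triangle inequality yields
\begin{align*}
\dH(\Dz_k',\Dz) \;\le\; \tfrac12\,\dH(\Dz_k,\Dz) + \tfrac12\,\dH(\Dz_\rA,\Dz) \;\le\; \tfrac12\,\dH(\Dz_k,\Dz)+\tfrac12\epsilon.
\end{align*}
From here, $\dH(\Dz_k',\Dz)\le\dH(\Dz_k,\Dz)$ holds precisely when $\epsilon \le \dH(\Dz_k,\Dz)$, which is the meaning of the qualifier ``when $\epsilon$ is small.''

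The main obstacle, and the step deserving the most care, is the mixing inequality for $\dH$. I would need to verify that $\dH(\tfrac12(\Dz_k+\Dz_\rA),\Dz)$ decomposes as above, which rests on two facts: (i) the induced image of a mixture of distributions is the mixture of the induced images, so that $\Dz_k'=\tfrac12(\Dz_k+\Dz_\rA)$ is valid in the feature space; and (ii) the supremum defining $\dH$ satisfies $\sup_\gA|a(\gA)-c(\gA)|\le\sup_\gA|a(\gA)-b(\gA)|+\sup_\gA|b(\gA)-c(\gA)|$ after writing $\rp_{\Dz_k'}(\gA)-\rp_{\Dz}(\gA)=\tfrac12(\rp_{\Dz_k}(\gA)-\rp_{\Dz}(\gA))+\tfrac12(\rp_{\Dz_\rA}(\gA)-\rp_{\Dz}(\gA))$ and taking absolute values term by term before the supremum. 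Both are routine but must be stated cleanly; the only genuine subtlety is tracking that the same witnessing set $\gA\in\gA_\HH$ is used throughout so that the triangle-inequality bookkeeping over the supremum is legitimate. Once the mixing bound is in hand, the corollary follows by combining it with the monotonicity of the numerical term and the direct application of Theorem~1 to the augmented domains.
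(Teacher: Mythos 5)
Your proposal matches the paper's proof essentially step for step: the bound itself is obtained by applying Theorem~1 verbatim to the augmented domains $\gT_k'$, and the substantive claim $\dH(\Dz_k',\Dz)\le\dH(\Dz_k,\Dz)$ is derived exactly as in the paper, via the mixture decomposition $\rp_{\Dz_k'}(\gA)=\tfrac12\rp_{\Dz_k}(\gA)+\tfrac12\rp_{\Dz_\rA}(\gA)$ and the triangle inequality inside the supremum, followed by the smallness condition on $\epsilon$. The only nitpick is that ``precisely when $\epsilon\le\dH(\Dz_k,\Dz)$'' overstates matters---since the mixing bound is one-sided, this condition is sufficient but not necessary---whereas the paper states the corresponding sufficient condition $\epsilon\le\min_k\dH(\Dz_k,\Dz)$.
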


\begin{proof}
    \Eqref{eq:corollary} can be directly derived by Theorem 1.
    We now focus on analyzing the relation between $\dH (\Dz_k, \Dz)$ and $\dH (\Dz_k', \Dz)$, which is the data dissimilarity \textbf{\textit{before}} and \textbf{\textit{after}} data augmentation using samples from distribution $\gD_A$, respectively.

Based on the definition of $\dH (\cdot, \cdot) $, one can derive that:
\begin{align*}
    \hspace{-0.6in}
    & \dH (\Dz_k', \Dz)  \\
    = & 2 \sup_{\gA \in \gA_\HH}   \left \vert \E_{z \sim \Dz_k'}\left[\rp(\gA(z) ) \right] - \E_{z \sim \Dz} \left[\rp(\gA(z) ) \right] \right \vert \\
     =  & 2 \sup_{\gA \in \gA_\HH}  \left \vert \E_{z \sim \frac{1}{2} (\Dz_k + \Dz_A)}\left[ \rp(\gA(z)) \right] - \E_{z \sim \Dz} \left[\rp(\gA(z) ) \right] \right \vert \\
     = & 2 \sup_{\gA \in \gA_\HH}  \left \vert \frac{1}{2}\E_{z \sim \Dz_K }\left[ \rp(\gA(z)) \right]  + \frac{1}{2}\E_{z \sim \Dz_A }\left[ \rp(\gA(z)) \right]  - \E_{z \sim \Dz} \left[\rp(\gA(z) ) \right] \right \vert \\
     \leq &  \sup_{\gA \in \gA_\HH}  \left \vert  \E_{z \sim \Dz_K }\left[ \rp(\gA(z)) \right]   - \E_{z \sim \Dz} \left[\rp(\gA(z) ) \right] \right \vert  +  \sup_{\gA \in \gA_\HH}  \left \vert  \E_{z \sim \Dz_A }\left[ \rp(\gA(z)) \right]  - \E_{z \sim \Dz} \left[\rp(\gA(z) ) \right] \right \vert \\  
     = & \frac{1}{2} \dH (\Dz_k,\Dz) + \frac{1}{2} \dH(\Dz_A, \Dz).
\end{align*}
It is clear that $\frac{1}{2} \dH(\Dz_A, \Dz)$, which is bounded by $\epsilon$, affects the dissimilarity between the \textit{induced} image of local and the global distribution,  therefore plays a key role in upper-bounding the global performance ($\gL_\gT(h)$ in \Eqref{eq:corollary}). 
Next, we discuss different scenarios when FL can benefit from such augmented data, and when the quality of augmented distribution $\gD_A$ can limit the generalization performance of the aggregated model. 

\textbf{$\gD_A$ can benefit local users when $\epsilon$ is small:}
To see this, one can assume that: $$\dH (\Dz_A, \Dz) = \epsilon \leq \min_{k} \dH (\Dz_k, \Dz),$$ of which the intuition is that, after feature mapping, the discrepancy between the augmented distribution and the global distribution   
is smaller than the discrepancy between an individual user and the global.
Based on this assumption, one can conclude that $\forall~ \gT_k \in \vgT$:
\begin{align*}
    \dH (\Dz_k', \Dz) = & \frac{1}{2} \dH (\Dz_k,\Dz) + \frac{1}{2} \dH(\Dz_A, \Dz) \\
    \leq & \frac{1}{2} \dH (\Dz_k,\Dz) + \min_{j} \dH (\Dz_j, \Dz) \\ 
    \leq  &  \dH (\Dz_k,\Dz), 
\end{align*}
Therefore, a small $\dH (\Dz_A, \Dz)$ benefits local users w.r.t their generalization performance, by both reducing the data discrepancy and enriching the empirical samples, in that: 
\begin{align*}
    \gL_\gT(h_k) \leq \gL_{\gT_k'}(h_k)  + \lambda_k' + 
    \underbrace{\leq \dH (\Dz_k',\Dz) }_{\leq \dH(\Dz_k, \Dz)}
    + \underbrace{ \sqrt{\frac{4}{m'} \left( d \log\frac{2 e m' }{d} + \log \frac{4}{\delta} \right) } }_{\leq \sqrt{\frac{4}{m} ( d \log\frac{2 e m }{d} + \log \frac{4}{\delta} ) } } \text{\hspace{0.3in} ( Derived from Lemma \ref{lemma:da-bounds}).}\
\end{align*}

\textbf{$\gD_A$ has positive effects on the generalization performance when $\epsilon$ is moderate:}
Instead, one might as well assume that $$\dH (\Dz_A, \Dz) = \epsilon \leq \frac{1}{K}\sum_{k=1}^K \dH (\Dz_k, \Dz),$$ which implies that, after feature mapping over $\gR$, the dissimilarity between $\gD_A$ and the global distribution $\gD$ is at least as small as the \textit{average} dissimilarity between local users and the global.
Based on this assumption, one can derive that:

$$
   \sum_{k} \dH (\Dz_k', \Dz) \leq \sum_k \dH(\Dz_k, \Dz),~ 
    \sqrt{\frac{4}{m'} \left( d \log\frac{2 e m' }{d} + \log \frac{4}{\delta} \right) } \leq  \sqrt{\frac{4}{m} \left( d \log\frac{2 e m}{d} + \log \frac{4}{\delta} \right) },
$$

which can still contribute to a tighter upper-bound for the global performance in \Eqref{eq:corollary}, compared with not using the augmented data.

Conversely, when $\epsilon$ is over-large, which implies that $\gD_A$ is not relevant to the original FL task, it may have negative impacts on the generalization performance.
\end{proof}

\section{Extended Experiments} \label{appendix:experiments}

We first discuss some practical considerations for implementing our algorithm: 
\begin{itemize}
    \item{\textbf{Weighting  user models:}} User models vary in their ability to predict certain labels over  others due to their statistical heterogeneity.
    Therefore, we use the number of training labels available to users to summarize a weight matrix
    $\mLambda=\{ \lambda_k^c | c \in \gY,  k \in \{ 1, 2, \cdots, K\} \}, $ $\st~\forall c,i,j,~\frac{\lambda^c_i}{\lambda^c_j} = \frac{n^c_i}{n^c_j}$ indicates the ratio of training samples for label $c$ between two users $i$ and $j$, and $\sum_{k} \lambda_k^c = 1~\forall c \in \gY$. 
    We then apply this weight matrix to adjust the generator objective as the following:
    \begin{small}
    \begin{align*}
        \min_{\vw} J(\vw):= \E_{y \sim \pe(y)} \E_{z \sim G_\vw(z|y)}\left[\lambda_{k}^{y}  l \left(\sigma \left(\frac{1}{K} \sum\nolimits_{k=1}^{K}  g(z;\vtheta^p_k ) \right), y \right) \right].
    \end{align*}
    \end{small}
    We found that this weighted objective can further mitigate the impact of negative ensemble, especially when a teacher model is too weak to predict certain labels due to lacking training samples of that category.
    \item{\textbf{Stochastic generative learning:}} Built upon prior arts on generative learning~\cite{kingma2013auto}, we use an auxiliary noise vector with dimension $d_n$ to infer the desirable feature representation for a given label $y$, \st~  $z \sim G_{\vw}(\cdot|y) \equiv  G_{\vw}(y, \epsilon | \epsilon \sim \gN(0, I))$. 
    To further increase the diversity of the generator output, we also leverage the idea of \textit{diversity loss} from prior work~\cite{mao2019mode} to train the generator model.
\end{itemize}

\subsection{Prototype Results}\label{sec:prototype-detail}
\begin{wraptable}{r}{6cm} 
    \begin{center}
    \vspace{-0.2in}
    \begin{tabular}{lcccc} 
        \toprule 
            & User 1 & User 2 & User 3 & Oracle  \\
    Before  &    97.1 &    81.3    &    81.2    &  \multirow{2}{*}{98.4}  \\
    After  &   98.6     &   98.3     &  98.2      &          
   \\
   \bottomrule                  
\end{tabular} 
    \vspace{-0.1in}
    \hspace{0.3in} \caption{Accuracy (\%) before and after KD. \label{table:prototype}} 
    \end{center}
    \end{wraptable} 
    We adopt an one-round FL setting for the prototype experiment, for which the dataset distributions of local users, as well as their model decision boundaries \textit{before} and \textit{after} knowledge distillation, are illustrated in Figure \ref{appendix-fig:data-distribution-verbose}.
    Accuracy of user models on the global dataset is also summarized in Table \ref{table:prototype}, from which one can observe that the generalization performance of user models have been notably improved by the distilled knowledge. 
\vspace{-0.1in}
\begin{figure*}[hbt!]  
    \begin{center}
        \hspace{-0.1in}
        \begin{subfigure}[b]{0.8\textwidth}
            \centerline{\includegraphics[width=\columnwidth]{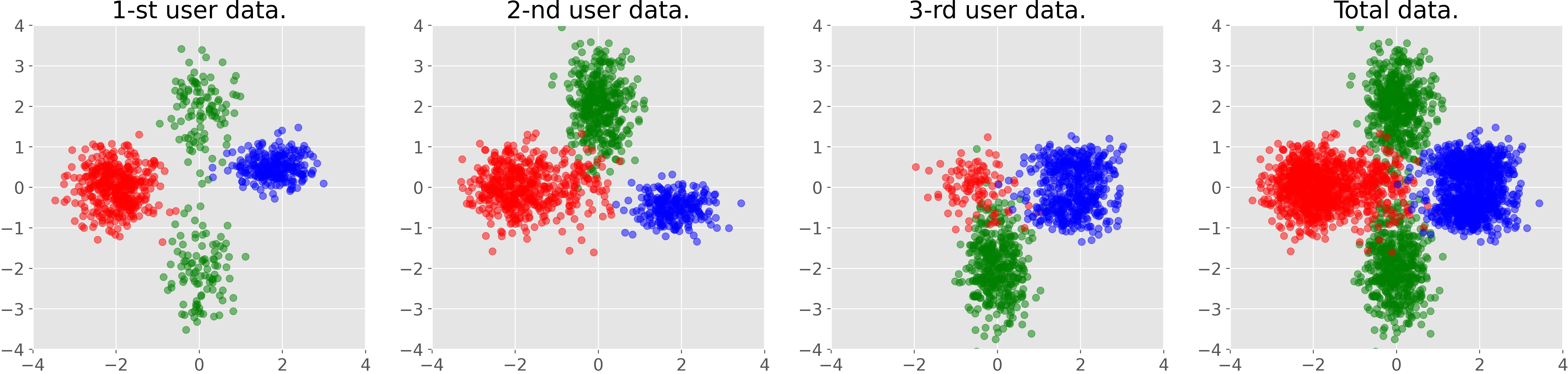}}
        \subcaption{Local user data distribution and total data distribution.}
        \end{subfigure} 
 
        \begin{subfigure}[b]{0.8\textwidth}
            \centerline{\includegraphics[width=\columnwidth]{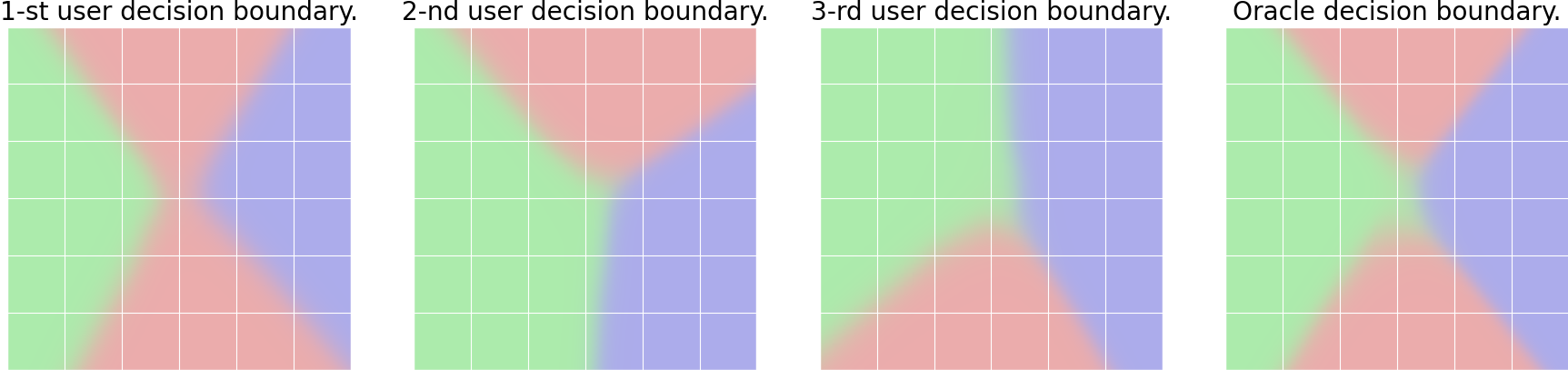}}
        \subcaption{User models generate biased decision boundaries \textbf{before} KD, provided with incomplete local data.}
        \end{subfigure} 
        
        \begin{subfigure}[b]{0.8\textwidth}
            \centerline{\includegraphics[width=\columnwidth]{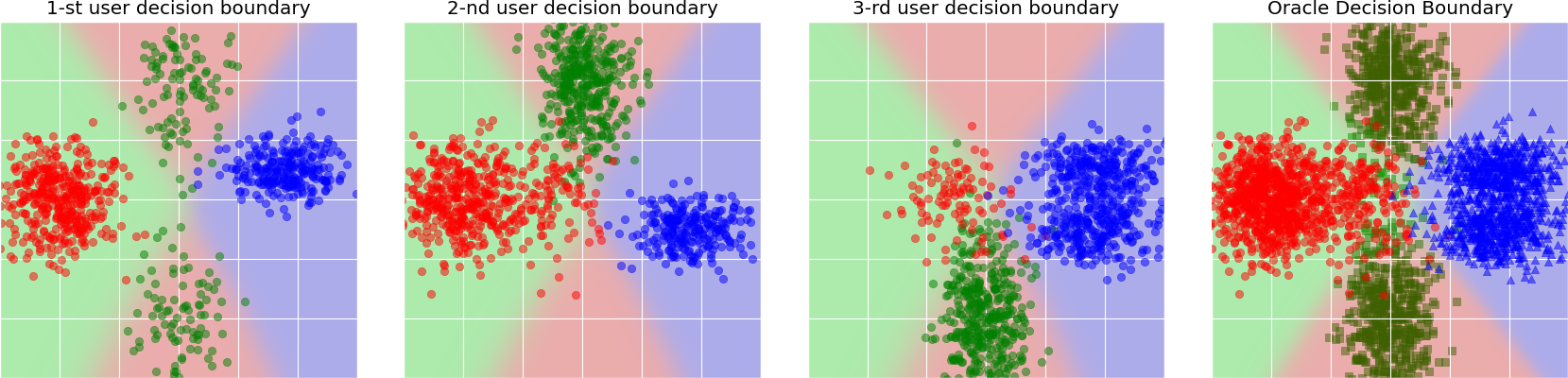}}
        \subcaption{Decision Boundaries of user models are improved \textbf{after} KD.}
        \end{subfigure}  
    \end{center}
    \vspace{-0.1in}
    \caption{Knowledge distillation process for the prototype experiment.}\label{appendix-fig:data-distribution-verbose}
\end{figure*}

\subsection{Experimental Setup} \label{appendix:experiment-setup}
\vspace{-0.1in}
We provide the network architecture for the generator and the classifier in  Table \ref{appendix-table:generator} and Table \ref{appendix-table:network}.
For the generator $G_\vw$, we adopt a two-MLP layer network. It takes a noise vector $\epsilon$ and an one-hot label vector $y$ as the input, which, after a hidden layer with dimension $d_h$, outputs a  feature representation with dimension $d$.
For the classifier, we adopt a network architecture with a CNN module followed by a MLP module.
Hyperparameter settings for the experiments are provided in Table \ref{appendix-table:run}.
%
\begin{table}[htb!]
    \begin{center}
        \hspace{-0.4in}
        \begin{minipage}{.4\linewidth}
    \begin{tabular}{ccc} 
   Dataset & Hyperparameter & Value \\ 
    \toprule 
    \multirow{1}{*}{\Cdata}   & $d_n, d_h, d$  & 32, 128, 32 \\
        \midrule
    \multirow{1}{*}{  \Mdata \& \Edata  }   & $d_n, d_h, d$  & 32, 256, 32  \\
   \bottomrule                  
\end{tabular}
    \caption{Network architecture for the generator $G_\vw$.\label{appendix-table:generator}} 
\end{minipage} 
\hspace{0.5in}
    \begin{minipage}{.45\linewidth}
\begin{tabular}{ccc} 
    Dataset & Hyperparameter & Value \\ 
     \toprule 
     \multirow{2}{*}{\Cdata}   & CNN Module  & [16, M, 32, M, 64] \\
         & MLP Module  &   [784, 32]   \\
         \midrule 
     \multirow{2}{*}{  \begin{tabular}[c]{@{}c@{}}\Mdata \\ \& \Edata \end{tabular}  }   & CNN Module  & [6, 16] \\
         & MLP Module  &   [784, 32]   \\
    \bottomrule                  
 \end{tabular}
    \caption{Network architecture for the classification model.  \label{appendix-table:network}} 
\end{minipage} 
\end{center}
\end{table}

\begin{table}[htb]
    \begin{center}
       \scalebox{1}{
    \begin{tabular}{clc} 
    & Hyperparameter & Value \\ 
    \toprule 
    \multirow{7}{*}{  Shared Parameters }   & Learning rate  & 0.01 \\
        & Optimizer  &  sgd   \\
        & Local update steps ($T$)  &  20   \\
        & Batch size ($B$) & 32   \\
        & Communication rounds &  200   \\
        & \# of total users  &  20   \\
        & \# of active users  &  10   \\
        \midrule
    \multirow{2}{*}{} \Fusion~  & Ensemble Optimizer  &  adam   \\
        & Generator learning rate  & $10^{-4}$ \\
        & Ensemble batch size  &  128   \\
        \midrule
    \multirow{2}{*}{} \approach~  & Generator Optimizer  &  adam   \\
        & Generator learning rate  & $10^{-4}$ \\
        & Generator inference size  &  128   \\
        & User distillation batch size  &  32   \\
        \midrule 
    \multirow{1}{*}{} \FD \& \FDFL~  & Distillation coefficient &  0.1   \\
        \midrule 
    \multirow{1}{*}{} \Prox~  & Proximal coefficient &  0.1   \\
   \bottomrule                  
\end{tabular}
       }
\caption{We use the above configurations for experiments unless mentioned otherwise.\label{appendix-table:run}} 
\end{center}
\end{table}

 \subsection{\approach~with Partial Parameter Sharing}\label{sec:algorithm-partial}
 %
    Algorithm \ref{alg:fedGen-partial} summarizes an variant approach of \approach\ for a specific FL setting, where only the last prediction layer is shared among users while keeping the feature extraction layers localized.
 %
 
 \begin{small}
     \begin{algorithm}[htb!]
        \caption{\approach\ with Partial Parameter Sharing}
        \label{alg:fedGen-partial}
     \begin{algorithmic}[1]
        \STATE {\bfseries Require:} Tasks $\gT_k, k \in \{1,\cdots, K\}$;\\
                 \hspace{0.1in} Global predictor $\vtheta^p$, local parameters $\{\vtheta_k=[\vtheta_k^f; \vtheta_k^p]\}_{k=1}^K$;\\
                 \hspace{0.1in} Generator parameter $\vw$; $\pe(y)$ uniformly initialized;\\ 
                 \hspace{0.1in} Learning rate $\alpha$, $\beta$, local steps $T$, batch size $B$, local label counter $c_k$. 
        \REPEAT 
         \STATE{Server selects active users $\gA$ uniformly at random, then broadcast $\vw, \vtheta^p$, $\pe(y)$ to  $\gA$. }
         
        \FOR{all user $k \in \gA$ in parallel}
            \STATE{$\vtheta_k^p \leftarrow \vtheta^p$,}
            \FOR{$t=1$, $\dots, T$}  
                 \STATE{\begin{small} $\{ x_i, y_i \}_{i=1}^B \sim \gT_k$, $\{ \ze_i \sim G_\vw(\cdot|\ye_i),\ye_i \sim \pe(y) \}_{i=1}^B $.\end{small}}
                \STATE{Update label counter $c_k$.}%
                \STATE{\hspace{-0.2in} $\vtheta_k \leftarrow \vtheta_k - \beta \nabla_{\vtheta_k} J(\vtheta_k).$ \hspace{0.2in}  }
                 
            \ENDFOR
             
            \STATE{User sends $\vtheta_k^p$, $c_k$ back to server.}
         \ENDFOR
          \STATE{Server updates $\vtheta^p \leftarrow \frac{1}{|\gA|}\sum_{k \in \gA} \vtheta_k^p$, and $\pe(y)$ based on $\{c_k\}_{k\in\gA}$.} 
          \STATE{$\vw \leftarrow \vw - \alpha \nabla_{\vw} J(\vw).$ }  
        \UNTIL{training~stop}
     \end{algorithmic}
     \end{algorithm}
     \end{small}
 
 

 \subsection{Extended Experimental Results} \label{appendix:exp-learning-curves}
 We elaborate the learning curves trained on the \Mdata, \Cdata, and \Edata\ dataset in Figure \ref{appendix-fig:mnist-curves}, Figure \ref{appendix-fig:celeb-curves}, and Figure \ref{appendix-fig:emnist-curves}, respectively, with their performance  summarized in Table \ref{appendix:performance-overview}.

 \begin{figure*}[hbt!]  
     \begin{center}
         \hspace{-0.1in}
         \begin{subfigure}[b]{0.2\textwidth}
             \centerline{\includegraphics[width=\columnwidth]{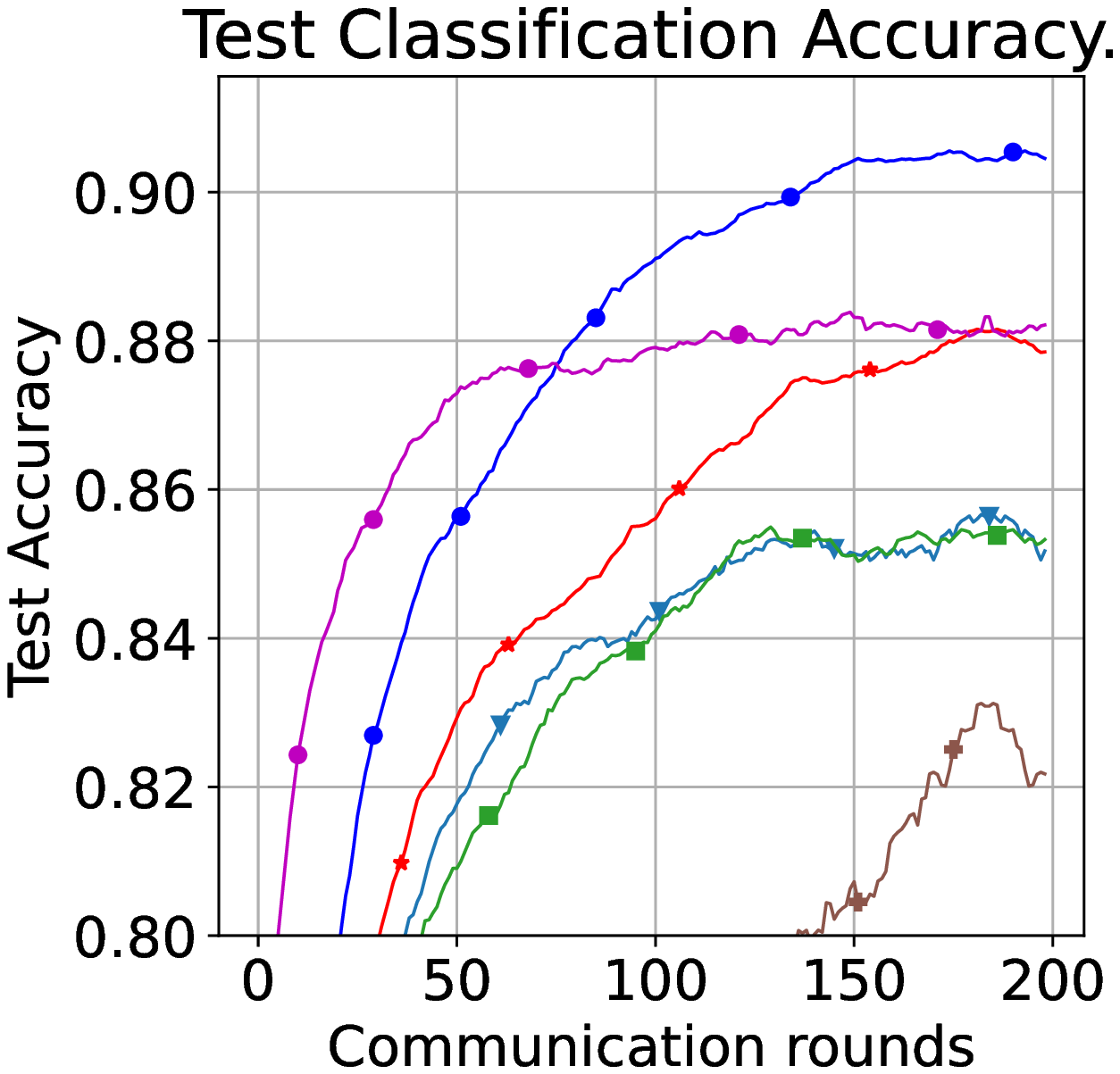}}
         \subcaption{$\alpha=0.05$}
         \end{subfigure} 
         \begin{subfigure}[b]{0.29\textwidth}
             \centerline{\includegraphics[width=\columnwidth]{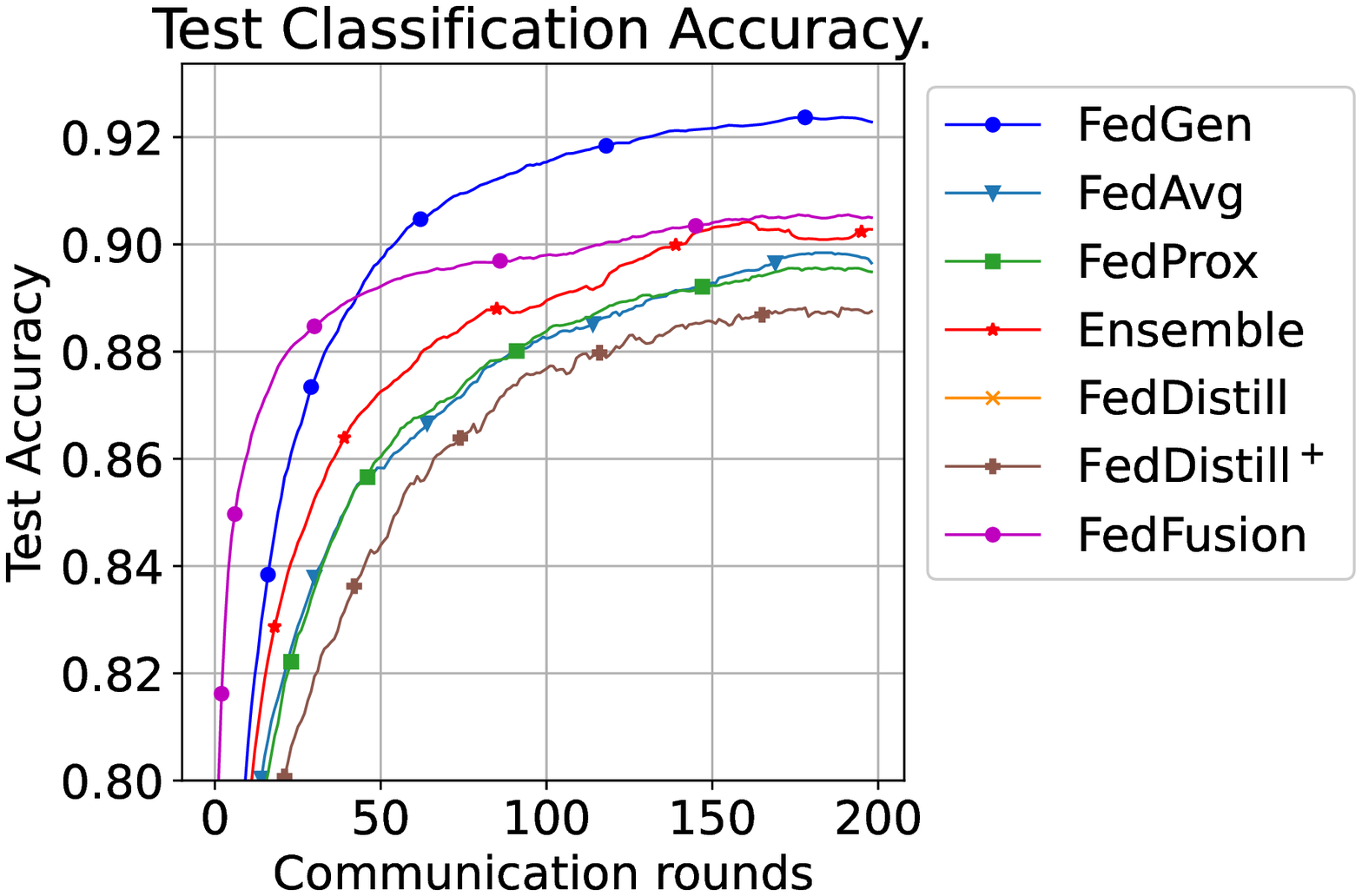}}
         \subcaption{$\alpha=0.1$}
         \end{subfigure} 
         \begin{subfigure}[b]{0.2\textwidth}
             \centerline{\includegraphics[width=\columnwidth]{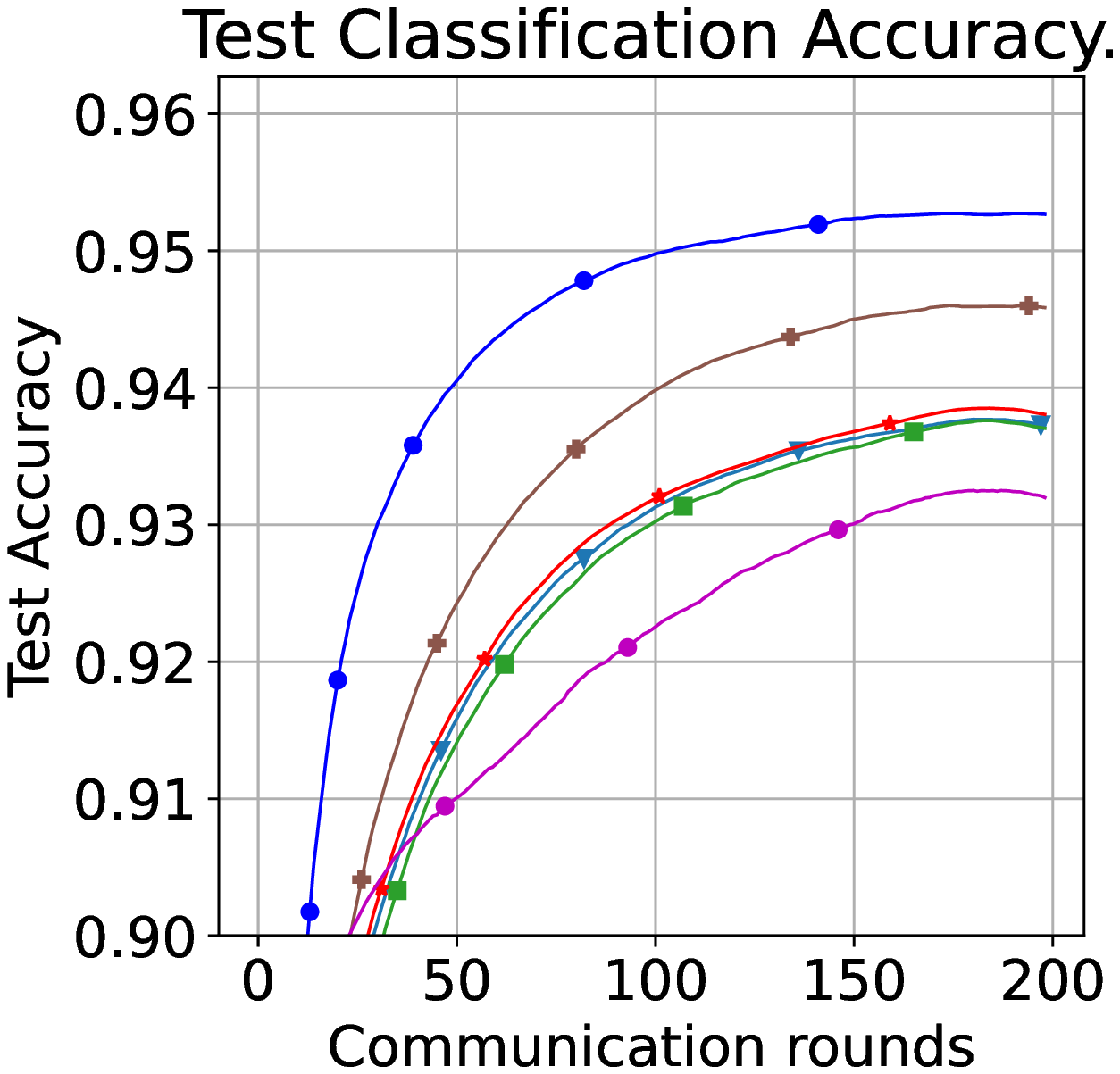}}
         \subcaption{$\alpha=1$}
         \end{subfigure} 
         \begin{subfigure}[b]{0.2\textwidth}
             \centerline{\includegraphics[width=\columnwidth]{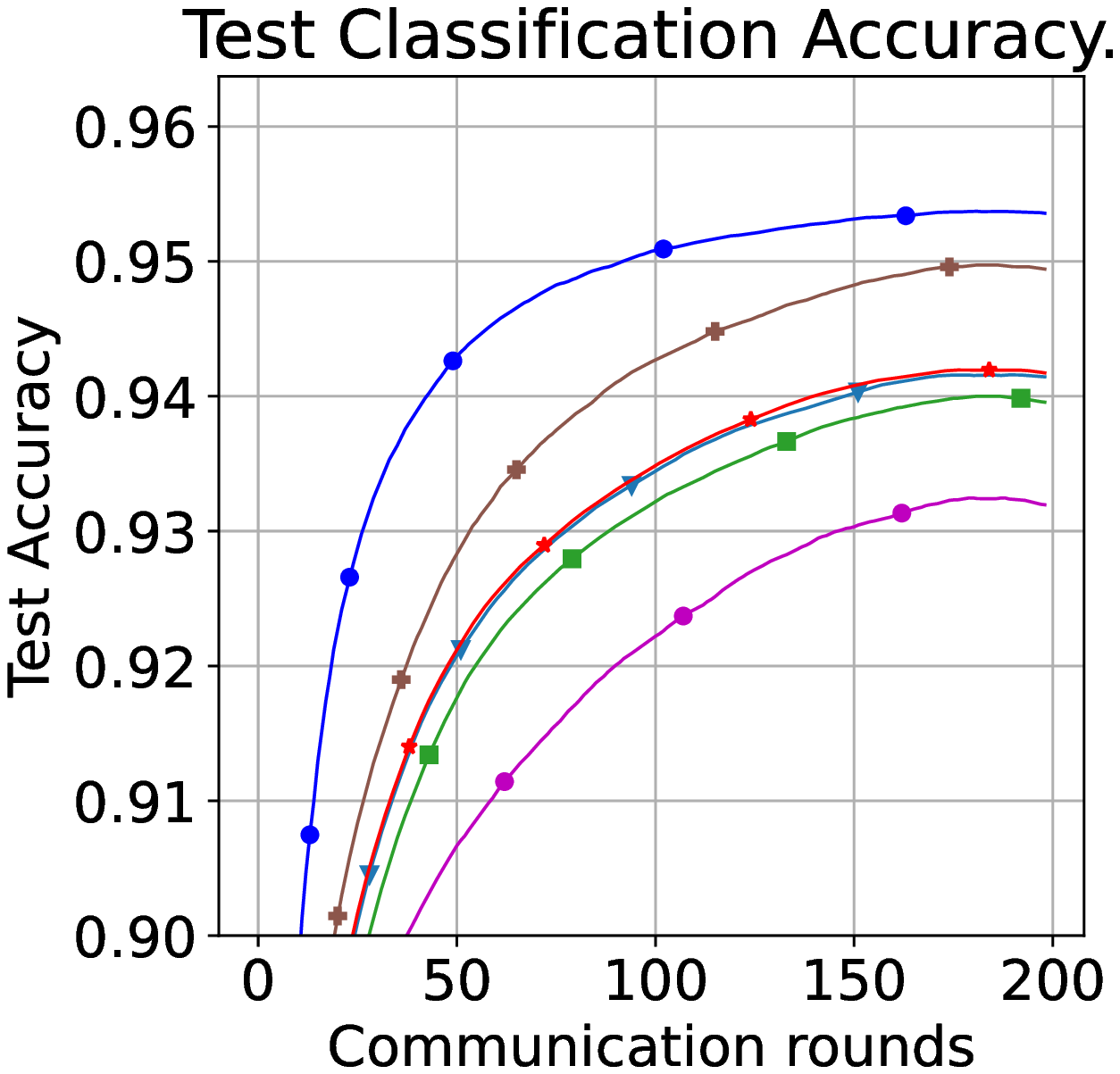}}
         \subcaption{$\alpha=10$}
         \end{subfigure} 
     \end{center}
     \vspace{-0.2in}
     \caption{Performance curves on \Mdata~ dataset, where a smaller $\alpha$ denotes larger data heterogeneity.}\label{appendix-fig:mnist-curves}
 \end{figure*}

 \begin{figure*}[hbt!]  
     \begin{center}
         \hspace{-0.1in} 
         \begin{subfigure}[b]{0.2\textwidth}
             \centerline{\includegraphics[width=\columnwidth]{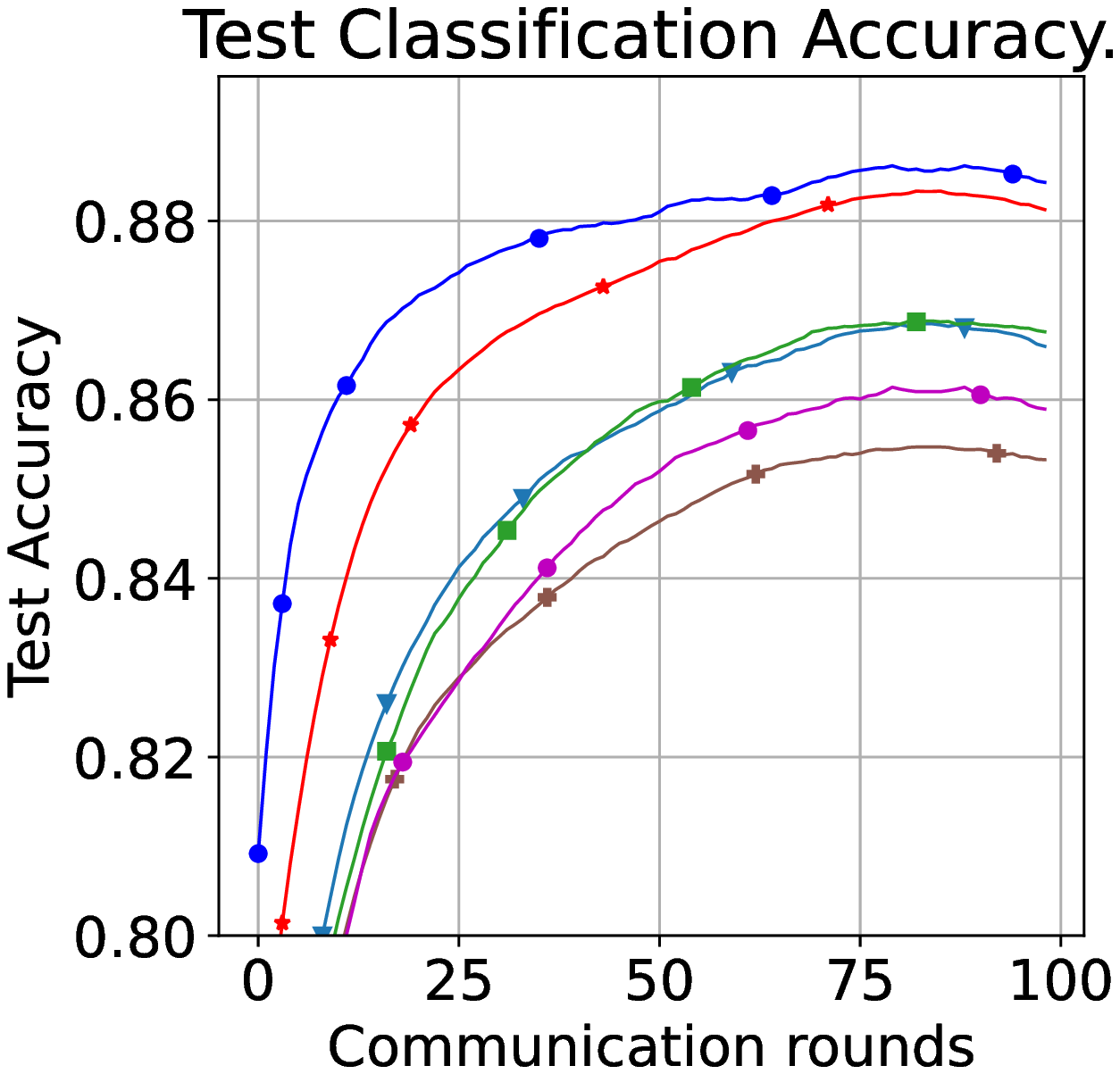}}
         \subcaption{\Cdata, $r=9/10$.}
         \end{subfigure} 
         \hspace{-0.1in}
         \begin{subfigure}[b]{0.29\textwidth}
             \centerline{\includegraphics[width=\columnwidth]{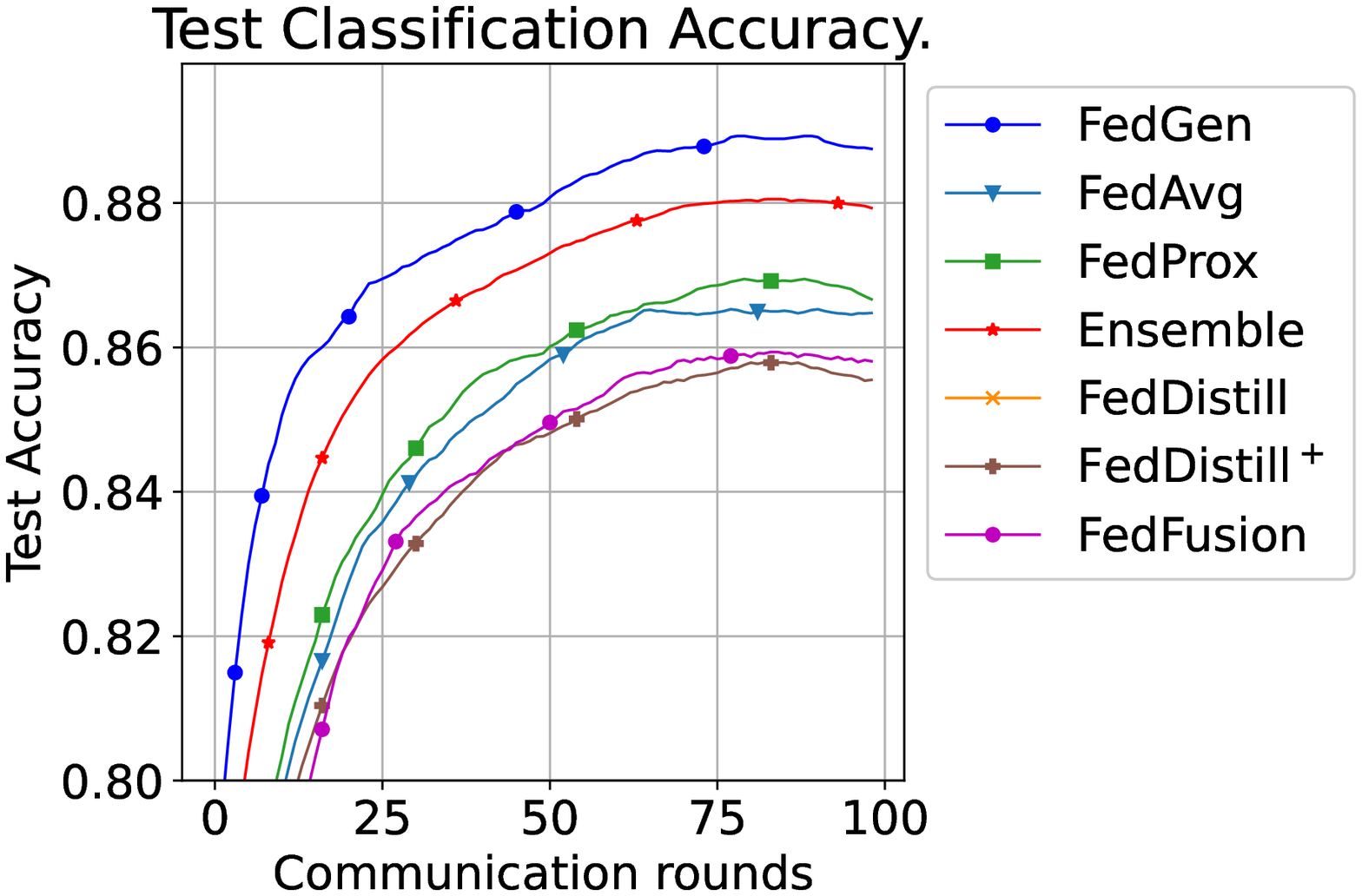}}
         \subcaption{\Cdata, $r=5/10$.}
         \end{subfigure} 
         \begin{subfigure}[b]{0.2\textwidth}
             \centerline{\includegraphics[width=\columnwidth]{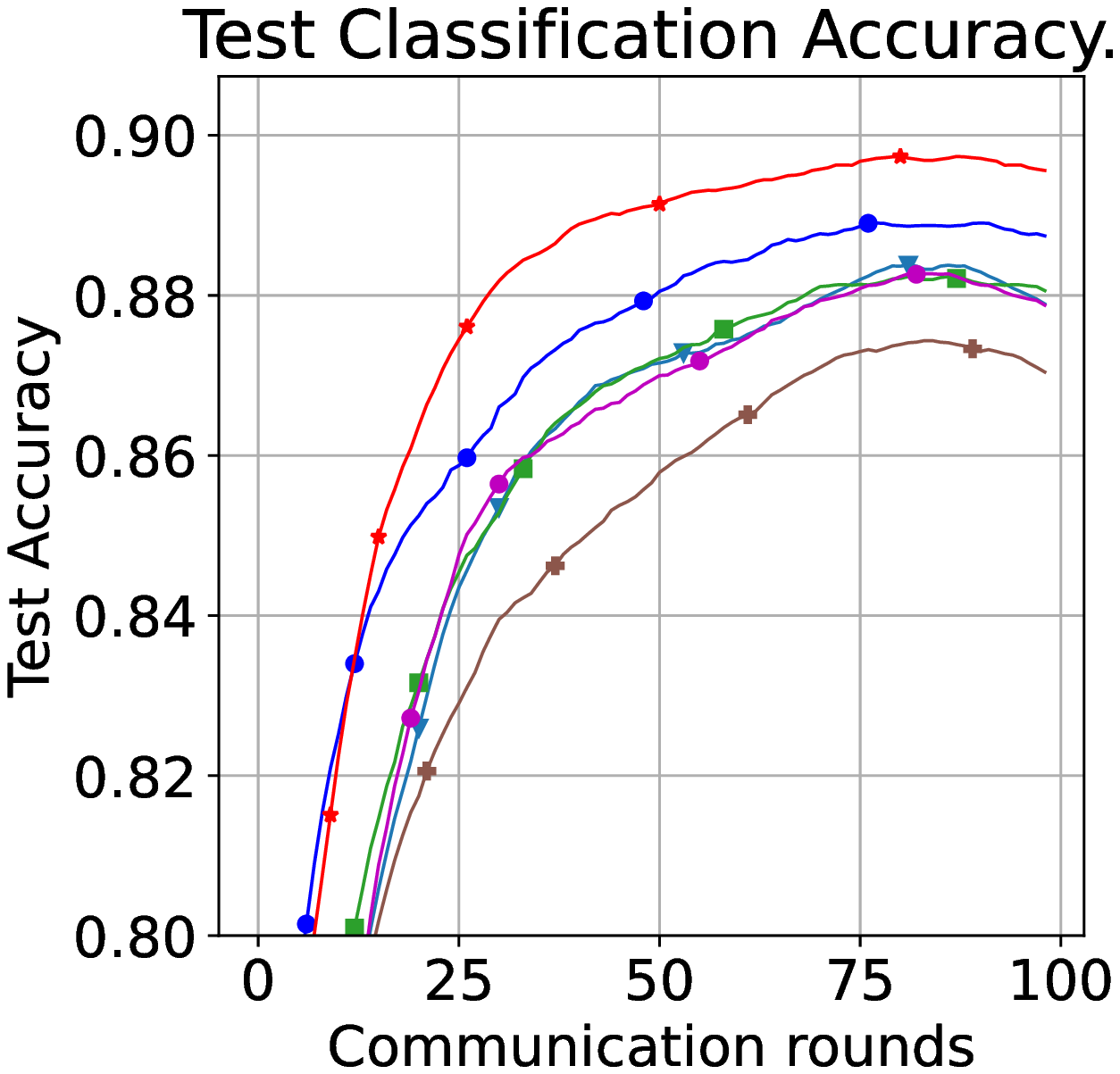}}
         \subcaption{\Cdata, $r=5/25$.}
         \end{subfigure} 
         \begin{subfigure}[b]{0.2\textwidth}
             \centerline{\includegraphics[width=\columnwidth]{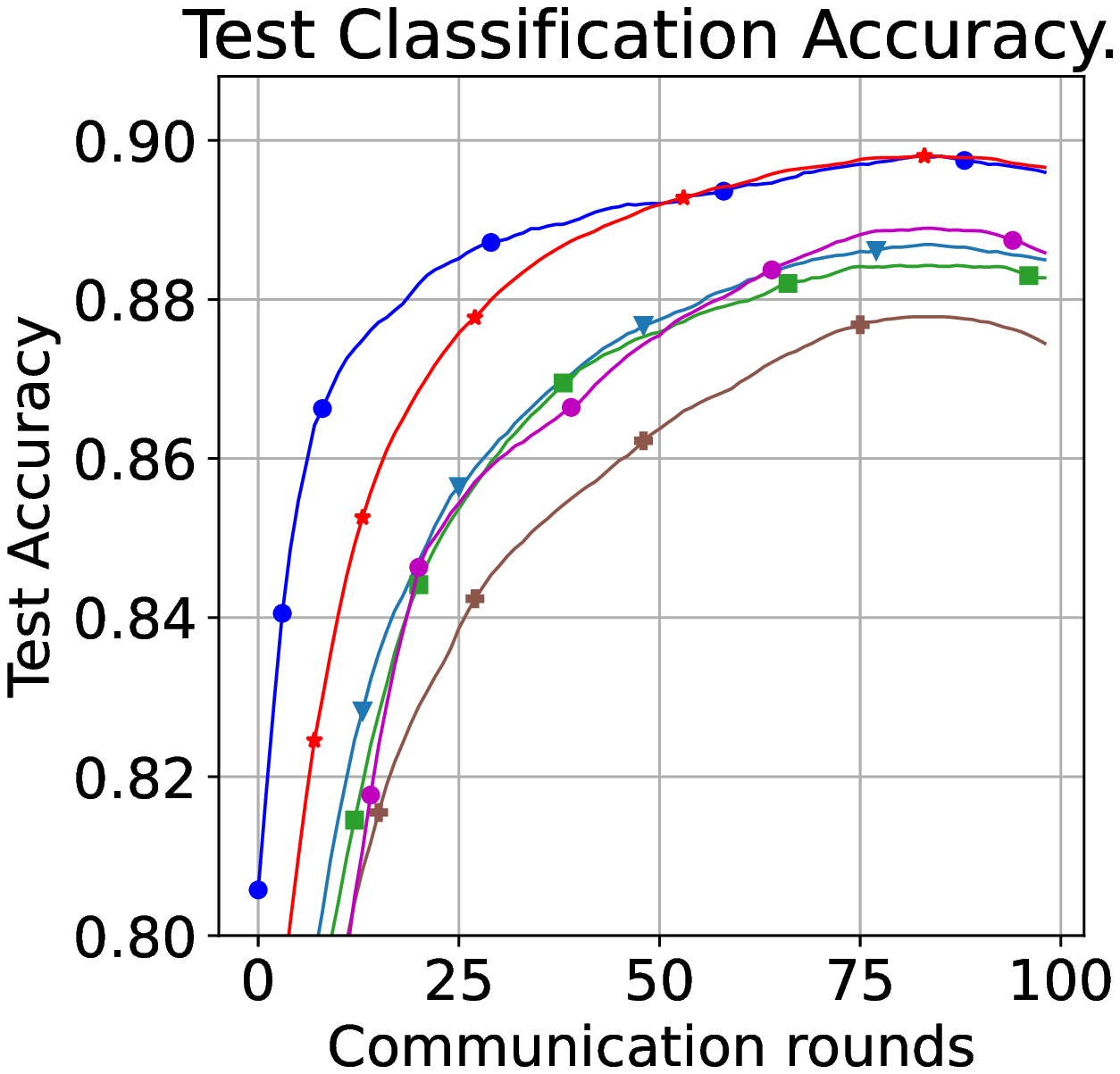}}
         \subcaption{\Cdata, $r=10/25$.}
         \end{subfigure} 
     \end{center}
     \vspace{-0.1in}
     \caption{Performance curves on \Cdata~ dataset.}\label{appendix-fig:celeb-curves}
 \end{figure*}

 \begin{figure*}[hbt!]  
         \begin{center}
             \hspace{-0.1in}
             \begin{subfigure}[b]{0.2\textwidth}
                 \centerline{\includegraphics[width=\columnwidth]{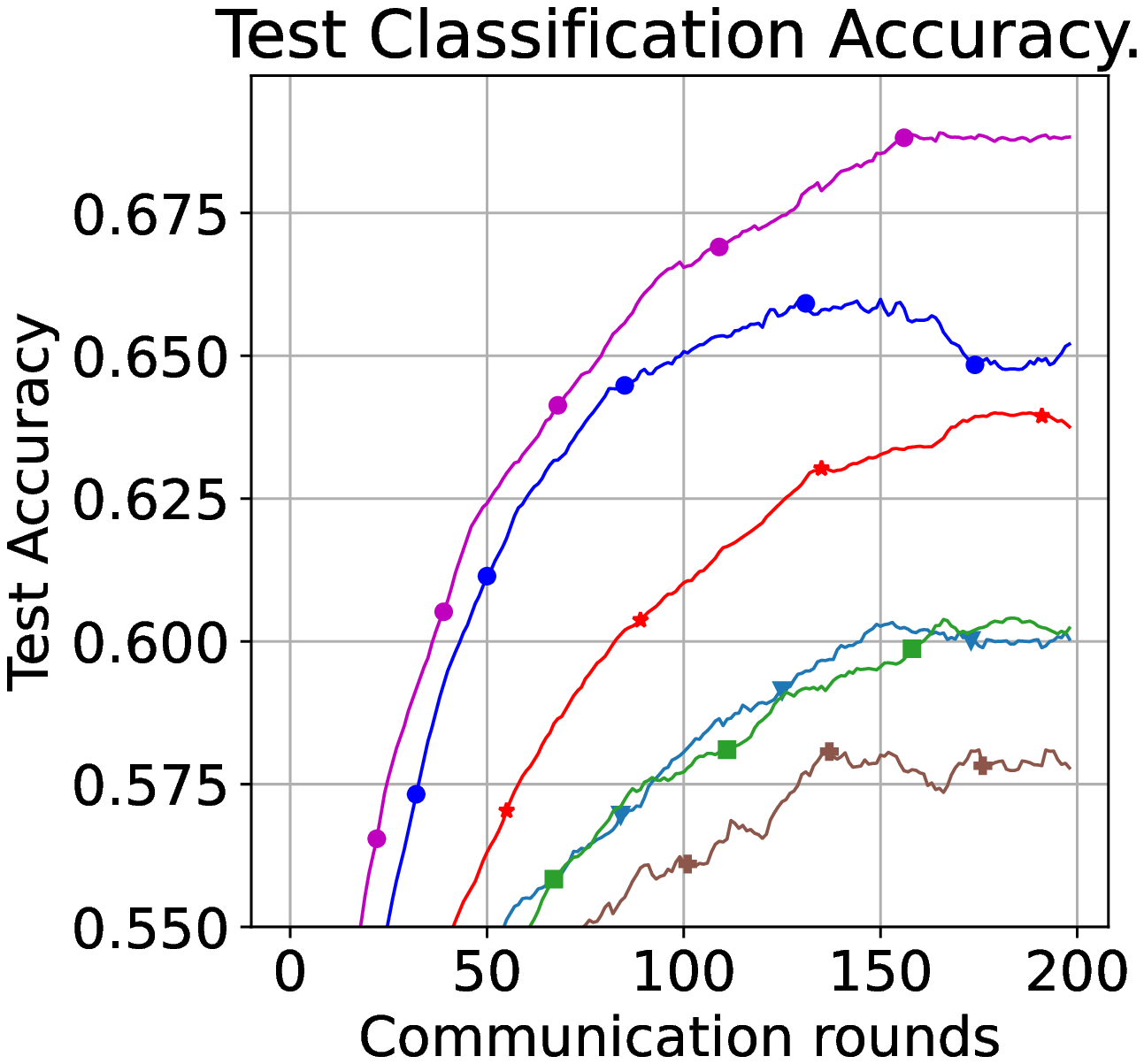}}
             \vspace{-0.1in}
             \subcaption{$\alpha=0.05$, $T=20$}
             \end{subfigure} 
             \begin{subfigure}[b]{0.29\textwidth}
                 \centerline{\includegraphics[width=\columnwidth]{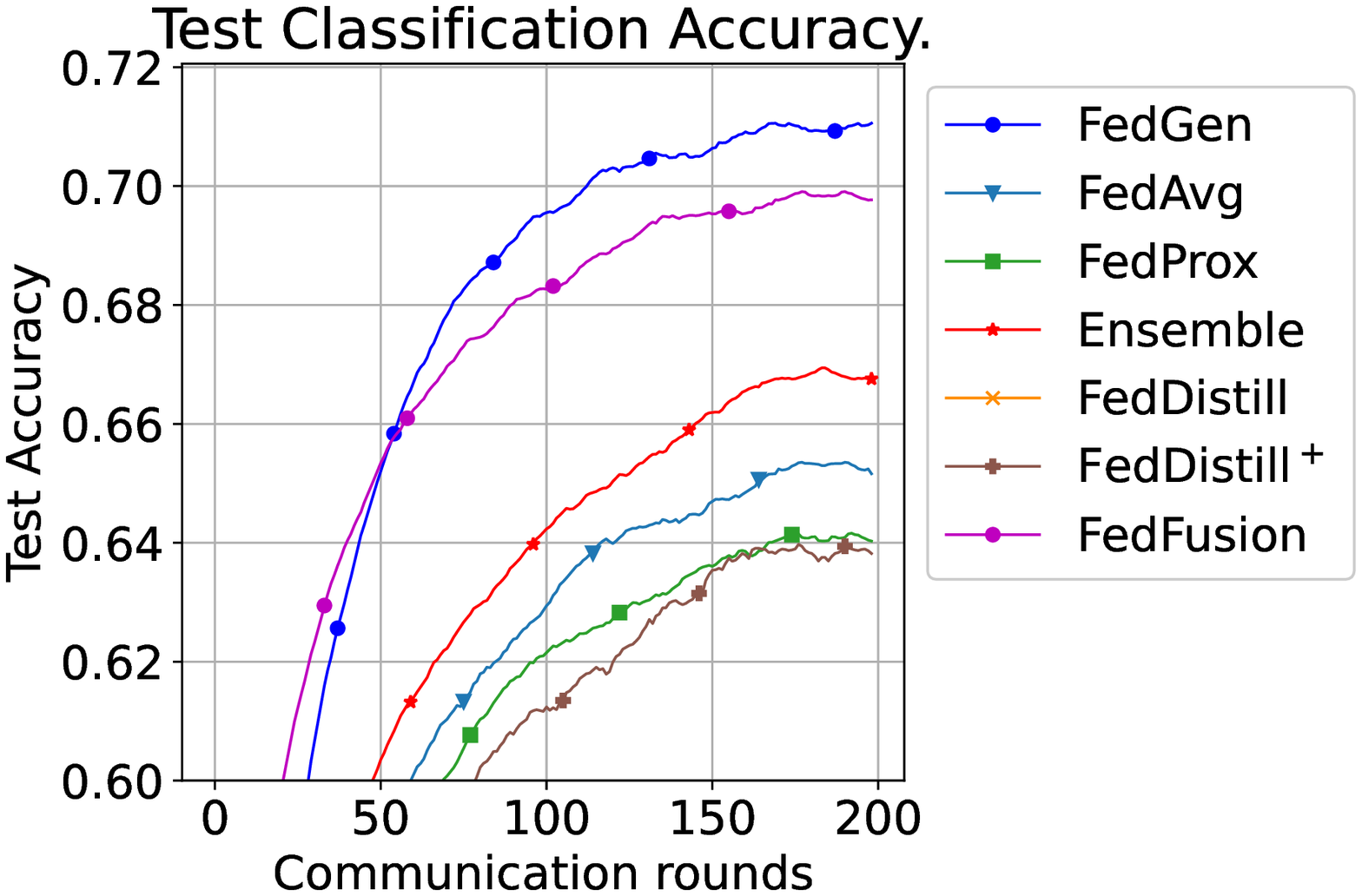}}
             \vspace{-0.1in}
             \subcaption{$\alpha=0.1$, $T=20$}
             \end{subfigure} 
             \begin{subfigure}[b]{0.2\textwidth}
                 \centerline{\includegraphics[width=\columnwidth]{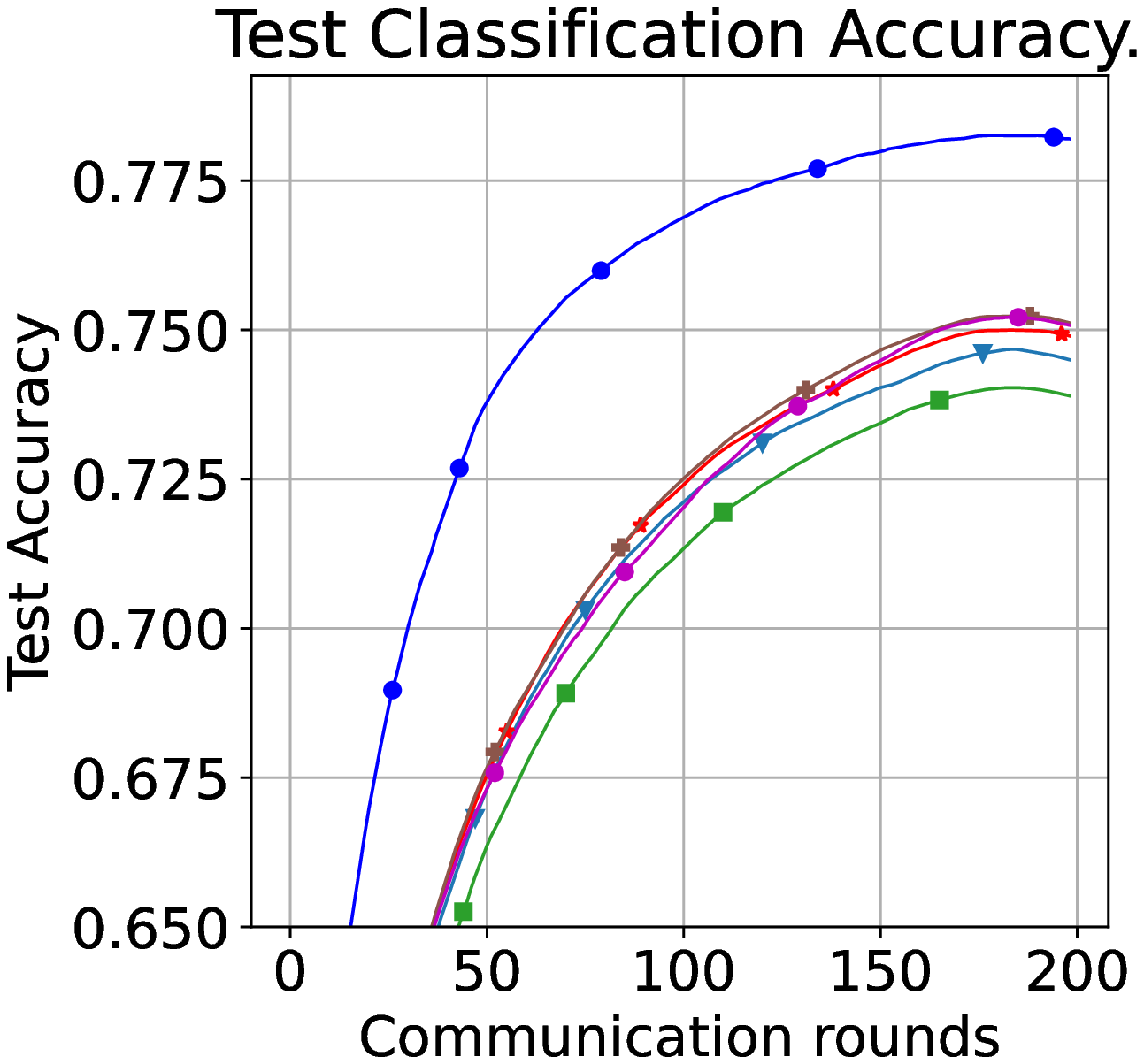}}
             \vspace{-0.1in}
             \subcaption{$\alpha=1$, $T=20$}
             \end{subfigure} 
             \begin{subfigure}[b]{0.2\textwidth}
                 \centerline{\includegraphics[width=\columnwidth]{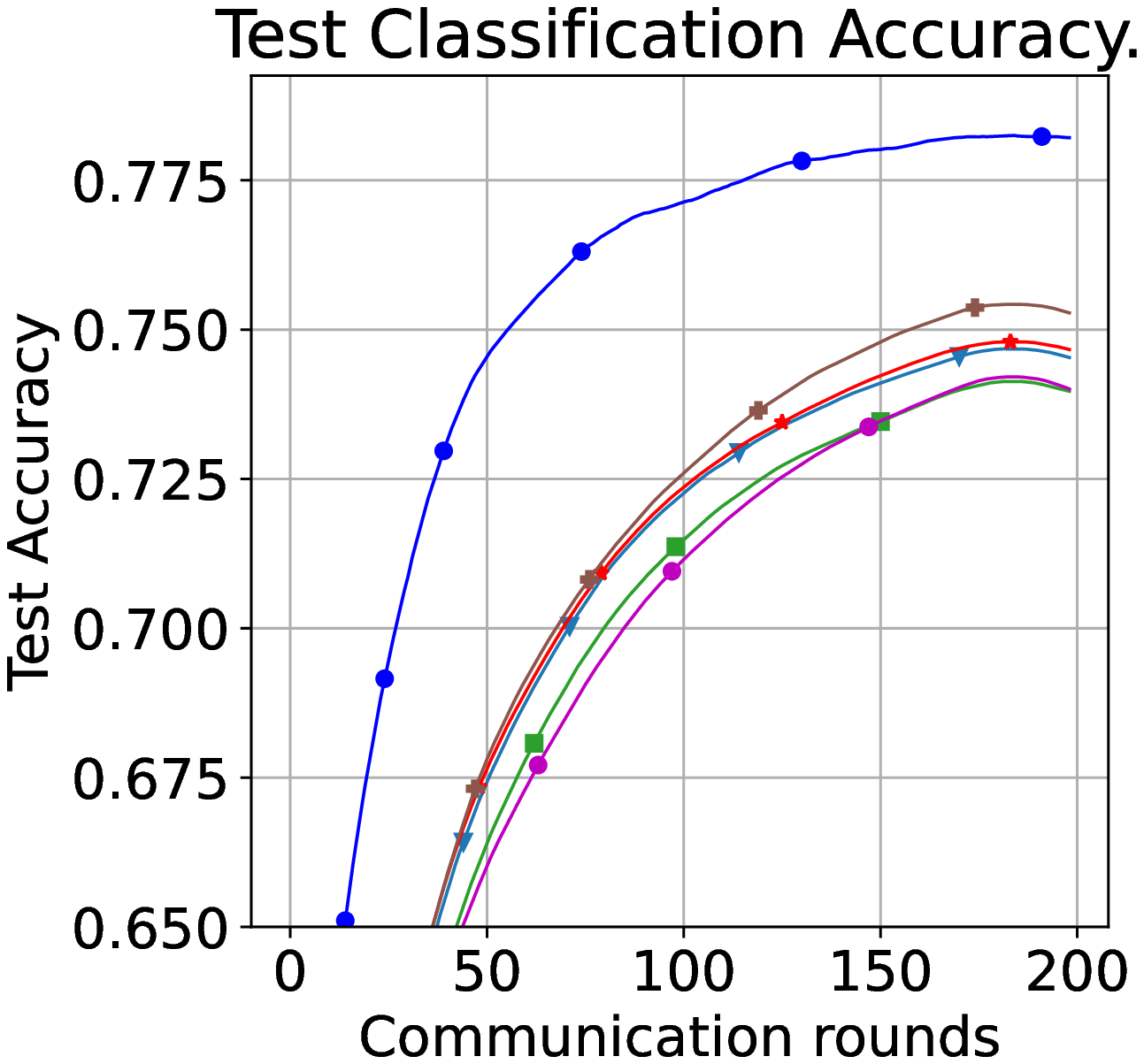}}
             \vspace{-0.1in}
             \subcaption{$\alpha=10$, $T=20$}
             \end{subfigure} 

             \hspace{-0.1in}
             \begin{subfigure}[b]{0.2\textwidth}
                 \centerline{\includegraphics[width=\columnwidth]{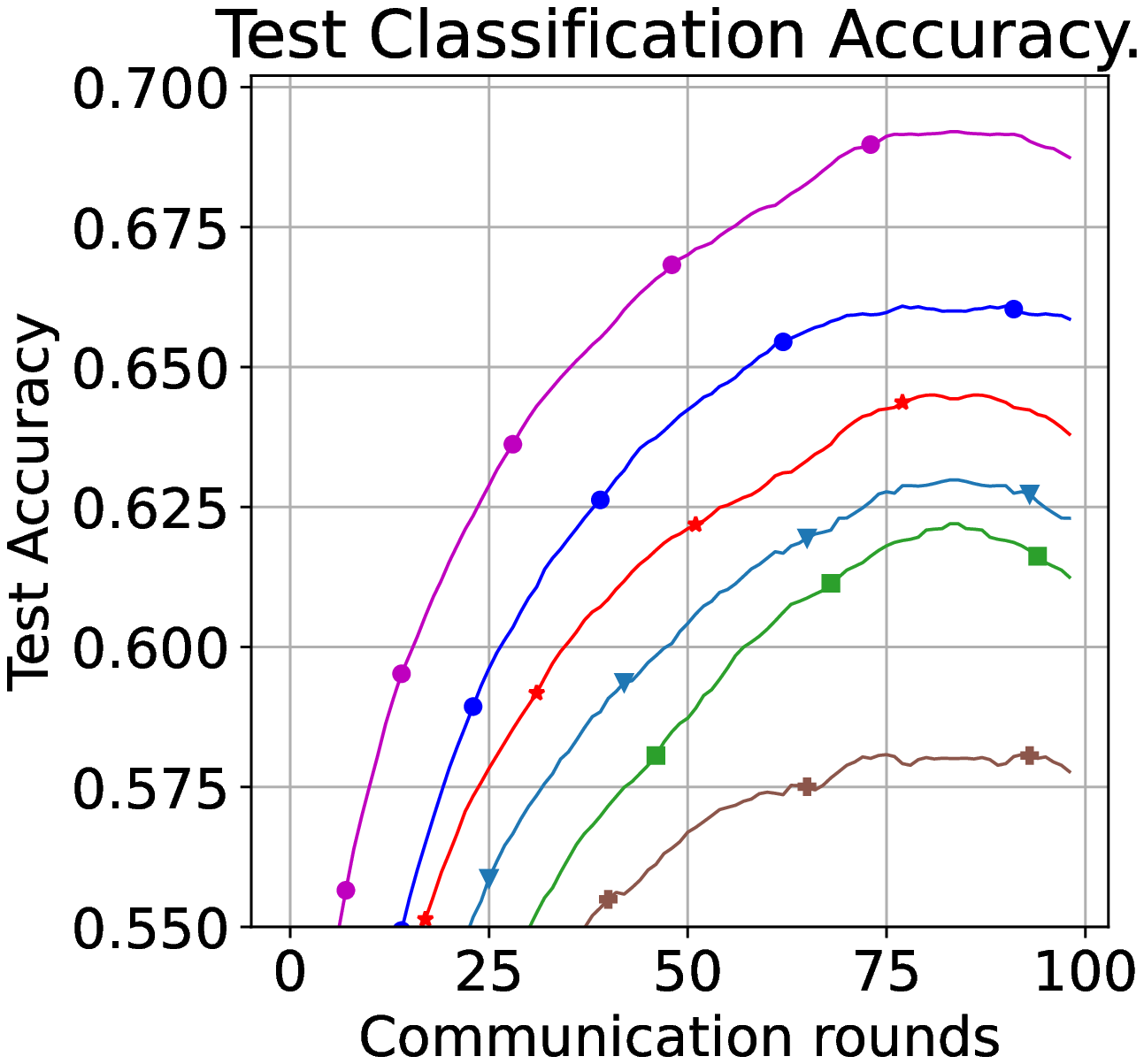}}
             \vspace{-0.1in}
             \subcaption{$\alpha=0.05$, $T=40$}
             \end{subfigure} 
             \begin{subfigure}[b]{0.29\textwidth}
                 \centerline{\includegraphics[width=\columnwidth]{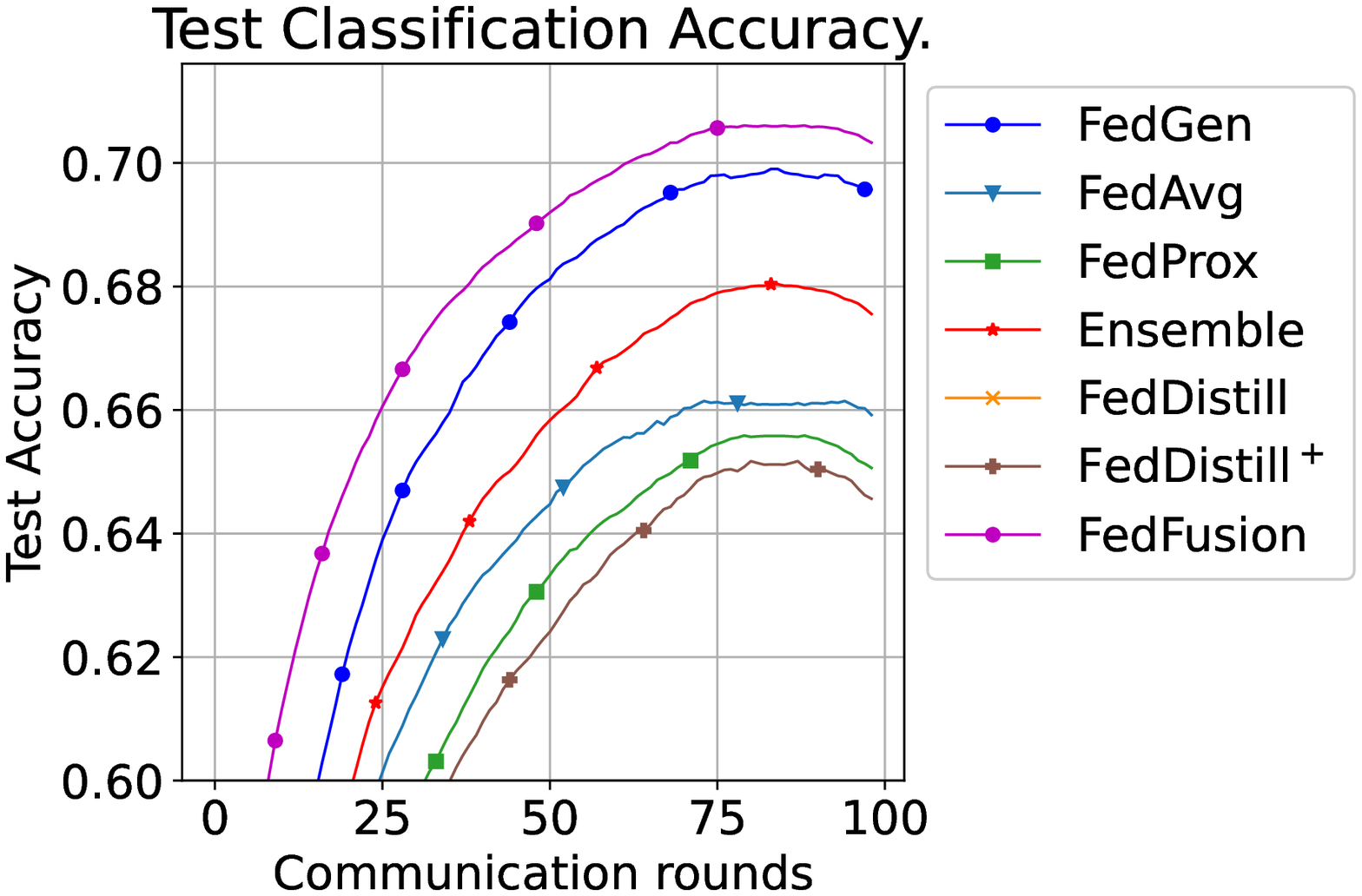}}
             \vspace{-0.1in}
             \subcaption{$\alpha=0.1$, $T=40$}
             \end{subfigure} 
             \begin{subfigure}[b]{0.2\textwidth}
                 \centerline{\includegraphics[width=\columnwidth]{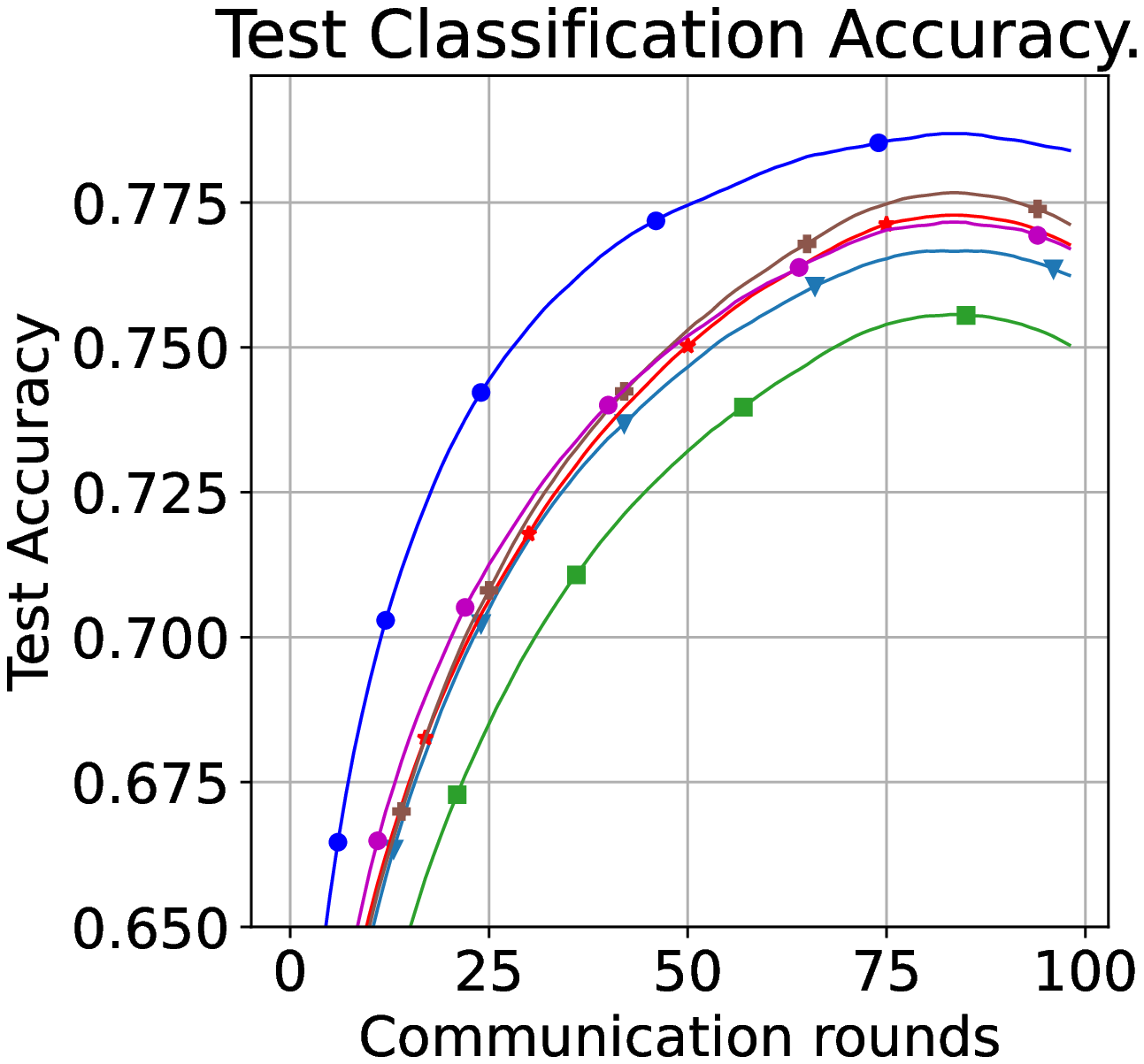}}
             \vspace{-0.1in}
             \subcaption{$\alpha=1$, $T=40$}
             \end{subfigure} 
             \begin{subfigure}[b]{0.2\textwidth}
                 \centerline{\includegraphics[width=\columnwidth]{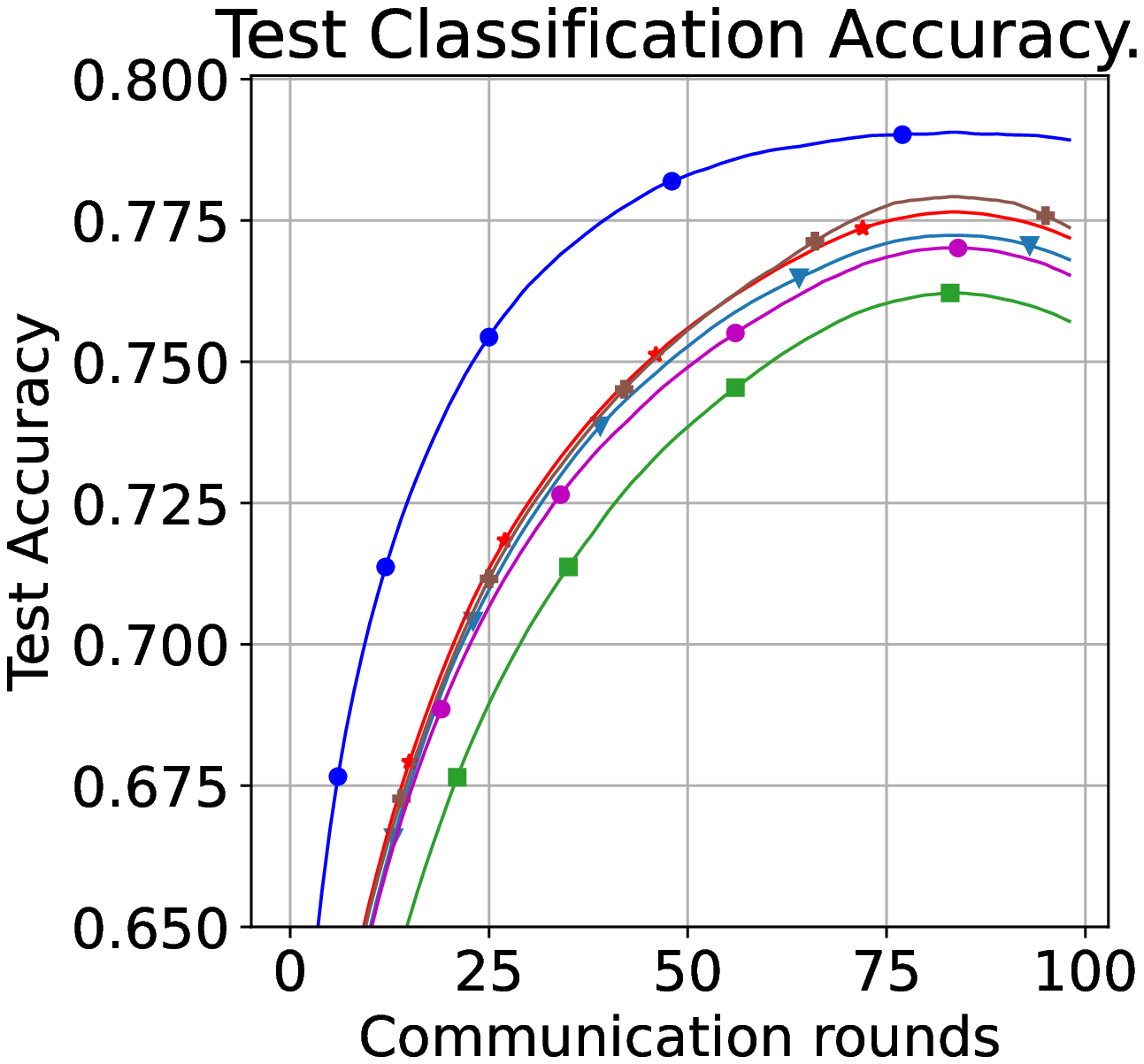}}
             \vspace{-0.1in}
             \subcaption{$\alpha=10$, $T=40$}
             \end{subfigure}  
             %
         \end{center}
         \vspace{-0.2in}
         \caption{Performance curves on \Edata~ dataset, under different data heterogeneity and communication frequencies.}\label{appendix-fig:emnist-curves}
 \end{figure*} 
 
 \begin{small}
     \begin{table*}[htb!]
         \begin{center}
             \scalebox{0.9}{
                \setlength\tabcolsep{2.5pt}
                 \begin{tabular}{llcccccccc}
                     \toprule 
     \multicolumn{1}{l}{}  &  & \multicolumn{5}{c}{\textbf{Top-1 Test Accuracy.}}  \\ \hline 
     Dataset  
     &  
     Setting 
     & \multicolumn{1}{c}{\Avg} 
     & \multicolumn{1}{c}{\Prox} 
     & \multicolumn{1}{c}{\Ensemble}
     & \multicolumn{1}{c}{\textsc{\FD}} 
     & \multicolumn{1}{c}{\textsc{\FDFL}} 
     & \multicolumn{1}{c}{\Fusion} 
     & \multicolumn{1}{c}{\approach} \\ \hline
     \multirow{4}{*}{\Mdata } 
                 &$\alpha$ = 0.05 & 87.70$\pm$2.07 & 87.49$\pm$2.05 & 88.85$\pm$0.68  & 70.56$\pm$1.24 & 86.70$\pm$2.27  & 90.02$\pm$0.96  & \textbf{91.30$\pm$0.74}\\  
                 &$\alpha$ = 0.1 & 90.16$\pm$0.59  & 90.10$\pm$0.39 & 90.78$\pm$0.39 & 64.11$\pm$ 1.36 & 90.28$\pm$0.89  & 91.11$\pm$0.43   & \textbf{93.03$\pm$0.32}  \\  
                 &$\alpha$ = 1 & 93.84$\pm$0.25  & 93.83 $\pm$ 0.29 & 93.91$\pm$0.28  & 79.88$\pm$0.66  &  94.73$\pm$0.15 & 93.37$\pm$0.40  & \textbf{95.52$\pm$0.07}  \\  
                 &$\alpha$ = 10 & 94.23$\pm$0.13  & 94.06$\pm$0.10 & 94.25$\pm$0.11 & 89.21$\pm$0.26 & 95.04$\pm$0.21 & 93.36$\pm$0.45 & \textbf{95.79$\pm$0.10} \\  
                 \midrule
     \multirow{3}{*}{\Cdata } 
     &$r = 5/10$ & 87.48$\pm$0.39  & 87.67$\pm$0.39 & 88.48$\pm$0.23 & 76.68$\pm$1.23  & 86.37$\pm$0.41  & 87.01$\pm$1.00  & \textbf{89.70$\pm$0.32}  \\  
     &$r = 5/25$ & 89.13$\pm$0.25  & 88.84$\pm$0.19 & \textbf{90.22$\pm$0.31} & 74.99$\pm$1.57  & 88.05$\pm$ 0.43 &   88.93$\pm$0.79  & 89.62$\pm$0.34  \\  
     &$r = 10/25$ & 89.12$\pm$0.20 & 89.01$\pm$0.33 & 90.08$\pm$0.24 & 75.88$\pm$1.17  & 88.14$\pm$0.37 & 89.25$\pm$0.56   & \textbf{90.29$\pm$0.47} \\  
      \midrule
     \multirow{4}{*} {\begin{tabular}[c]{@{}c@{}}\Edata, \\ $T$=20\end{tabular}} 
                 &$\alpha$ = 0.05 & 62.25$\pm$2.82  & $61.93\pm$2.31 & 64.99$\pm$0.35 & 60.49$\pm$1.27  & 61.56$\pm$2.15 & \textbf{70.40$\pm$0.79}   & {68.53$\pm$1.17}  \\  
                 &$\alpha$ = 0.1 & 66.21$\pm$2.43  & 65.29$\pm$2.94  & 67.53$\pm$1.19 & 50.32$\pm$1.39 & 66.06$\pm$3.18  &  70.94$\pm$0.76 & \textbf{72.15$\pm$0.21}  \\  
                 &$\alpha$ = 1 & 74.83$\pm$0.99  & 74.12$\pm$0.88  & 75.12$\pm$1.07  & 46.19$\pm$0.70  & 75.41$\pm$1.05  & 75.43$\pm$0.37  & \textbf{78.48$\pm$1.04} \\  
                 &$\alpha$ = 10 & 74.83$\pm$ 0.69 & 74.24$\pm$0.81  & 74.90$\pm$0.80 & 54.77$\pm$0.33  & 75.55 $\pm$0.94  & {74.36$\pm$0.40} & \textbf{78.43$\pm$0.74}  \\  
                 \midrule
     \multirow{4}{*} {\begin{tabular}[c]{@{}c@{}}\Edata, \\ $T$=40\end{tabular}} 
                 &$\alpha$ = 0.05  & 64.51$\pm$1.13 & 63.60$\pm$0.69  & 65.74$\pm$0.45  & 60.73$\pm$1.62   & 60.73$\pm$1.06  & \textbf{70.46$\pm$1.16}   & {67.64  $\pm$0.75}   \\  
                 &$\alpha$ = 0.1 & 67.71$\pm$1.31  & 66.79$\pm$0.77  & 68.96$\pm$0.66  & 49.54$\pm$1.18  & 67.01$\pm$0.38 & \textbf{71.55$\pm$0.43} & {70.90 $\pm$0.49}\\  
                 &$\alpha$ = 1 & 77.02$\pm$1.09 & 75.93 $\pm$0.95 & 77.68$\pm$0.98  & 46.72$\pm$0.73 & 78.12$\pm$0.90 & 77.58$\pm$0.37  & \textbf{78.92$\pm$ 0.73}\\  
                 &$\alpha$ = 10 & 77.52$\pm$0.66 & 76.54$\pm$0.71  & 77.92$\pm$0.62 & 54.85$\pm$0.44 & 78.37$\pm$0.76  & 77.31$\pm$0.45   & \textbf{79.29$\pm$0.53}  \\  
                 \midrule
     \bottomrule 
     \end{tabular}} 
     \end{center}
     \caption{Performance overview under different data heterogeneity settings. For \Mdata~ and \Edata, user data follows the Dirichlet distribution with hyperparameter $\alpha$, with a \textbf{smaller} $\alpha$ indicating higher heterogeneity. For \Cdata, $r$ denotes the ratio between active users and total users. $T$ denotes the local training steps (communication delay). All above experiments use batch size $B$=32.
     \label{appendix:performance-overview}}
     \end{table*}
     \end{small}

%
\comment{ 
\begin{figure*}[hbt!]  
    \begin{center}
        \begin{subfigure}[b]{0.28\textwidth}
            \centerline{\includegraphics[width=\columnwidth]{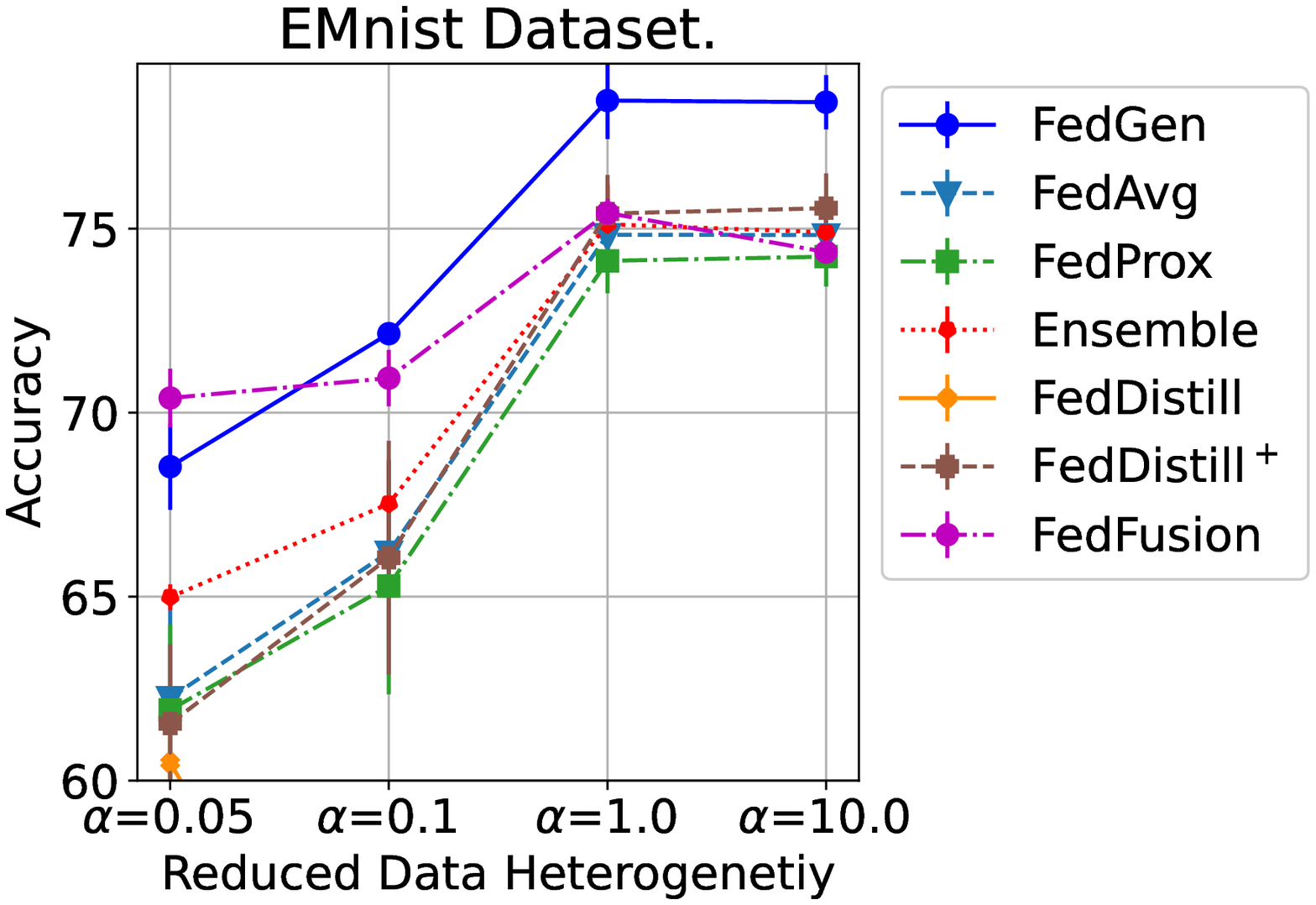}}
        \subcaption{local steps $T=20$.}
        \end{subfigure} 
        \hspace{-0.1in}
        \begin{subfigure}[b]{0.2\textwidth}
            \centerline{\includegraphics[width=\columnwidth]{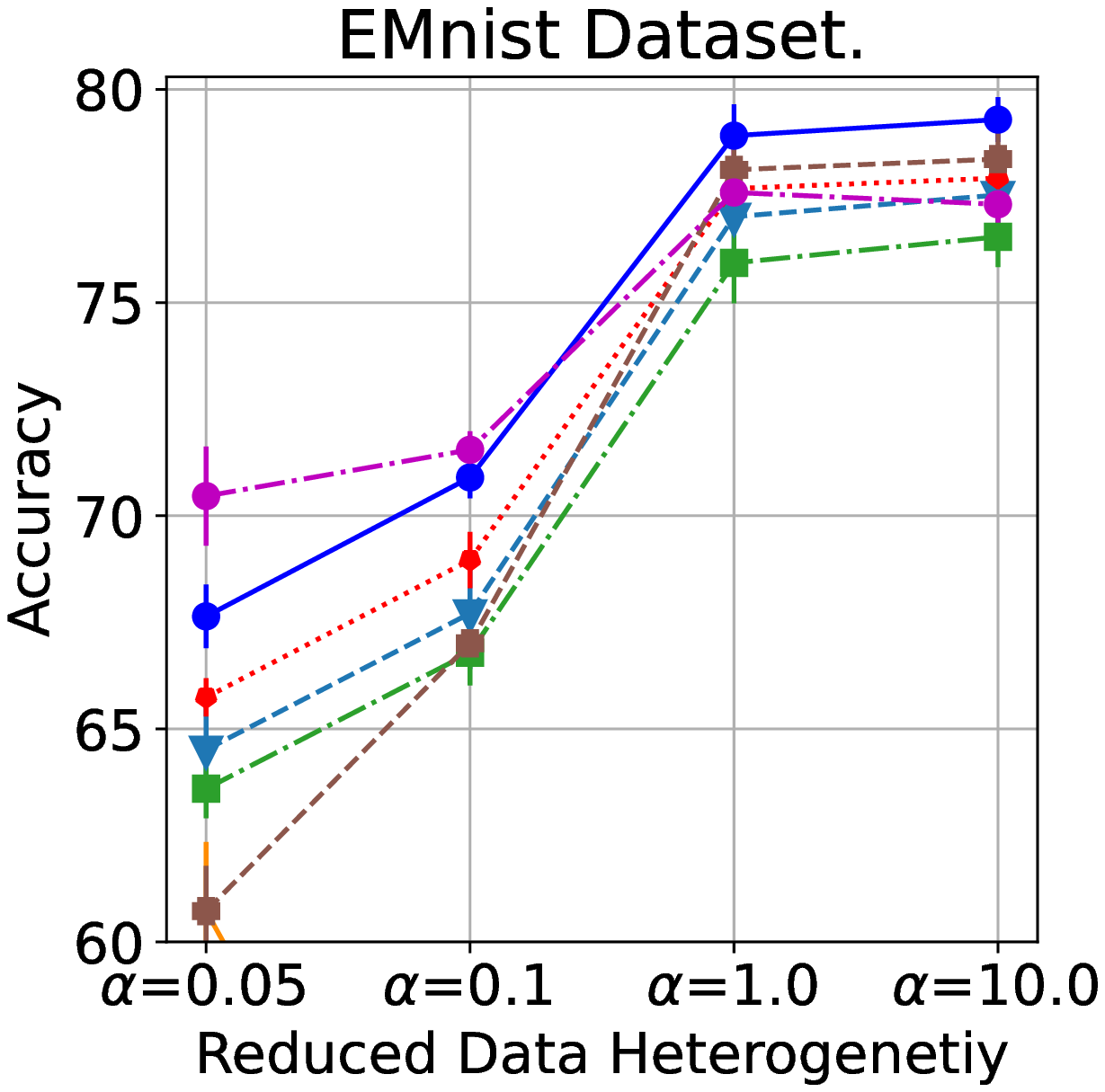}}
        \subcaption{local steps $T=40$.}
        \end{subfigure}  
          %
    \end{center}
    \vspace{-0.1in}
    \caption{Performance Overview on \Edata~ dataset, given different communication frequencies. 
    }\label{appendix-fig:emnist-overview}
\end{figure*} 
}

\comment{
\textbf{Impacts of Optimizer Selections:}
A result comparison using two different optimizers are given below:
We found that some baselines are sensitive to the choice of different optimizers.

\judycom{[RESULTS HERE.]} 

\begin{figure*}[hbt!]  
    \begin{center}
        \begin{subfigure}[b]{0.2\textwidth}
            \centerline{\includegraphics[width=\columnwidth]{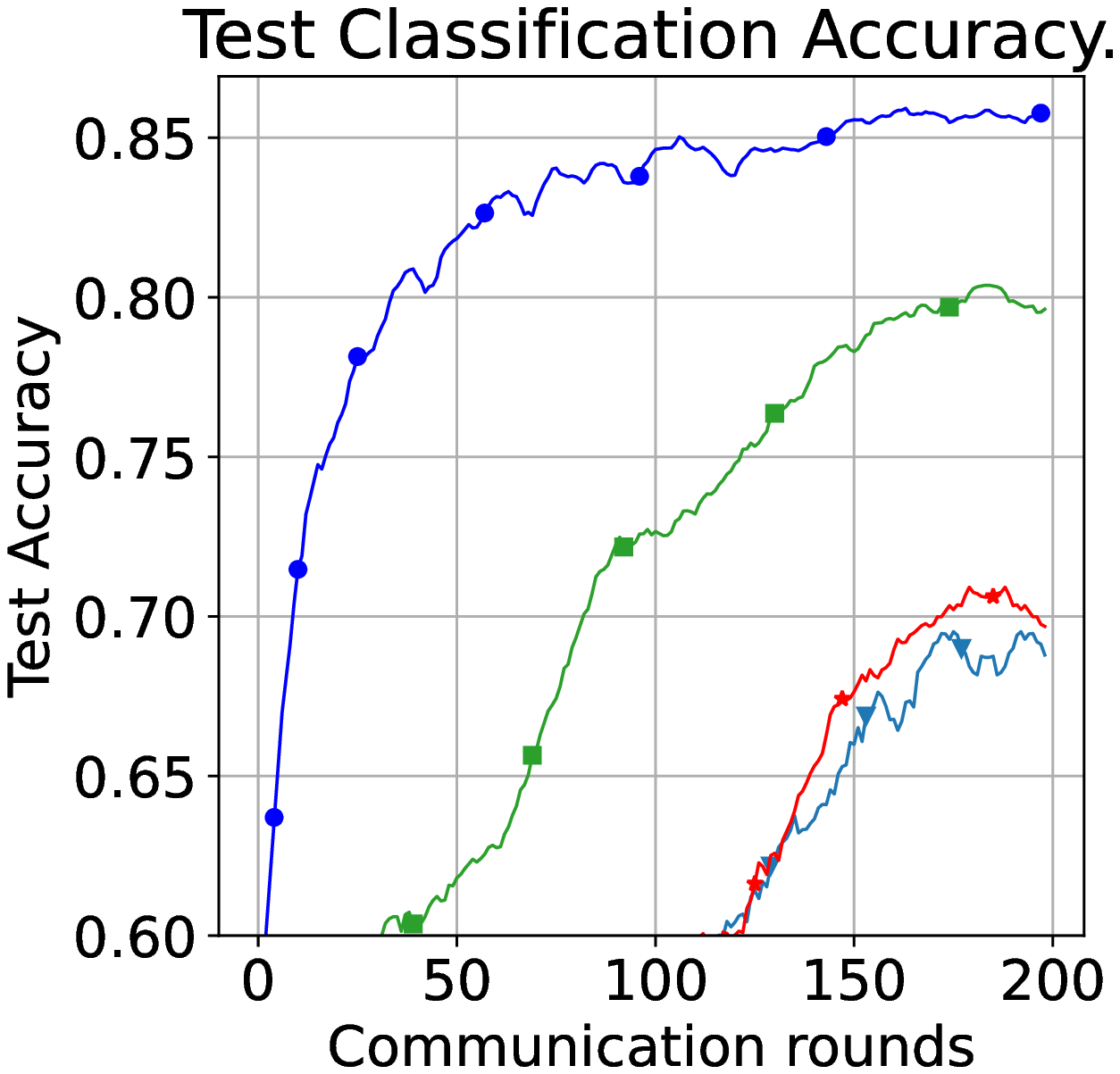}}
        \subcaption{$\alpha=0.05$, \textit{Adam} optimizer. }
        \end{subfigure} 
        \begin{subfigure}[b]{0.29\textwidth}
            \centerline{\includegraphics[width=\columnwidth]{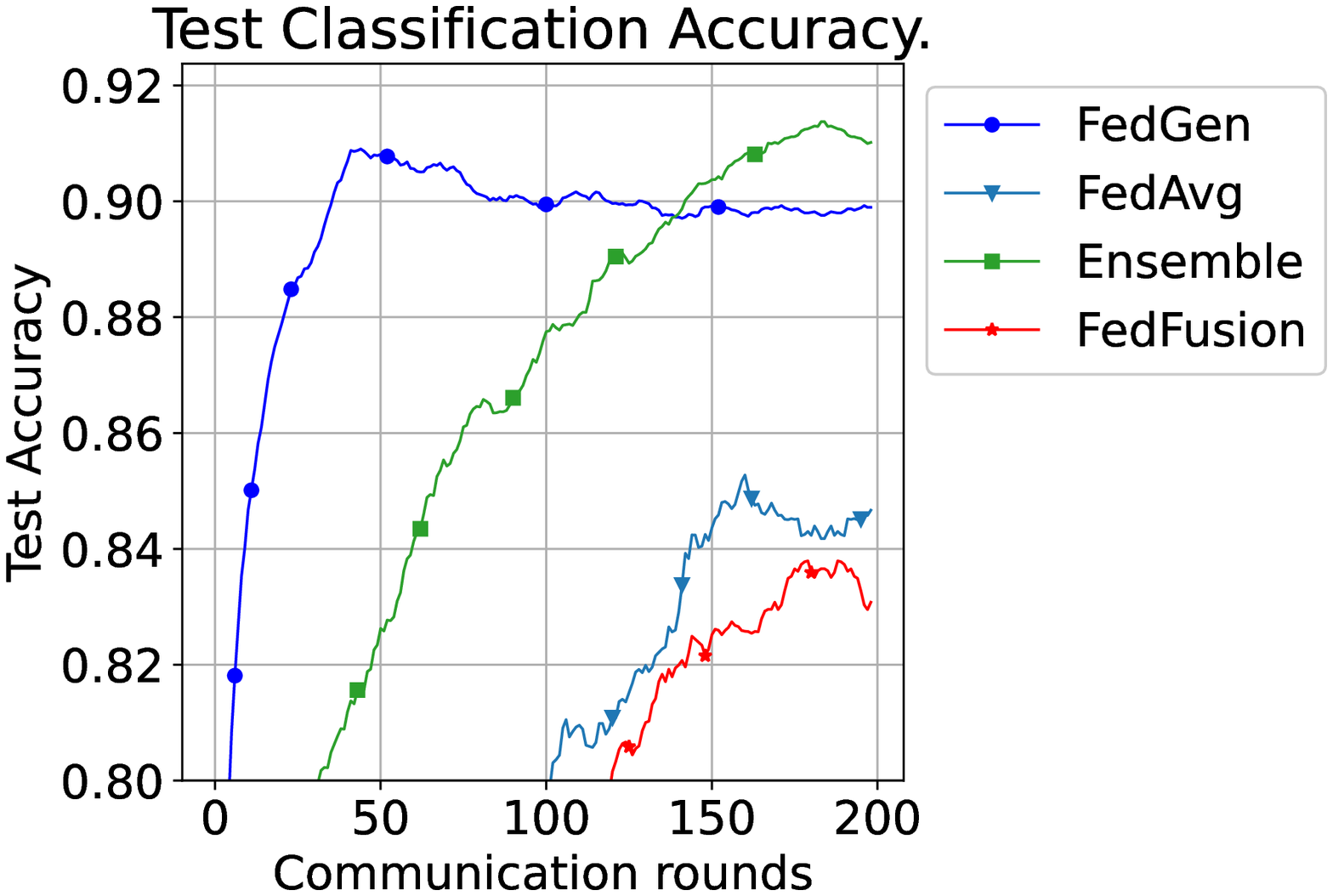}}
        \subcaption{$\alpha=0.1$, \\\textit{Adam} optimizer. }
        \end{subfigure} 
        \begin{subfigure}[b]{0.2\textwidth}
            \centerline{\includegraphics[width=\columnwidth]{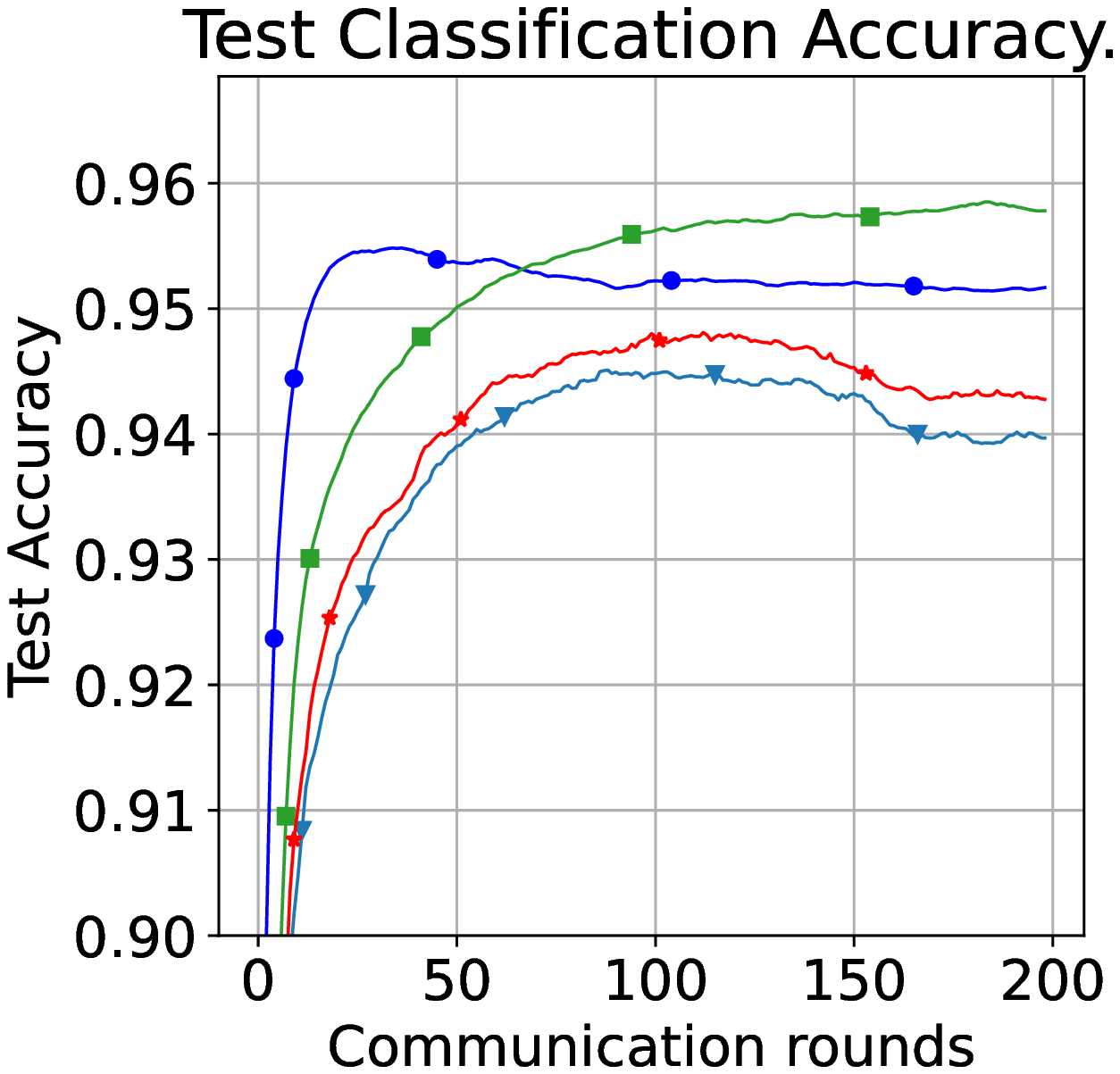}}
        \subcaption{$\alpha=1$, \\\textit{Adam} optimizer. }
        \end{subfigure} 
        \begin{subfigure}[b]{0.2\textwidth}
            \centerline{\includegraphics[width=\columnwidth]{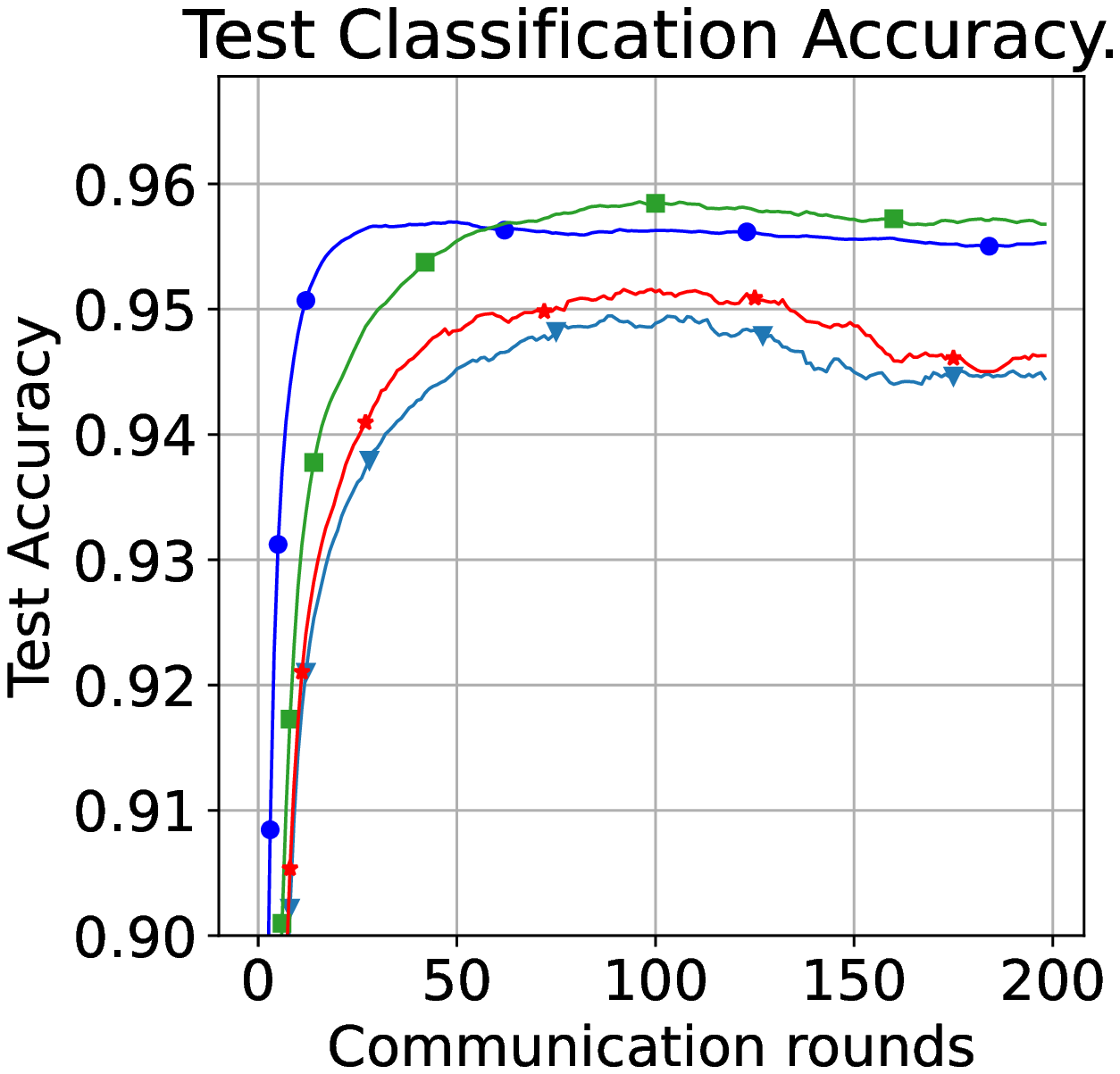}}
        \subcaption{$\alpha=10$, \\\textit{Adam} optimizer.}
        \end{subfigure} 
    \end{center}
    \vspace{-0.1in}
    \caption{Performance on \Mdata, using \textsc{Adam} optimizer for local user.}\label{appendix-fig:mnist-adam-optimizer}
\end{figure*} 
}

\comment{
\begin{figure*}[hbt!]  
    \begin{center}
        \hspace{-0.1in} 
        \begin{subfigure}[b]{0.2\textwidth}
            \centerline{\includegraphics[width=\columnwidth]{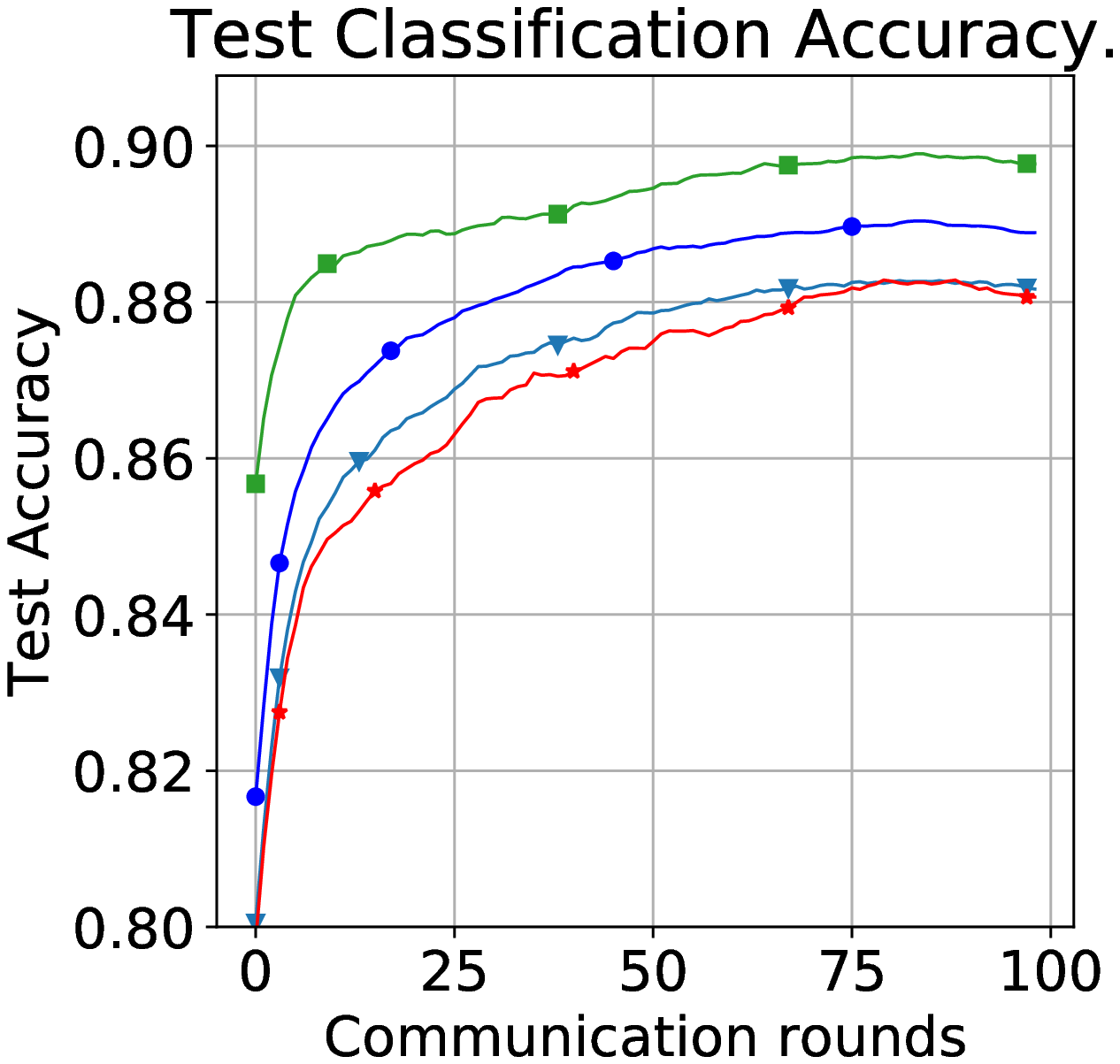}}
        \subcaption{\Cdata, $r=9/10$.}
        \end{subfigure} 
        \hspace{-0.1in}
        \begin{subfigure}[b]{0.29\textwidth}
            \centerline{\includegraphics[width=\columnwidth]{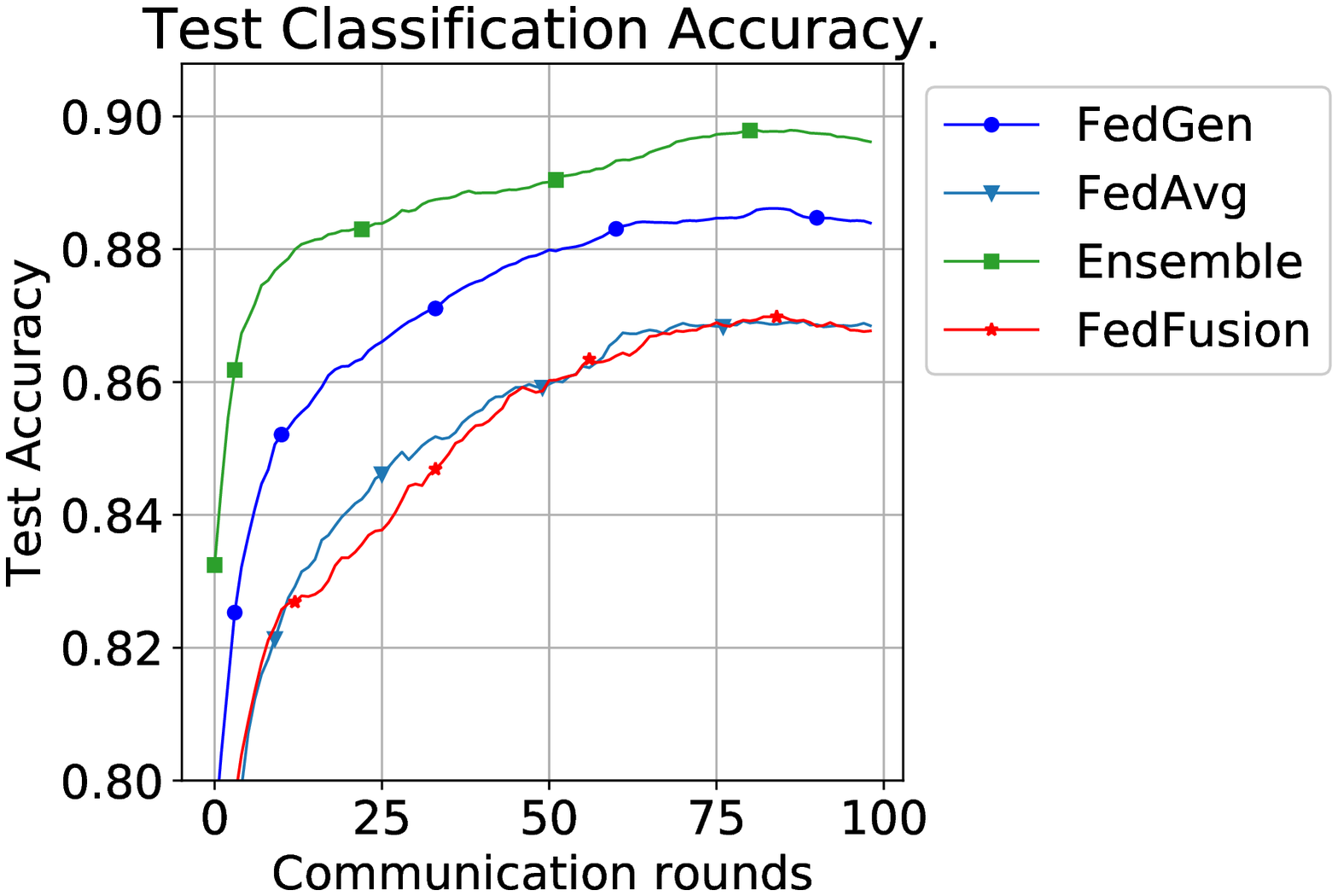}}
        \subcaption{\Cdata, $r=5/10$.}
        \end{subfigure} 
        \begin{subfigure}[b]{0.2\textwidth}
            \centerline{\includegraphics[width=\columnwidth]{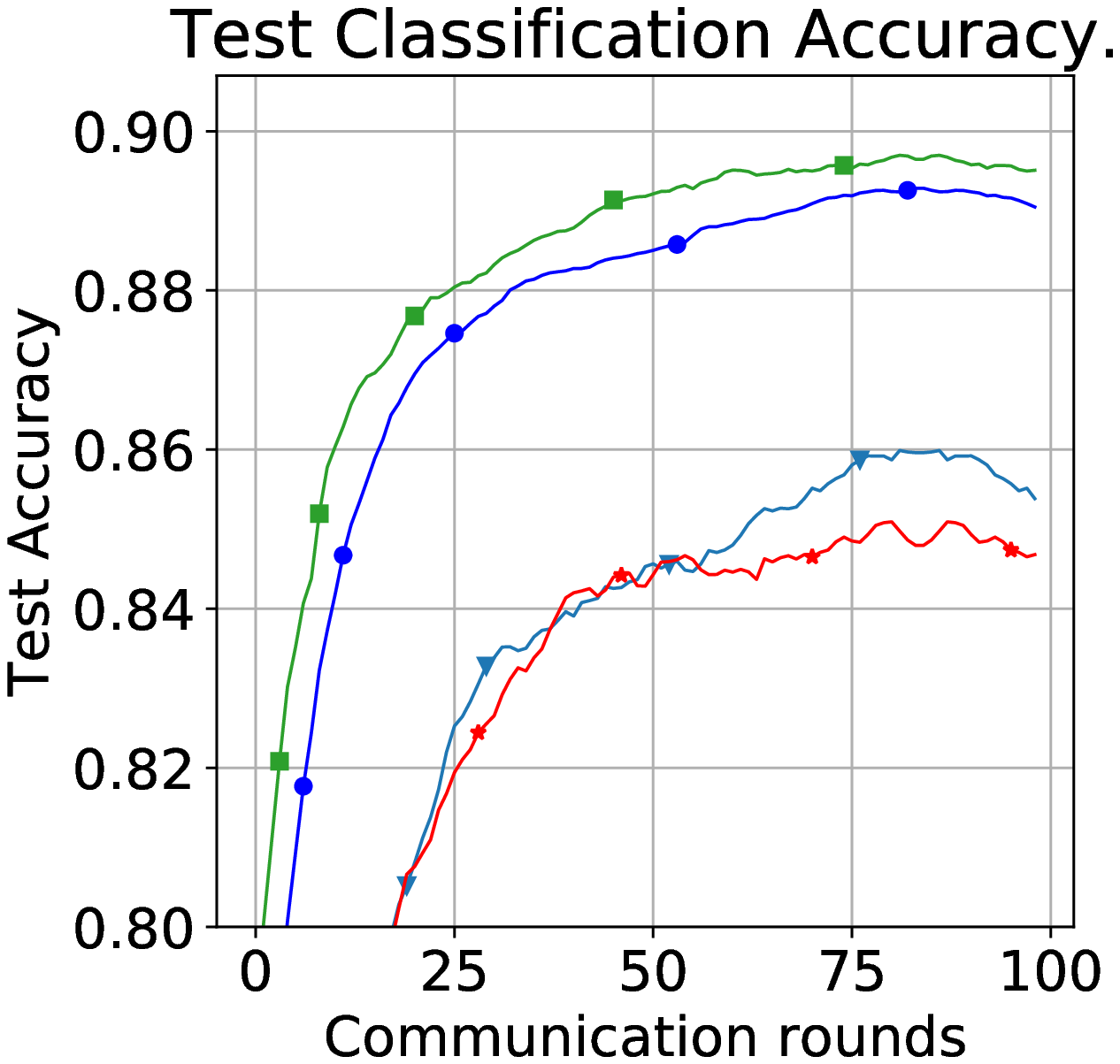}}
        \subcaption{\Cdata, $r=5/25$.}
        \end{subfigure} 
        \begin{subfigure}[b]{0.2\textwidth}
            \centerline{\includegraphics[width=\columnwidth]{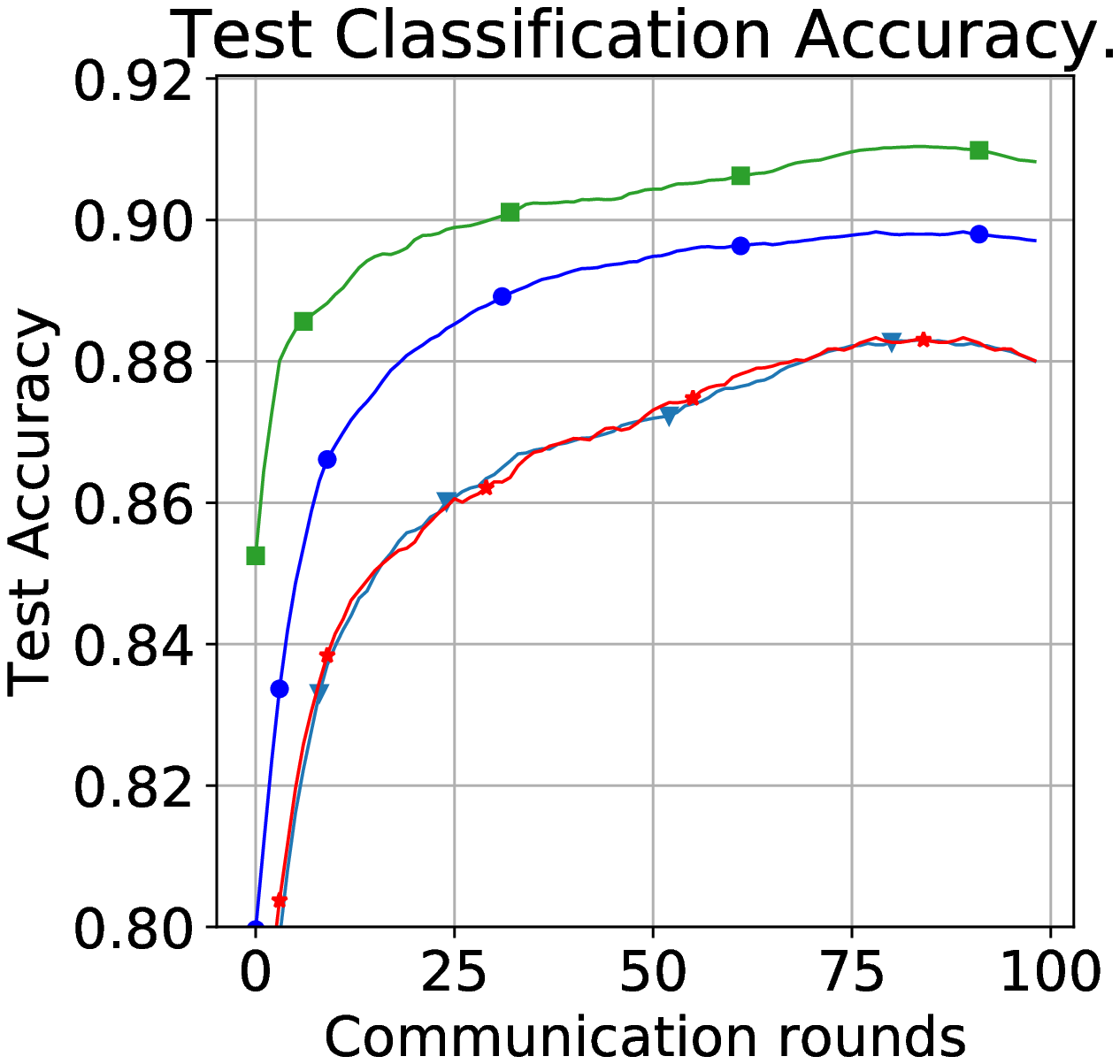}}
        \subcaption{\Cdata, $r=10/25$.}
        \end{subfigure} 
    \end{center}
    \vspace{-0.1in}
    \caption{Performance on \Cdata~ dataset using \textit{Adam} optimizer for local user.}\label{appendix-fig:celebe-adam}
\end{figure*} 
}


\comment{
\begin{table*}[htb!]
    \begin{center}
        \scalebox{0.9}{
            \begin{tabular}{llccccccc}
                \toprule 
\multicolumn{1}{l}{}  &  & \multicolumn{5}{c}{\textbf{Number of Communication Rounds to Reach Target Performance}}  \\ \hline 
Dataset 
&  
Setting 
& \multicolumn{1}{c}{\Avg} 
& \multicolumn{1}{c}{\Prox} 
%
& \multicolumn{1}{c}{\Ensemble}
%
& \multicolumn{1}{c}{\textsc{\FDFL}} 
%
& \multicolumn{1}{c}{\Fusion} 
& \multicolumn{1}{c}{\approach} \\ \hline
\multirow{4}{*}{\Mdata } 
                   & $\alpha=0.05$, T=$80\%$ & 38$\pm$6 & 37$\pm$10 & 35$\pm$8  & 64$\pm$25 & $-$ & \textbf{32$\pm$5} \\ 
                   & $\alpha=0.1$, T=$85\%$ & 42$\pm$3 & 49$\pm$7 & 39$\pm$4  & 50$\pm$10 & 78$\pm$11 & \textbf{27$\pm$2} \\ 
                   \midrule
\multirow{4}{*}{\Edata } 
                   & $\alpha=1$, T=$75\%$ & 68$\pm$11 & 92$\pm$5 & 72$\pm$6 &   63$\pm$9 & 48$\pm$12 & \textbf{41$\pm$6} \\ 
                  & $\alpha=10$, T=$75\%$ & 63$\pm$6 & 76$\pm$7 & 61$\pm$9 &  59$\pm$8 & \textbf{20$\pm$3} & 36$\pm$3 \\ 
                    \midrule
\multirow{4}{*}{\Cdata} 
                   & $r=5/10$, T=$85\%$ & 42$\pm$7 & 38$\pm$5 & 30$\pm$6 &    56$\pm$7 & 53$\pm$19 & \textbf{17$\pm$2} \\ 
                   & $r=9/10$, T=$85\%$ & 42$\pm$6 & 47$\pm$11 & 28$\pm$3  & 34$\pm$25 & 44$\pm$6 & \textbf{15$\pm$3} \\ \midrule
\bottomrule 
\end{tabular}} 
\caption{Training efficiency comparison, in which ``$-$'' indicates that the approach did not reach predefined accuracy before training stopped. $T$ denotes the target accuracy to be reached, $\alpha$ is for data heterogeneity. $r$ in \Cdata~is the ratio of active users, and $U$ denotes the total number of users.\label{appendix-table:communication-efficiency}}
\end{center}
\end{table*}
}

\comment{
\begin{figure}[hbt!]  
    \begin{center}
        \hspace{-0.1in}
        \begin{subfigure}[b]{0.2\textwidth}
            \centerline{\includegraphics[width=\columnwidth]{figs/toys/decision_boundary_global.png}}
        \subcaption{global model, with accuracy = $92.11\%$, obtained by averaging user parameters.} \label{appendix-fig:global}
        \end{subfigure} 
        \hspace{0.1in}
        \begin{subfigure}[b]{0.2\textwidth}
            \centerline{\includegraphics[width=\columnwidth]{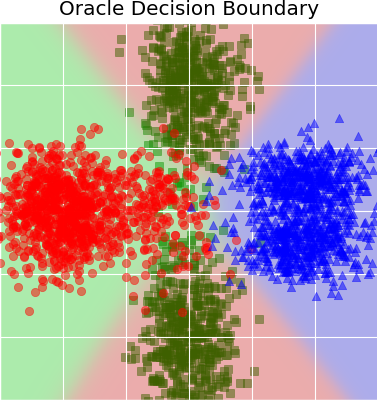}}
        \subcaption{Oracle model, with accuracy = $97.44\%$, obtained by training on all user data.} \label{appendix-fig:oracle}
        \end{subfigure} 
    \end{center}
    \vspace{-0.1in}
    \caption{Biased local models can negatively impact the global model obtained by \Avg~\Figref{appendix-fig:global}, compared with an ideal model trained using all global dataset~\Figref{appendix-fig:oracle}.}\label{appendix-fig:global-model}
\end{figure}
}

\bibliographystyle{icml2021}
\end{document}